\newtheorem{theorem}{Theorem}
\newtheorem{proposition}[theorem]{Proposition}
\newtheorem{lemma}[theorem]{Lemma}
\newtheorem{corollary}[theorem]{Corollary}
\theoremstyle{definition}
\newtheorem{definition}[theorem]{Definition}
\newtheorem{experiment}{Experiment}
\newtheorem{example}[theorem]{Example}
\newtheorem{remark}[theorem]{Remark}
\theoremstyle{remark}
\newtheorem{claim}{Claim}
\newcommand{\RR}{\mathbb{R}}
\newcommand{\rk}{\mathrm{rk}}
\newcommand{\W}{\overline{W}}
\newcommand{\inv}[2]{#1^{-1}(#2)}
\newcommand{\eg}{{\em e.g.}}
\newcommand{\ie}{{\em i.e.}}
\title{Pure and Spurious Critical Points:\\ a Geometric Study of Linear
Networks}
\author{%
Matthew Trager\thanks{Equal contribution.} \\
  New York University\\
  \And
  Kathl\'en Kohn\footnotemark[1]\\
KTH Stockholm\\
  \And
  Joan Bruna\\
  New York University\\
  }
\begin{document}

\maketitle

\begin{abstract}
  The critical locus of the loss function of a neural network is determined by
  the geometry of the functional space and by the parameterization of
  this space by the network's weights. We introduce a natural
  distinction between \emph{pure} critical points, which only depend on
  the functional space, and \emph{spurious} critical points, which arise from
  the parameterization. We apply this perspective to revisit and extend the
  literature on the loss function of linear neural networks. For this type of
  network, the functional space is either the set of all linear maps from input to output space, or a
  \emph{determinantal variety}, \ie, a set of linear maps with bounded rank. We use geometric properties of determinantal varieties to derive new results on the landscape of linear networks with different loss functions and different parameterizations. {Our analysis clearly illustrates that the absence of ``bad'' local minima in the loss landscape of linear networks is due to two distinct phenomena that apply in different settings: it is true for arbitrary smooth convex losses in the case of architectures that can express all linear maps (``filling architectures'') but it holds only for the quadratic loss when the functional space is a determinantal variety (``non-filling architectures''). Without any assumption on the architecture, smooth convex losses may lead to landscapes with many bad minima.}
\end{abstract}

\section{Introduction}

A fundamental goal in the theory of deep learning is to explain why the optimization of the non-convex loss function of a neural network does not seem to be affected by the presence of non-global local
minima. Many papers have addressed this issue by studying the~\emph{landscape} of the
loss function~ \citep{baldi1989,choromanska2015,kawaguchi2016,venturi2018a}.
These papers have shown that, in certain situations, any local minimum
for the loss is in fact always a global minimum. Unfortunately, it is also known that this property does not apply in more general realistic settings~\citep{yun2018small,venturi2018a}. More recently, researchers have
begun to search for explanations based on the~\emph{dynamics} of optimization. For example, in certain limit situations, the gradient flow of
over-parameterized networks will avoid local
minimizers~\citep{chizat2018,mei2018}. We believe however that the study of the
\emph{static} properties of the loss function (the structure of its critical
locus) is not settled. Even in the case of \emph{linear networks}, the
existing literature paints a purely analytical picture of the loss, and
provides no sort of explanation as to ``{why}'' such architectures exhibit no
bad local minima. A complete understanding of the critical locus should be a
prerequisite for investigating the dynamics of the optimization.

The goal of this paper is to revisit the loss function of neural networks from
a geometric perspective, focusing on the relationship between the functional space of
the network and its parameterization. In particular, we view the loss as a
composition 
\[
\{\mbox{parameter space}\} \overset{\mu}{\rightarrow} \{\mbox{functional space}\} \overset{\ell}{\rightarrow}
\RR.
\]
In this setting, the function $\ell$ is almost always convex, however the composition $L = \ell\circ \mu$ is not. Critical points for $L$ can in fact arise for two distinct reasons: either because we are applying $\ell$ to a non-convex functional space, or because the parameterizing map $\mu$ is locally degenerate. We distinguish these two types of critical points by
referring to them, respectively, as \emph{pure} and \emph{spurious}. Intuitively, pure critical points actually reflect
the geometry of the functional space associated with the network, while
spurious critical points arise as ``artifacts'' from the parameterization.
After defining pure and critical points for arbitrary networks, we investigate
in detail the classification of critical points in the case of linear
networks. The functional space for such networks can be identified with a
family of linear maps, and we can describe its geometry using algebraic tools.
Many of our statements rely on a careful analysis of the differential of the matrix multiplication map. 
In particular, we prove that \emph{non-global local minima are necessarily pure} critical points for convex losses, which means that many properties of the loss landscape can be read from the functional space. On the other hand, we emphasize that even for linear networks it is possible to find many smooth convex losses with non-global local minima. This happens when the functional space is a \emph{determinantal
variety}, \ie, a (non-smooth and non-convex) family of matrices with bounded
rank. In this setting, the absence of non-global minima actually holds in the particular case of the quadratic loss, because of very special geometric properties of determinantal varieties that we discuss.


\paragraph{Related Work.} \cite{baldi1989} first proved the absence of non-global (``bad'') local minima for linear networks with one hidden layer (autoencoders). Their result was generalized to the case of deep linear networks by \cite{kawaguchi2016}. Many papers have since then studied the loss landscape of linear networks under different assumptions~\citep{hardt2016,yun2017global,zhou2017, laurent2017, lu2017, zhang2019depth}. In particular, \cite{laurent2017} showed that linear networks with ``no bottlenecks'' have no bad local minima for arbitrary smooth loss functions. \cite{lu2017} and \cite{zhang2019depth} argued that ``depth does not create local minima'', meaning that the absence of local minima of deep linear networks is implied by the same property of shallow linear networks. 
Our study of pure and spurious critical points can be used as a framework for explaining all these results in a unified way. {The \emph{optimization dynamics} of linear networks are also an active area of research~\citep{arora2019implicit,arora2018convergence}, and our analysis of the landscape in function space sets the stage for studying gradient dynamics on determinantal varieties, as in~\cite{bah2019}.} Our work is also closely related to objects of study in applied algebraic geometry, particularly \emph{determinantal varieties} and \emph{ED discriminants}~\citep{draisma2013,ottaviani2013}. {Finally, we mention other recent works that study neural networks using algebraic-geometric tools~\citep{mehta_loss_2018,kileel2019expressive, jaffali2019}.}

\begin{table}[t]
\footnotesize
\label{tab:cases}
\caption{Bad local minima in loss landscapes for linear networks}
\vspace{.2cm}

\begin{textblock}{2.1}(8.2,0.05)
\begin{mdframed}[style=fancy]
\centering
\begin{footnotesize}
convex optimization \\ over vector space
\end{footnotesize}
\end{mdframed}
\end{textblock}

\begin{textblock}{3.7}(7.7,0.28)
$\boldsymbol{\longleftarrow}$
\end{textblock}

\begin{textblock}{2.4}(2.7,0.8)
\begin{mdframed}[style=fancy]
\centering
\begin{footnotesize}
special property of\\  determinantal varieties
\end{footnotesize}
\end{mdframed}
\end{textblock}

\begin{textblock}{3.7}(3.8,0.66)
$\boldsymbol{\uparrow}$
\end{textblock}

\hspace{2.3cm}
\begin{tabular}{r|c|c}
& \textbf{quadratic loss} & \textbf{other smooth convex loss} \\
\hline
\textbf{filling} & no bad minima & no bad minima \\
\hline
\textbf{non-filling} & no bad minima & \textcolor{orange}{\textbf{bad minina exist}}
\end{tabular}
\vspace{.5cm}
\end{table}

\paragraph{Main contributions.}
\begin{itemize}[leftmargin=.4cm]
\vspace{-.23cm}
  \item We introduce a natural distinction between ``pure'' and ``spurious''
  critical points for the loss function of networks. { These notions provide an intuitive and useful language for studying a central aspect in the theory of neural networks, namely the (over)parameterization of the functional space and its effect on the optimization landscape.} While most of the paper focuses on
  linear networks, this viewpoint applies to more general settings as well (see also our discussion in Appendix~\ref{sec:predictor}).
  \item We study the pure and critical locus for linear networks and arbitrary loss functions. We show that non-global local minima are always pure for convex losses, unifying many known properties on the landscape of linear networks. 
  \item {We explain that the absence of ``bad'' local minima in the loss landscape of linear networks is due to two distinct phenomena and does not hold in general: it is true for arbitrary smooth convex losses in the case of architectures that can express all linear maps (``filling architectures'') and it holds for the quadratic loss when the functional space is a determinantal variety (``non-filling architectures''). Without any assumption on the architecture, smooth convex losses may lead to many local minima. See Table~\ref{tab:cases}.}
  \item We provide a precise description of the number of topologically connected components of the set of global minima. {This relates to recent work on ``mode connectivity'' in loss landscapes of neural networks~\citep{garipov2018}.}
  \item We spell out connections between the loss landscape and classical geometric objects such as caustics and ED discriminants. We believe that these concepts may be useful in the study of more general functional spaces.
\end{itemize}

\smallskip
\paragraph{Differential notation.} Our functional spaces will be manifolds with singularities, so we will make use of elementary notions from differential geometry. If $\mathcal M$ and $\mathcal N$ are manifolds and $g: \mathcal{M} \to \mathcal{N}$ is a smooth map, then we write $dg(x)$ for the differential of $g$ at the point $x$. This means that $dg(x): T_x \mathcal{M} \to T_{g(x)} \mathcal{N}$
is the first order linear approximation of $g$ at the point
$x \in \mathcal{M}$. If $\mathcal M$ and $\mathcal N$ have singularities, then the same definitions apply if we restrict $g$ to smooth points in $\mathcal{M}$ whose image is also smooth in $\mathcal{N}$.
For most of our analysis, manifolds will be embedded in Euclidean spaces, say $\mathcal M \subset \RR^m$ and 
$\mathcal N \subset \RR^n$, so we can view the tangent spaces $T_x \mathcal{M}$ and $T_{g(x)} \mathcal{N}$ as also embedded in $\RR^m$ and $\RR^n$.
When $\mathcal{N} = \RR$, the critical locus of a map $g:\mathcal M \rightarrow \RR$ is defined as $Crit(g) = \lbrace x \in Smooth(\mathcal M) \mid dg(x) = 0 \rbrace$.


\section{Preliminaries}


\subsection{Pure and spurious critical points}
\label{sec:pure_spurious}

A neural network (or any general ``parametric learning model'') is defined by
a continuous mapping $\Phi: \RR^{d_ {\theta}} \times \RR^ {d_x} \rightarrow
\RR^{d_y}$ that associates an input vector $x \in
\RR^{d_x}$ and a set of parameters $\theta \in \RR^{d_{\theta}}$ to an output
vector $y = \Phi (\theta,x) \in \RR^{d_y}$. In other words, $\Phi$ determines a family of continuous functions parameterized by $\theta \in \RR^{d_\theta}$:
\[
\mathcal M_{\Phi} = \{f_\theta: \RR^{d_x} \rightarrow \RR^{d_y} \,\, | \,\, f_\theta = \Phi(\theta,\cdot)\} \subset C(\RR^{d_x}, \RR^{d_y}).
\]
Even though $\mathcal M_\Phi$ is naturally embedded in an infinite-dimensional functional space, it is itself finite dimensional. In fact, if the mapping $\Phi$ is smooth, then $\mathcal M_{\Phi}$ is a finite-dimensional manifold with singularities, and its intrinsic dimension is upper bounded by $d_\theta$. It is also important to note that neural networks are often \emph{non-identifiable models}, which means that different parameters can represent the same function (\ie, $f_\theta = f_{\theta'}$ does not imply $\theta = \theta'$). The manifold $\mathcal M_\Phi$ is sometimes known as a \emph{neuromanifold}~\citep{amari2016}. We now consider a general loss function of the form $L = \ell \circ \mu$, where $\mu: \RR^{d_\theta} \rightarrow \mathcal M_\Phi$ is the (over)parameterization of $\mathcal M_\Phi$ by $\theta$ and $\ell$ is a functional defined on a subset of $C(\RR^{d_x},\RR^{d_y})$ containing $\mathcal M_\Phi$:\footnote{This setting applies to both the empirical loss and the population loss.}
\begin{equation}\label{eq:composition}
L: \RR^{d_\theta} \overset{\mu}{\longrightarrow} \mathcal M_\Phi 
\overset{\ell|_{\mathcal M_\Phi}}{\longrightarrow} \RR.
\end{equation}

\begin{definition}
A critical point $\theta^* \in Crit(L)$ is a \emph{pure critical point} 
if $\mu(\theta^*)$ is a critical point for the restriction $\ell|_{\mathcal M_\Phi}$ (note that this implicitly requires $\mu(\theta^*)$ to be a smooth point of $\mathcal M_\Phi$). If $\theta^* \in Crit(L)$ but $\mu(\theta^*) \not \in Crit(\ell|_{\mathcal M_\Phi})$, we say that $\theta^*$ is a \emph{spurious critical point}.
\end{definition}

\begin{figure}[t]
    \centering
    \includegraphics[width=0.3\textwidth]{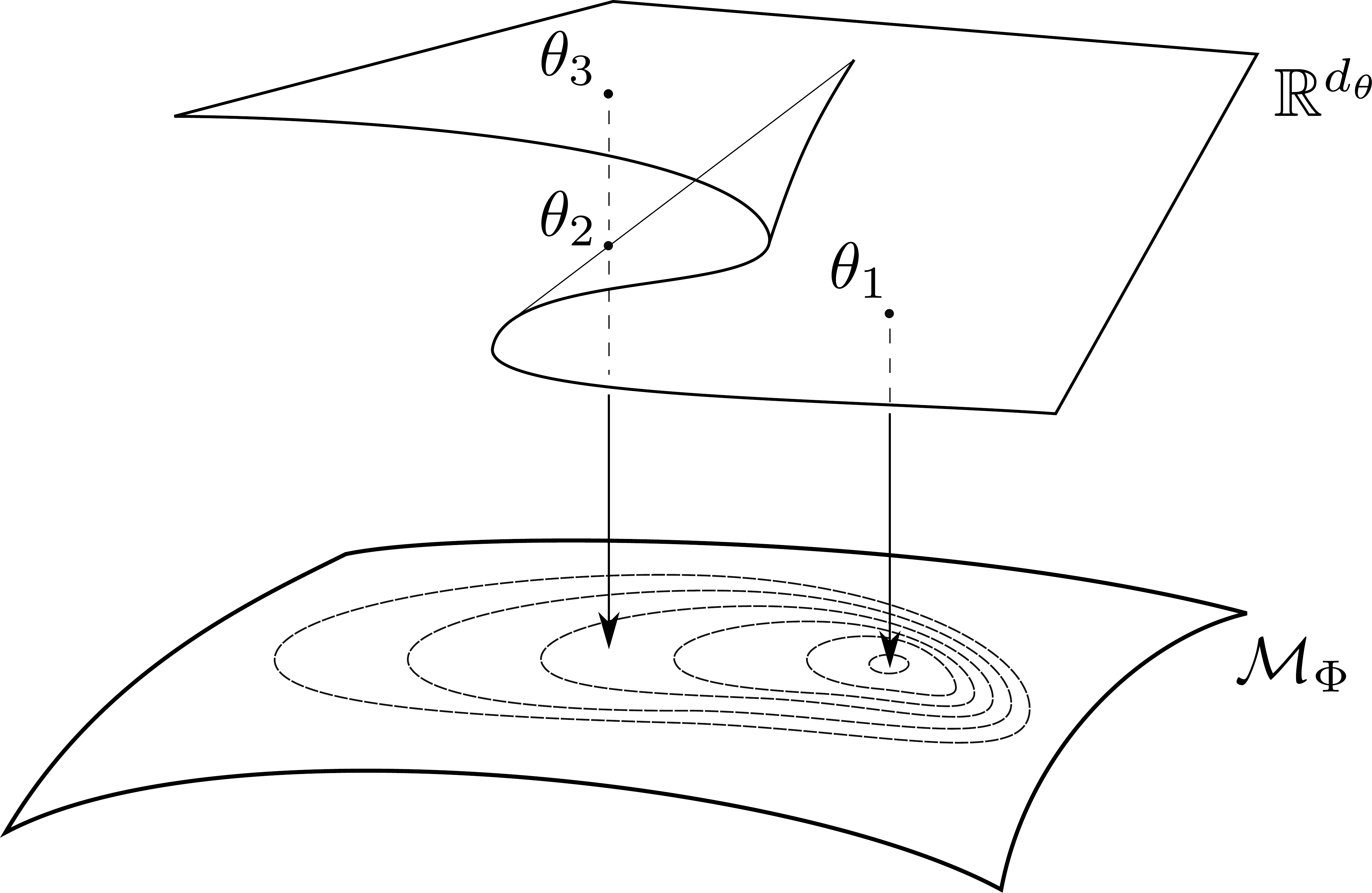}
    \caption{Pure and spurious critical points: $\theta_1$ is a pure critical point, while $\theta_2$ is a spurious critical point (the level curves on the manifold $\mathcal M_\Phi$ describe the landscape in functional space). Note that $\theta_3$ is mapped to the same function as $\theta_2$, but it is not a critical point. 
    }
    \label{fig:pure_spurious}
\end{figure}

It is clear from this definition that pure critical points reflect the geometry of the functional space, while spurious critical points do not have an intrinsic functional interpretation. For example, if $\theta^* \in Crit(L)$ is a spurious critical point, then it may be possible to find another parameter $\theta'$ that represents the same function $f_{\theta^*} = f_{\theta'}$ and is not a critical point for $L$ (see Figure~\ref{fig:pure_spurious}). In contrast, if $\theta^*$ is a pure critical point, then \emph{all} parameters $\theta'$ such that $\mu(\theta') = \mu(\theta^*)$ are automatically in $Crit(L)$, simply because $dL(\theta') = d \ell |_{\mathcal M_\Phi} (\mu (\theta')) \circ d \mu(\theta')$. This will motivate us to study the \emph{fiber} $\{\theta \, | \, \mu(\theta) = f\}$ of all parameters mapped to the same function $f$ (particularly when the function $f$ is a critical point of $\ell |_{\mathcal{M}_\Phi}$).

We note that a sufficient condition for $\theta^* \in Crit(L)$ to be a pure critical point is that the differential $d \mu (\theta^*)$ at $\theta^*$ has maximal rank (namely $\dim \mathcal M_\Phi$), \ie,  that $\mu$ is locally a \emph{submersion} at $\theta^*$. Indeed, we have in this case
\[
0=dL(\theta^*) = d \ell |_{\mathcal M_\Phi} (\mu (\theta^*)) \circ d \mu(\theta^*)  \Rightarrow d\ell |_{\mathcal
M_\Phi}(\mu(\theta^*)) = 0,\]
so $\mu(\theta^*)$ is critical for the
restriction of $\ell$ to $\mathcal M_\Phi$. We also point out a special situation  when $\mathcal M_\Phi$ is a~\emph{convex set} (as a subset of $C(\RR^{d_x},\RR^{d_y})$) and
$\ell$ is a \emph{smooth convex functional}. In this case, the only critical points of $\ell|_{\mathcal M_\Phi}$ are global minima, so we deduce that any critical point of $L = \ell\circ \mu$ is either a global minimum or a spurious critical point. The following simple observation gives a sufficient condition for critical points to be \emph{saddles} (\ie, they are not local minima or local maxima). 


\begin{lemma} \label{lem:perturbation}
Let $\theta^* \in Crit(L)$ be a (necessarily spurious) critical point with the following property: for any open neighborhood $U$ of $\theta$, there exists $\theta'$ in $U$ such that $\mu(\theta') = \mu(\theta)$ and $\theta' \not \in Crit(L)$. Then $\theta^*$ is a saddle for $L$.
\end{lemma}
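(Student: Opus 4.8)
The plan is to argue by contradiction and rule out separately that $\theta^*$ is a local minimum and that it is a local maximum (the two cases being symmetric). The mechanism is the following: the hypothesis supplies points $\theta'$ arbitrarily close to $\theta^*$ that simultaneously (i) have the same $L$-value as $\theta^*$, because $L(\theta') = \ell(\mu(\theta')) = \ell(\mu(\theta^*)) = L(\theta^*)$, and (ii) are not critical for $L$. If $\theta^*$ were a local extremum, (i) would force $\theta'$ to be a local extremum of the same type, hence a critical point of the differentiable function $L$, contradicting (ii).

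In detail, for the local-minimum case I would proceed as follows. Assume $\theta^*$ is a local minimum, so there is an open neighborhood $U$ of $\theta^*$ with $L(\theta) \ge L(\theta^*)$ for all $\theta \in U$. Apply the hypothesis to this $U$ to obtain $\theta' \in U$ with $\mu(\theta') = \mu(\theta^*)$ and $\theta' \notin Crit(L)$, and note $L(\theta') = L(\theta^*)$ by the computation above. Since $U$ is open, pick an open neighborhood $V \subseteq U$ of $\theta'$; then $L(\theta) \ge L(\theta^*) = L(\theta')$ for every $\theta \in V$, so $\theta'$ is itself a local minimum of $L$. As $L$ is differentiable on $\RR^{d_\theta}$ (here $\mu$ is smooth — polynomial, in the linear-network case — and $\ell$ is smooth), a local minimum must satisfy $dL(\theta') = 0$, i.e. $\theta' \in Crit(L)$, contradicting the choice of $\theta'$. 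Reversing all inequalities gives the same contradiction when $\theta^*$ is assumed to be a local maximum. Therefore $\theta^*$ is neither, i.e. it is a saddle.

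There is essentially no hard step here; the proof is bookkeeping. The only points that need a line of care are that $L$ is genuinely differentiable (so that local extrema are critical points of $L$) and that one may shrink $U$ to a neighborhood of $\theta'$ to transfer the extremum property — both immediate in our setting. I would also include the short remark justifying the parenthetical ``necessarily spurious'': if $\theta^*$ were pure, then $\mu(\theta^*) \in Crit(\ell|_{\mathcal M_\Phi})$, and by $dL(\theta') = d\ell|_{\mathcal M_\Phi}(\mu(\theta')) \circ d\mu(\theta')$ every $\theta'$ in the fiber over $\mu(\theta^*)$ would lie in $Crit(L)$, which is incompatible with the hypothesis — so the hypothesis already forces $\theta^*$ to be spurious.
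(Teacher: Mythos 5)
Your proof is correct and follows essentially the same argument as the paper: assume $\theta^*$ is a local extremum, use the hypothesis to produce $\theta'$ in the extremal neighborhood with $\mu(\theta') = \mu(\theta^*)$, observe that $\theta'$ inherits the extremum property (hence $\theta' \in Crit(L)$ by differentiability of $L$), and contradict $\theta' \notin Crit(L)$. You merely spell out the steps the paper leaves implicit, plus a welcome justification of the ``necessarily spurious'' parenthetical.
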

\begin{proof} Assume that $\theta^*$ is a local minimum (the reasoning is analogous if $\theta^*$ is a local maximum). This means that there exists a neighborhood $U$ of $\theta^*$ such that $L(\theta) \ge L(\theta^*)$ for all $\theta \in U$. In particular, if $\theta' \in U$ is such that $\mu(\theta') = \mu(\theta)$, then $\theta'$ must also be a local minimum. This contradicts $\theta' \not \in Crit(L)$.
\end{proof}
This general discussion on pure and spurious critical points applies to any smooth network map~$\Phi$ (with possible extensions to the case of piece-wise smooth mappings), and we believe that the distinction can be a useful tool in the study of the optimization landscape of general networks. In the remaining part of the paper, we use this perspective for an in-depth study of the critical points of \emph{linear networks}. For this type of network, the functional set $\mathcal M_\Phi$ can be embedded in a finite dimensional ambient space, namely the space of all linear maps $\RR^{d_x} \rightarrow \RR^{d_y}$. Furthermore, $\mathcal M_\Phi$ is an algebraic variety (a manifold that can have singularities and that can be described by algebraic equations). We will use basic tools from algebraic geometry to provide a complete description of pure and spurious critical points, and to prove new results on the landscape of linear networks.

\subsection{Linear networks and determinantal varieties}

A \emph{linear network} is a map $\Phi:\mathbb{R}^{d_\theta} \times\mathbb{R}^
{d_x} \rightarrow \mathbb{R}^{d_y}$ of the form
\begin{equation}\label{eq:linear_network}
\Phi(\theta,x) = W_{h} \ldots W_1 x, \qquad
\theta = (W_
{h},\ldots,W_{1}) \in \RR^{d_\theta},
\end{equation} 
where $W_i \in \mathbb{R}^{d_{i} \times d_{i-1}}$ are matrices (so $d_0 = d_x$,
$d_{h} = d_y$, and $d_\theta=d_0 d_1 + d_1 d_2 + \ldots + d_{h-1} d_{h}$). The functional space is in this case a subset of the space of all linear maps $\RR^{d_0} \rightarrow \RR^{d_h}$. As in~\eqref{eq:composition}, we can decompose a loss function $L$ for a linear network $\Phi$ as
\begin{equation}\label{eq:linear_composition}
\begin{array}{rcccl}
 \RR^{d_{h} \times d_{h-1}} \times \ldots \times \RR^{d_{1}\times d_{0}}    &  
 \overset{\mu_{\bm d}}{\longrightarrow} &
 \RR^{d_{h} \times d_0} &
 \overset{\ell}{\longrightarrow}
  & 
 \RR
\\
 (W_{h},\ldots,W_{1}) &
 \longmapsto & 
 \overline{W} = W_{h}\ldots W_1 & 
 \longmapsto &
 \ell(\overline{W}).
\end{array}
\end{equation}
Here $\mu_{\bm d}$ is the \emph{matrix multiplication map} for the sequence of widths $\bm d = (d_h,\ldots,d_0)$, and $\ell$ is a functional on the space of $(d_h \times d_0)$-matrices. In practice, it is typically a functional that depends on the training data, e.g. $\ell(\overline{W}) = \|\overline{W}X - Y\|^2$ for fixed
matrices $X, Y$.\footnote{Our setting can also be applied when $\ell$ includes a regularizer term defined in function space, \eg, $\ell(\W) = \|\W X -Y\|^2 + \lambda R(\W)$.} Note that even if $\ell$ is a convex functional, the set $\mathcal M_\Phi$ will often not be a convex set. In fact, it is easy to see that the image of $\mu_{\bm d}$ is the space $\mathcal M_{r}$ of $(d_h \times
d_0)$-matrices of rank at most $r = \min\{d_0,\ldots,d_{h}\}$. If $r < \min(d_0,d_h)$, this set is known as a \emph{determinantal variety}, a classical object of study in algebraic geometry~\citep{harris1995}. It is in fact an \emph{algebraic variety}, \ie, it is described by polynomial equations in the matrix entries (namely, it is the zero-set of all $(r+1)\times(r+1)$-minors), and it is well known that the dimension of $\mathcal M_r$ is $r(m+n-r)$. Furthermore, for $r>0$, the variety $\mathcal M_r$ has many singularities: its singular locus is exactly $\mathcal M_{r-1} \subset \mathcal M_r$, the set of all
matrices with rank strictly smaller than $r$. We refer the reader to Appendix~\ref{sec:det_variaties} for more details on 
determinantal varieties.

\section{Main results}

In this section, we investigate the critical locus $Crit(L)$ of general functions $L: \RR^{d_\theta} \rightarrow \RR$ of the form
$L = \ell\circ\mu_{\bm d}$
where $\ell: \RR^{d_h \times d_0} \rightarrow \RR$ is a (often convex) smooth map, and $\mu_{\bm d}$ is the matrix multiplication map introduced in~\eqref{eq:linear_composition}.
By studying the differential of $\mu_{\bm d}$, we will characterize pure and spurious critical points of $L$. As previously noted, the image of $\mu_{\bm d}$ is  $\mathcal M_r \subset
\RR^{d_h \times d_0}$ where $r = \min\{d_i\}$. In particular, we distinguish between two cases:
\begin{itemize}
    \item We say that the map $\mu_{\bm d}$ is \emph{filling} if $r = \min\{d_0,d_h\}$, so $\mathcal M_r = \RR^{d_h \times d_0}$. In this case, the functional space is \emph{smooth} and \emph{convex}.
    \item We say that the map $\mu_{\bm d}$ is \emph{non-filling} if $r < \min\{d_0,d_h\}$, so $\mathcal M_r \subsetneq \RR^{d_h \times d_0}$ is a determinantal variety. In this case, the functional space is \emph{non-smooth} and \emph{non-convex}.
\end{itemize}

\subsection{Properties of the matrix multiplication map}
\label{sec:matrix_multiplication_differential}


We present some general results on the matrix multiplication map $\mu_{\bm d}$, which we will apply to linear networks in the next subsection.
These facts may also be useful in other settings, for example, to study the piece-wise linear behavior of ReLU networks. 

We begin by noting that the differential map of $\mu_{\bm d}$ can be written explicitly as
\begin{equation}\label{eq:differential}
d\mu_{\bm d}(\theta)({\dot W_{h}},\ldots,{\dot W_{1}}) = 
{\dot W_{h}}W_{h-1}\ldots W_1 + W_{h} {\dot W_{h-1}}\ldots W_1 + \ldots + W_{h}\ldots W_2 {\dot W_{1}}.
\end{equation}
Given a matrix $M \in \RR^{m \times n}$, we denote by
$Row(M) \subset
\RR^n$ and $Col(M) \subset \RR^m$ the vector spaces spanned by
the rows and columns of $M$, respectively.
Writing $W_{> i} = W_{h} W_{h-1}\ldots W_{i+1}$ and $W_{< i} =
W_{i-1} W_{i-1}\ldots W_1$,
the image of $d\mu_{\bm d}(\theta)$ in~\eqref{eq:differential} is 
\begin{equation}\label{eq:image_differential}
\RR^{d_h} \otimes Row(W_{<h}) + \ldots + Col(W_{> i})
\otimes Row(W_{<i}) + \ldots  + Col(W_{>1}) \otimes
\RR^{d_0}.
\end{equation}
From this expression, we deduce the following useful fact.

\begin{restatable}{lemma}{RankDifferential}
\label{lem:rank_differential}
The dimension of the image of the differential $d \mu_{\bm d}$ at $\theta = (W_h, \ldots, W_1)$ is given by
\begin{align*}
    \rk(d\mu_{\bm d}(\theta)) &= \sum_{i=1}^{h} \rk(W_{> i}) \cdot \rk(W_{< i})  - \sum_{i=1}^{h-1} \rk(W_{> i}) \cdot \rk(W_{< i+1}),
\end{align*}
where we use the convention that $W_{<1} = I_{d_0}$,
$W_{>h} = I_{d_h}$ are the identity matrices of size $d_0$, $d_h$.
\end{restatable}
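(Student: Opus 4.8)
The plan is to compute the dimension of the subspace of $\RR^{d_h \times d_0}$ described in~\eqref{eq:image_differential} by treating it as a sum of subspaces and applying inclusion–exclusion carefully. Each summand $\mathrm{Col}(W_{>i}) \otimes \mathrm{Row}(W_{<i})$ is a tensor product of two coordinate-like subspaces inside $\RR^{d_h} \otimes \RR^{d_0}$, hence has dimension $\rk(W_{>i})\cdot\rk(W_{<i})$ (using the conventions $W_{<1} = I_{d_0}$, $W_{>h} = I_{d_h}$, so the first and last terms have the expected dimensions $d_h\cdot\rk(W_{<h})$ and $\rk(W_{>1})\cdot d_0$). First I would establish the key monotonicity facts: since $W_{>i} = W_{>i+1}W_{i+1}$, we have $\mathrm{Col}(W_{>i}) \subseteq \mathrm{Col}(W_{>i+1})$, and similarly $\mathrm{Row}(W_{<i+1}) \subseteq \mathrm{Row}(W_{<i})$ because $W_{<i+1} = W_i W_{<i}$. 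Thus the column spaces form a decreasing chain as $i$ increases and the row spaces form an increasing chain.

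Next I would analyze how consecutive summands intersect. Write $V_i = \mathrm{Col}(W_{>i}) \otimes \mathrm{Row}(W_{<i})$. The claim is that $(V_1 + \cdots + V_i) \cap V_{i+1} = \mathrm{Col}(W_{>i+1}) \otimes \mathrm{Row}(W_{<i+1})$, i.e. the intersection of the "accumulated" subspace with the next one is a tensor product of the smaller column space (from step $i+1$) with the smaller row space (from step $i$). One inclusion is easy: $\mathrm{Col}(W_{>i+1})\subseteq\mathrm{Col}(W_{>j})$ for all $j\le i+1$ and $\mathrm{Row}(W_{<i+1})\subseteq\mathrm{Row}(W_{<j})$ for $j\le i$, so this tensor product sits inside each $V_j$ with $j \le i$ and also inside $V_{i+1}$. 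For the reverse inclusion — showing the accumulated subspace meets $V_{i+1}$ in nothing larger — I would use the fact that for tensor products of coordinate subspaces $A\otimes B$ and $C\otimes D$ with $A\supseteq C$ (or nested appropriately), one has $(A\otimes B)\cap(C\otimes D) = (A\cap C)\otimes(B\cap D)$, and that sums distribute suitably when one factor is held fixed along the chain; concretely, since all the $\mathrm{Col}(W_{>j})$ for $j\le i$ contain $\mathrm{Col}(W_{>i+1})$, projecting onto the orthogonal complement of $\mathrm{Col}(W_{>i+1})$ inside $\mathrm{Col}(W_{>i})$ and onto $\mathrm{Row}(W_{<i+1})^\perp$ inside $\mathrm{Row}(W_{<i})$ kills the overlap. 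Granting this, a telescoping inclusion–exclusion gives
\[
\rk(d\mu_{\bm d}(\theta)) = \sum_{i=1}^h \dim V_i - \sum_{i=1}^{h-1} \dim\big((V_1+\cdots+V_i)\cap V_{i+1}\big) = \sum_{i=1}^h \rk(W_{>i})\rk(W_{<i}) - \sum_{i=1}^{h-1}\rk(W_{>i+1})\rk(W_{<i}),
\]
which, after reindexing the second sum ($i+1 \mapsto i$ gives $\rk(W_{>i})\rk(W_{<i+1})$, matching the two forms since one can also keep it as $\rk(W_{>i+1})\rk(W_{<i})$ — I would double-check that the stated form uses $\rk(W_{>i})\cdot\rk(W_{<i+1})$ and verify these agree under the chain structure), is exactly the claimed formula.

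The main obstacle is the intersection lemma $(V_1+\cdots+V_i)\cap V_{i+1} = \mathrm{Col}(W_{>i+1})\otimes\mathrm{Row}(W_{<i+1})$: sums of tensor-product subspaces do not in general behave well under intersection, so the argument must genuinely exploit that one family of factors is a nested chain. I would make this rigorous by choosing adapted bases — pick a basis of $\RR^{d_h}$ compatible with the flag $\mathrm{Col}(W_{>h})\supseteq\cdots\supseteq\mathrm{Col}(W_{>1})$ and a basis of $\RR^{d_0}$ compatible with the flag $\mathrm{Row}(W_{<1})\supseteq\cdots\supseteq\mathrm{Row}(W_{<h})$ — so that each $V_i$ becomes a coordinate subspace spanned by a "staircase" set of basis tensors $e_a\otimes f_b$, at which point all the sums and intersections reduce to unions and intersections of index sets and the identity becomes a finite combinatorial check. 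This also cleanly recovers the dimension count by counting lattice points under the staircase. Finally I would note the formula is stated for smooth points, and since $\mu_{\bm d}$ is polynomial this computation of $\rk(d\mu_{\bm d}(\theta))$ is valid at every $\theta$.
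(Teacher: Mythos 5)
Your overall strategy---compute the dimension of $\sum_{i=1}^h V_i$ with $V_i = Col(W_{>i}) \otimes Row(W_{<i})$ by a telescoping inclusion--exclusion, using the nested chain structure to identify each intersection $(V_1+\cdots+V_i)\cap V_{i+1}$ as a tensor product of members of the two chains---is exactly the paper's route (it proves a lemma on sums $\sum_i V_i^+\otimes V_i^-$ with $V_i^+$ increasing and $V_i^-$ decreasing, by induction on $h$). But your execution contains a genuine error. You correctly write the inclusions $Col(W_{>i}) \subseteq Col(W_{>i+1})$ and $Row(W_{<i+1}) \subseteq Row(W_{<i})$, yet then describe the column spaces as \emph{decreasing} and the row spaces as \emph{increasing} in $i$, and this reversal infects the key step. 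The intersection claim you state, $(V_1+\cdots+V_i)\cap V_{i+1} = Col(W_{>i+1}) \otimes Row(W_{<i+1})$, is literally $V_{i+1}$ itself, which would force $V_{i+1}\subseteq V_1+\cdots+V_i$ (false in general), and your ``easy inclusion'' argument for it uses $Col(W_{>i+1})\subseteq Col(W_{>j})$ for $j\le i$, which is backwards. Meanwhile the dimension you actually subtract in the displayed telescoping, $\rk(W_{>i+1})\rk(W_{<i})$, is the dimension of $Col(W_{>i+1})\otimes Row(W_{<i})$, the ``large tensor large'' space that \emph{contains} both $V_i$ and $V_{i+1}$, hence strictly overestimates the intersection in general and undercounts the total. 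The correct identification is the ``small tensor small'' space with respect to the true chain directions: $(V_1+\cdots+V_i)\cap V_{i+1} = Col(W_{>i}) \otimes Row(W_{<i+1})$, since $V_1+\cdots+V_i \subseteq Col(W_{>i})\otimes\RR^{d_0}$ and $(A\otimes B)\cap(C\otimes D)=(A\cap C)\otimes(B\cap D)$, while conversely $Col(W_{>i})\otimes Row(W_{<i+1})\subseteq V_i\cap V_{i+1}$. This yields correction terms $\rk(W_{>i})\rk(W_{<i+1})$, exactly as in the lemma.

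The ``reindexing'' you invoke to reconcile your correction term $\rk(W_{>i+1})\rk(W_{<i})$ with the lemma's $\rk(W_{>i})\rk(W_{<i+1})$ does not work: shifting the summation index cannot change which pair of ranks is multiplied, and the two expressions genuinely differ. Already for $h=2$ your formula gives $\rk(W_2)\,d_0 + d_h\,\rk(W_1) - d_h d_0$ (using $W_{>2}=I_{d_h}$, $W_{<1}=I_{d_0}$), which is negative for $W_1=W_2=0$, whereas the correct count is $\rk(W_2)\,d_0 + d_h\,\rk(W_1) - \rk(W_2)\rk(W_1)$. Once the chain directions are straightened out, your proposed justifications (bases adapted to the two flags, or projecting onto the complements of $Col(W_{>i})$ and $Row(W_{<i+1})$) do make the intersection lemma rigorous, and the telescoping then gives the stated formula; that corrected argument is essentially the paper's inductive proof.
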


We can use Lemma~\ref{lem:rank_differential} to characterize all cases when the differential  $d\mu_{\bm d}$ at $\theta = (W_{h},\ldots,W_1)$ has full rank (\ie, when the matrix multiplication map is a local submersion onto $\mathcal M_r$).

\begin{restatable}{theorem}{DifferentialImage}\label{thm:differential_image} Let $r = \min \{d_i\}$, $\theta = (W_h,\ldots,W_1)$, and $\W = \mu_{\bm d}(\theta)$.
\begin{itemize}
        \item (Filling case) If $r = \min\{d_h, d_0\}$, the differential $d \mu_{\bm d}(\theta)$ has maximal rank equal to $\dim M_r = d_h d_0$ if and only if, for every $i \in \lbrace 1, 2, \ldots, h-1 \rbrace$, either $\rk(W_{>i}) = d_h$ or $\rk(W_{< i+1}) = d_0$ holds.
    \item (Non-filling case) If $r < \min\{d_h, d_0\}$, the differential $d \mu_{\bm d}(\theta)$ has maximal rank equal to $\dim M_r = r(d_h+d_0-r)$ if and only if $\rk(\W) = r$.
\end{itemize}
Furthermore, in both situations, if $\rk (\W)\!=\! e < r$, then the image of $d \mu_{\bm d}(\theta)$ always contains the tangent space $T_{\W} \mathcal{M}_e$ of $\mathcal M_e \subset \mathcal M_r$ at~$\W$. 
\end{restatable}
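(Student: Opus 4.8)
The plan is to reduce everything to the rank formula in Lemma~\ref{lem:rank_differential} and the explicit description of the image~\eqref{eq:image_differential}. First I would handle the two ``maximal rank'' characterizations by direct bookkeeping: plug the hypotheses into the formula $\rk(d\mu_{\bm d}(\theta)) = \sum_i \rk(W_{>i})\rk(W_{<i}) - \sum_i \rk(W_{>i})\rk(W_{<i+1})$ and check when it attains $\dim\mathcal M_r$. In the non-filling case this is cleanest: since $\rk(W_{>i}), \rk(W_{<i}) \le r$ always, and $\rk(\W) \le \min_i \rk(W_{>i})$, $\rk(W_{<i})$, one shows the sum is maximized exactly when all the intermediate ranks equal $r$, which (using $\W = W_{>i} W_{<i}$ and submultiplicativity of rank) is equivalent to $\rk(\W) = r$; then the telescoping sum collapses to $r(d_h + d_0 - r)$. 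The filling case requires a slightly more careful case analysis on each index $i$: the ``either $\rk(W_{>i}) = d_h$ or $\rk(W_{<i+1}) = d_0$'' condition is precisely what makes the $i$-th summand in~\eqref{eq:image_differential} — namely $\mathrm{Col}(W_{>i}) \otimes \mathrm{Row}(W_{<i+1})$ viewed against its neighbours — fail to lose dimension, so I would argue by showing the spaces in~\eqref{eq:image_differential} span all of $\RR^{d_h \times d_0}$ iff this holds for every $i$, perhaps more transparently than through the rank formula.

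For the final ``furthermore'' claim (when $e = \rk(\W) < r$, the image contains $T_{\W}\mathcal M_e$), the key is to recall the explicit tangent space of a determinantal variety at a smooth point: if $\W$ has rank $e$, then $T_{\W}\mathcal M_e = \{ \dot{\W} \in \RR^{d_h \times d_0} : \mathrm{Row}(\dot{\W}) \subset \mathrm{Row}(\W) + \text{anything}\}$ — more precisely, $T_{\W}\mathcal M_e = \mathrm{Col}(\W)\otimes\RR^{d_0} + \RR^{d_h}\otimes\mathrm{Row}(\W)$, the set of matrices whose ``normal'' block vanishes in a basis adapted to $\mathrm{Col}(\W) \oplus \text{complement}$ and $\mathrm{Row}(\W) \oplus \text{complement}$. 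So I must show that the image~\eqref{eq:image_differential} contains both $\mathrm{Col}(\W)\otimes\RR^{d_0}$ and $\RR^{d_h}\otimes\mathrm{Row}(\W)$. For the second of these: the first summand in~\eqref{eq:image_differential} is $\RR^{d_h}\otimes\mathrm{Row}(W_{<h})$, and since $\W = W_h W_{<h}$ we have $\mathrm{Row}(\W) \subseteq \mathrm{Row}(W_{<h})$, hence $\RR^{d_h}\otimes\mathrm{Row}(\W) \subseteq \RR^{d_h}\otimes\mathrm{Row}(W_{<h}) \subseteq \mathrm{Im}\, d\mu_{\bm d}(\theta)$. Symmetrically, the last summand $\mathrm{Col}(W_{>1})\otimes\RR^{d_0}$ contains $\mathrm{Col}(\W)\otimes\RR^{d_0}$ because $\W = W_{>1} W_1$ gives $\mathrm{Col}(\W) \subseteq \mathrm{Col}(W_{>1})$. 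Adding the two inclusions gives $T_{\W}\mathcal M_e \subseteq \mathrm{Im}\, d\mu_{\bm d}(\theta)$, which is the claim; note this argument needs no assumption on whether $\mu_{\bm d}$ is filling, matching the statement.

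The main obstacle I anticipate is the combinatorial optimization hidden in the ``only if'' direction of the maximal-rank characterizations — i.e., confirming that the rank formula is genuinely \emph{maximized} precisely under the stated conditions, rather than merely that those conditions suffice. Concretely, one must rule out configurations of intermediate ranks $(\rk(W_{>i}), \rk(W_{<i}))$ that violate the conditions yet somehow compensate elsewhere in the telescoping sum; this requires understanding the monotone dependence of each summand on the ranks and the constraints relating consecutive terms (e.g. $\rk(W_{<i+1}) \le \min\{\rk(W_{<i}) \cdot [\text{something}], d_i\}$ and the fact that $W_{<i+1} = W_i W_{<i}$). I would organize this by writing $a_i = \rk(W_{>i})$ and $b_i = \rk(W_{<i})$, noting $a_h$ effectively $=$ ``$d_h$-capped'' and $b_1$ ``$d_0$-capped'', deriving the admissible range of each pair, and showing the objective is coordinatewise nondecreasing so the maximum sits at the boundary — which is exactly the stated condition. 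The rest is routine substitution and the dimension formula $\dim\mathcal M_r = r(d_h + d_0 - r)$ recalled in the preliminaries.
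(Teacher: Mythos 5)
Your ``furthermore'' argument is correct and is exactly the paper's Proposition~\ref{prop:tangentContainment}: the inclusions $Row(\W)\subseteq Row(W_{<h})$ and $Col(\W)\subseteq Col(W_{>1})$ show that \eqref{eq:image_differential} contains $\RR^{d_h}\otimes Row(\W)+Col(\W)\otimes\RR^{d_0}=T_{\W}\mathcal M_e$, with no filling assumption. The two ``if and only if'' characterizations, however, are only sketched, and the sketch rests on a claim that is false. You assert that the rank formula of Lemma~\ref{lem:rank_differential} ``is maximized exactly when all the intermediate ranks equal $r$, which \ldots is equivalent to $\rk(\W)=r$.'' Neither half is true: take widths $(d_3,d_2,d_1,d_0)=(3,3,1,3)$, so $r=1$, with generic weights. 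Then $\rk(\W)=1=r$ but $\rk(W_{>2})=\rk(W_3)=3$, and the formula gives $\rk(d\mu_{\bm d}(\theta))=(1\cdot 3+3\cdot 1+3\cdot 1)-(1\cdot 1+3\cdot 1)=5=\dim\mathcal M_1$, i.e.\ maximal rank is attained with intermediate ranks far from $r$. Similarly, the closing heuristic ``the objective is coordinatewise nondecreasing, so the maximum sits at the boundary---which is exactly the stated condition'' does not match the statement: the filling condition is a \emph{disjunction} (for each $i$, $\rk(W_{>i})=d_h$ \emph{or} $\rk(W_{<i+1})=d_0$), and Example~\ref{ex} shows maximal rank with $\rk(W_{>1})=1<d_h$ and $\rk(\W)<r$, so maximality is not characterized by pushing each rank to the top of its admissible range. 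You explicitly flag this optimization as the anticipated obstacle, but you do not resolve it, so the core of the theorem is not proved.

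The missing ingredient, which is how the paper argues (Propositions~\ref{prop:notMaximalRankNonFilling} and~\ref{prop:notMaximalRankFilling}), is to get the rank-drop (``only if'') directions from a single intermediate layer: factoring $\mu_{\bm d}=\mu_{i,2}\circ\mu_{i,1}$ shows that the image of $d\mu_{\bm d}(\theta)$ is contained in $Col(W_{>i})\otimes\RR^{d_0}+\RR^{d_h}\otimes Row(W_{<i+1})$, whose dimension is $\rk(W_{>i})\,d_0+\bigl(d_h-\rk(W_{>i})\bigr)\rk(W_{<i+1})$. Choosing an $i$ where both ranks are deficient gives the filling ``only if''; choosing a bottleneck layer with $d_i=r$ (which exists precisely in the non-filling case) and noting that $\rk(\W)<r$ forces $\rk(W_{>i})<r$ or $\rk(W_{<i+1})<r$ gives a bound strictly below $r(d_h+d_0-r)$. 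The non-filling ``if'' direction needs no optimization at all: the tangent-space containment you already proved, applied with $e=r$, shows the image contains $T_{\W}\mathcal M_r$, which has dimension $\dim\mathcal M_r$---a connection your write-up does not make. Finally, the filling ``if'' direction requires an exact evaluation, not a boundary argument; the paper does it via the monotonicity \eqref{eq:nestedRanks} and the maximal index $k$ with $\rk(W_{<k+1})=d_0$, collapsing the sum in Lemma~\ref{lem:rank_differential} to $d_hd_0$. Your alternative idea of showing that the subspaces in \eqref{eq:image_differential} span $\RR^{d_h\times d_0}$ can indeed be made to work (under the stated condition, the single summand $Col(W_{>k+1})\otimes Row(W_{<k+1})$---or $Col(W_{>1})\otimes\RR^{d_0}$ when no such $k$ exists---already equals $\RR^{d_h}\otimes\RR^{d_0}$), but as written the proposal carries out none of these steps, so the two equivalences remain unestablished.
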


We note that $d\mu_{\bm d}(\theta)$ has always maximal rank when $\rk(\W) = r = \min\{d_i\}$, however in the filling case it is possible to obtain a local submersion even when $\rk(\W) < r$ (see Example~\ref{ex} in appendix). We next describe the \emph{fiber} of the matrix multiplication map,
that is, the set
\begin{equation*}
    \inv{\mu_{\bm d}}\W = \{(W_{h},\ldots,W_1) \,\, | \,\, \W = W_{h} \ldots W_1,  \,\, W_i \in \RR^{d_i \times d_{i-1}}\}.
\end{equation*}
It will be convenient to refer to $\inv{\mu_{\bm d}}\W$ as the set of
\emph{$\bm d$-factorizations} of $\W$. We are interested in understanding the structure
of $\mu_{\bm d}^{-1}(\W)$ since, as argued in Section~\ref{sec:pure_spurious}, pure critical loci consist of fibers of ``critical functions''. The following result completely describes the \emph{connectivity} of $\mu_{\bm d}^{-1}(\W)$.

\begin{restatable}{theorem}{FiberComponents}
\label{thm:fiberMMMcomponents}
Let $r = \min\{d_i\}$.
If $\rk(\overline W) = r$, then 
the set of ${\bm d}$-factorizations $\inv{\mu_{\bm d}}\W $ of $\overline W$
has exactly $2^b$ path-connected components, where $b = \#\{i \, | \, d_i = r,
\,\,\, 0 < i < h\}$.
If $\rk(\overline W) < r$, then $\inv{\mu_{\bm d}}\W$ is
always path-connected.
\end{restatable}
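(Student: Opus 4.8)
The plan is to analyze the fiber $\inv{\mu_{\bm d}}{\W}$ by repeatedly "peeling off" one factor at a time, reducing to the well-understood case of factoring a single matrix as a product of two. First I would treat the base case $h=2$: given $\W \in \RR^{d_2 \times d_0}$ of rank $s$, the set of pairs $(W_2, W_1)$ with $W_2 W_1 = \W$, $W_i \in \RR^{d_i \times d_{i-1}}$, can be parameterized through the choice of $\Col(W_1) = \Row(W_2)^{\text{related}}$, i.e. through an $s'$-dimensional subspace $V \subseteq \RR^{d_1}$ containing the "image" data, with $s \le s' \le d_1$. The key dichotomy: if $d_1 > r = \min\{d_0,d_1,d_2\}$ there is genuine freedom in choosing $V$ and the space of valid $(W_2,W_1)$ retracts onto a connected set (a bundle over a connected Grassmannian-type space, using that $GL$ is not involved in a disconnecting way because the intermediate dimension strictly exceeds the rank, leaving room to move); if $d_1 = r$ then when $\rk(\W) = r$ we are forced to have $W_1$ surjective-or-injective in the tight way, the middle subspace $V$ is rigidly determined, and the remaining freedom is a change of basis in $\RR^{d_1} = \RR^r$ by an element of $GL_r(\RR)$, which has exactly $2$ connected components (sign of determinant) — this is the source of the factor $2^b$. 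When $\rk(\W) < r$ even a bottleneck layer $d_i = r$ does not force full-rank intermediate maps, so $GL_r$ is replaced by a connected set of possibly-singular matrices, and connectivity is restored; this explains why the count collapses to $1$ in the low-rank case.

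Next I would set up the induction on $h$. Write $\mu_{\bm d} = \mu_{(d_h,\dots,d_1)} \circ (\text{id} , \mu_{(d_1,d_0)})$ in the appropriate sense, or more cleanly: fix the product $P = W_{h-1}\cdots W_1 \in \RR^{d_{h-1}\times d_0}$ and note that $\inv{\mu_{\bm d}}{\W}$ fibers over the set of admissible values of $P$. The admissible $P$ are exactly those with $\W = W_h P$ solvable, i.e. $\Row(\W) \subseteq \Row(P)$ and $\rk(P) \le \min\{d_1,\dots,d_{h-1}\}$; and over each such $P$ the fiber is $\{W_h : W_h P = \W\} \times \inv{\mu_{(d_{h-1},\dots,d_0)}}{P}$. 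The first factor is an affine space (connected), and the second is handled by the inductive hypothesis. The main work is to understand the topology of the base — the set of admissible $P$ together with how the inductive fiber count varies over it — and to show the total space has exactly $2^{b}$ components where $b$ counts internal bottleneck layers. I expect this to require: (i) showing the set of admissible $P$ of maximal allowed rank is connected (a statement about rank-$\le k$ matrices with a prescribed row space containing a fixed subspace, which should follow from the transitivity of $GL$ actions and the connectedness of the relevant Grassmannians/Stiefel-type manifolds), (ii) a homotopy argument that the subset of $P$ of non-maximal rank does not add components (one can always deform such a $P$ to full rank while staying admissible, in the non-filling sense relevant here), and (iii) bookkeeping that the $GL_r$-component-doubling occurs independently at each of the $b$ bottleneck layers, giving the product $2^b$.

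The step I expect to be the main obstacle is making the "peeling" rigorous when several layers have dimension exactly $r$ that are \emph{not} adjacent to each other, so that the component-doubling at distinct bottlenecks is genuinely independent and multiplies cleanly: one must rule out that a path could swap the sign at one bottleneck while compensating at another through the non-rigid intermediate layers. Concretely, when $\rk(\W) = r$, every layer is forced to have rank exactly $r$ (since the product has rank $r$ and each width is $\ge r$), so at a bottleneck $d_i = r$ the map $W_i$ or rather the partial product through that layer is an invertible $r\times r$ matrix, and I would show that the sign of its determinant is a locally constant function on the fiber that can be prescribed freely and independently across the $b$ bottlenecks — the non-bottleneck layers contribute only connected Stiefel-manifold factors ($\{W : \rk W = r\}$ type sets, which are connected once the ambient dimensions strictly exceed $r$) that cannot flip any sign. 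Assembling these invariants into a homeomorphism-on-$\pi_0$ with $(\ZZ/2)^b$ is the crux; the low-rank statement then follows because the rigid $GL_r$ factors are replaced by connected sets and all the locally-constant sign invariants disappear.
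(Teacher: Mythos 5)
Your high-level picture (a $\mathrm{GL}(r)$-worth of gauge freedom at each width-$r$ bottleneck contributing a sign, and ``room to move'' killing components everywhere else) is the right intuition, and your $h=2$ base case with $d_1=r$ and $\rk(\W)=r$ is correct. But the proposal has genuine gaps exactly where you locate the crux, and your layer-by-layer induction makes that crux harder rather than easier. If you peel off $W_h$ and fiber $\inv{\mu_{\bm d}}{\W}$ over the partial product $P=W_{h-1}\cdots W_1$, the inductive fiber $\inv{\mu_{(d_{h-1},\ldots,d_0)}}{P}$ does not have a constant number of components over the base: the truncated tuple has its own minimum $r'=\min\{d_0,\ldots,d_{h-1}\}\ge r$ and its own bottleneck count $b'$, which can be strictly positive even when $b=0$ for the full network (e.g.\ $\bm d=(1,2,2,2)$: over a rank-$2$ matrix $P$ the truncated fiber has two components, yet the full fiber must be connected). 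So you cannot get away with ``connected base $\times$ connected affine fiber $\times$ inductive count''; you must prove that components of the sub-fibers merge as $P$ moves (including through the low-rank stratum of admissible $P$), and separately that no merging occurs across genuine bottlenecks. That independence/non-compensation statement is precisely what you defer to ``bookkeeping,'' and it is the theorem. Two further concrete problems: (i) your proposed invariant, the sign of the determinant of ``the partial product through that layer,'' is not well defined for an isolated internal bottleneck, since the partial products into and out of a width-$r$ layer are $d_h\times r$ and $r\times d_0$ matrices, not square; any working invariant is necessarily relative to a chosen base factorization (or a chosen $r\times r$ submatrix), and one must check it is continuous and locally constant on the fiber. (ii) The parenthetical ``every layer is forced to have rank exactly $r$'' is false (a factor sandwiched between two widths larger than $r$ can have larger rank); what is true, and what you actually need, is that the products flanking each bottleneck have full rank $r$ when $\rk(\W)=r$. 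Finally, the low-rank case is asserted (``connectivity is restored'') rather than proved; even for $h=2$ with $d_1=r$ and $\rk(\W)<r$ this requires a real path construction, not just the remark that singular matrices form a connected set.

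For comparison, the paper avoids the varying-fiber problem by factoring $\mu_{\bm d}=\mu_2\circ\mu_1$ in one step: $\mu_1$ collapses every maximal stretch with no internal width equal to $r$, and $\mu_2$ is the chain through the bottleneck widths. A path-lifting property for $b=0$ multiplication maps (Proposition~\ref{prop:pathLifting}) shows $\mu_1$ does not change the number of path components (Proposition~\ref{prop:sameNunberOfConnComp}); then for $\rk(\W)=r$ the fiber of $\mu_2$ is shown to be a single orbit of the $\mathrm{GL}(r)^b$ action in~\eqref{eq:actionOnFiber}, homeomorphic to $\mathrm{GL}(r)^b$ (Proposition~\ref{prop:orbitAction}, Corollary~\ref{cor:fiberHomeomGroupProduct}), which packages all $b$ sign invariants and their independence at once; the case $\rk(\W)<r$ is handled by explicit paths to a canonical factorization (Proposition~\ref{prop:fiberMu4}). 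If you want to salvage your plan, the most direct fix is to replace the one-layer peeling by this two-stage factorization, or else to prove a path-lifting statement strong enough to control how sub-fiber components glue over the stratified set of admissible $P$.
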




\subsection{Application to linear networks}

We now apply the general results from the previous subsection to study the critical locus $Crit(L)$ with $L = \ell \circ \mu_{\bm d}$, where $\ell$ is any smooth function. In the following, we always use $r = \min\{r_i\}$ and $\W = \mu_{\bm d}(\theta)$. The next two facts follow almost immediately from Theorem~\ref{thm:differential_image}.


\begin{restatable}{proposition}{submersion}\label{prop:submersion} If $\theta$ is such that $d \mu_{\bm d}(\theta)$ has maximal rank (see Theorem~\ref{thm:differential_image}), then $\theta \in Crit(L)$ if and only if $\W \in Crit(\ell|_{\mathcal M_r})$,
and $\theta$ is a minimum (resp., saddle, maximum) for $L$ if and only if $\W$ is a minimum (resp., saddle, maximum) for $\ell|_{\mathcal M_r}$.
If $\rk(\W) = r$ (which implies that $d \mu_{\bm d}(\theta)$ has maximal rank)
and $\theta \in Crit(L)$, then \emph{all} $\bm d$-factorizations of $\W$ also belong to $Crit(L)$.
\end{restatable}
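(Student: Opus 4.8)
The hypothesis is exactly what is needed to make $\mu_{\bm d}$ a genuine submersion onto the (locally smooth) functional space $\mathcal M_r$ near $\theta$; once this is set up, everything transfers along a smooth local section and the rest is formal. So the plan is: (1) turn ``maximal rank'' into ``local submersion onto $\mathcal M_r$''; (2) use a local section to compare critical points and their types; (3) handle the last assertion by showing $\rk(\W)=r$ forces maximal rank everywhere in the fiber, then reapply step (2).

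\textbf{Step 1 (local submersion).} First I would unpack ``$d\mu_{\bm d}(\theta)$ has maximal rank'' using Theorem~\ref{thm:differential_image}: in both cases the maximal rank equals $\dim\mathcal M_r=\dim T_{\W}\mathcal M_r$, and $\W=\mu_{\bm d}(\theta)$ is a \emph{smooth} point of $\mathcal M_r$ (automatic when $\mathcal M_r=\RR^{d_h\times d_0}$; equivalent to $\rk(\W)=r$ in the non-filling case, since there $\mathrm{Sing}(\mathcal M_r)=\mathcal M_{r-1}$). Since the image of $\mu_{\bm d}$ lies in $\mathcal M_r$, which near $\W$ is a smooth submanifold of dimension $\dim\mathcal M_r$, the rank of $d\mu_{\bm d}$ is $\le\dim\mathcal M_r$ on a neighborhood of $\theta$; together with lower semicontinuity of the rank and maximality at $\theta$ it is locally constant and equal to $\dim\mathcal M_r$. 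By the constant-rank (submersion) theorem, $\mu_{\bm d}$ therefore restricts to a submersion from a neighborhood of $\theta$ onto a neighborhood $V\subseteq\mathcal M_r$ of $\W$, and there is a smooth section $s\colon V\to\RR^{d_\theta}$ with $\mu_{\bm d}\circ s=\mathrm{id}_V$ and $s(\W)=\theta$.

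\textbf{Step 2 (critical points and their types).} On $V$ we have both $L=(\ell|_{\mathcal M_r})\circ\mu_{\bm d}$ and $\ell|_{\mathcal M_r}=L\circ s$. The chain rule gives $dL(\theta)=d(\ell|_{\mathcal M_r})(\W)\circ d\mu_{\bm d}(\theta)$ and $d(\ell|_{\mathcal M_r})(\W)=dL(\theta)\circ ds(\W)$; reading these two identities together yields $dL(\theta)=0\iff d(\ell|_{\mathcal M_r})(\W)=0$, i.e.\ $\theta\in Crit(L)\iff\W\in Crit(\ell|_{\mathcal M_r})$. For the type, I would argue topologically rather than with second-order data: since $\mu_{\bm d}$ is continuous with $\mu_{\bm d}(\theta)=\W$, if $\W$ is a local minimum of $\ell|_{\mathcal M_r}$ then $\theta$ is a local minimum of $L=(\ell|_{\mathcal M_r})\circ\mu_{\bm d}$; conversely $s$ is continuous with $s(\W)=\theta$, so if $\theta$ is a local minimum of $L$ then $\W$ is a local minimum of $\ell|_{\mathcal M_r}=L\circ s$. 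The same dichotomy holds for local maxima, and ``saddle'' means ``critical but neither a local minimum nor a local maximum'', so all three cases correspond.

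\textbf{Step 3 (all $\bm d$-factorizations).} Assume now $\rk(\W)=r$ and $\theta\in Crit(L)$. I would first check that $\rk(\W)=r$ forces $d\mu_{\bm d}(\theta')$ to have maximal rank at \emph{every} $\bm d$-factorization $\theta'$ of $\W$. In the non-filling case this is precisely the criterion of Theorem~\ref{thm:differential_image}. In the filling case, say $r=d_0$ (the case $r=d_h$ is symmetric): writing $\W=W_{>i}\,W_{<i+1}$ with $W_{<i+1}$ of size $d_i\times d_0$, we get $r=\rk(\W)\le\rk(W_{<i+1})\le\min(d_i,d_0)=d_0=r$, so $\rk(W_{<i+1})=d_0$ for every $i$, which is exactly the condition of Theorem~\ref{thm:differential_image} at $\theta'$. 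By Step~2 applied to $\theta$ we have $\W\in Crit(\ell|_{\mathcal M_r})$, and then Step~2 applied to each such $\theta'$ (which has maximal-rank differential) gives $\theta'\in Crit(L)$; equivalently, $dL(\theta')=d\ell(\W)\circ d\mu_{\bm d}(\theta')=0$ because $d\ell(\W)$ annihilates $T_{\W}\mathcal M_r$ and the image of $d\mu_{\bm d}(\theta')$ is contained in $T_{\W}\mathcal M_r$.

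\textbf{Expected main obstacle.} The only genuinely delicate point is Step~1: one must verify that ``maximal rank'' really is $\dim\mathcal M_r$ and that this places $\W$ in the smooth locus of $\mathcal M_r$, so that $\mathcal M_r$ is honestly a manifold near $\W$ and the submersion/section machinery is legitimate despite the singularities of determinantal varieties. This is where Theorem~\ref{thm:differential_image} does the real work; Steps~2 and~3 are then routine manipulations with the chain rule and continuity.
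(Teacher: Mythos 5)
Your proposal is correct and follows essentially the same route as the paper: maximal rank of $d\mu_{\bm d}(\theta)$ makes $\mu_{\bm d}$ a local submersion onto $\mathcal M_r$ near the (smooth) point $\W$, the local projection/section normal form transfers criticality and the min/saddle/max type in both directions, and $\rk(\W)=r$ forces maximal rank at every $\bm d$-factorization via Theorem~\ref{thm:differential_image}, giving the last claim. Your explicit verification of the filling-case rank condition and the section-based comparison are just more detailed versions of the steps the paper compresses into its citation of the submersion normal form.
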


\begin{restatable}{proposition}{restriction}\label{prop:restriction}
If $\theta \in Crit(L)$ with $\rk(\W) = e \le r$, then $\W \in Crit(\ell|_{\mathcal M_e})$. In other words, if $\rk(\W)<r$, then $\theta \in Crit(L)$ implies that $\W$ is a critical point for the restriction of $\ell$ to a smaller determinantal variety $\mathcal M_e$ (which is in the singular locus of the functional space $\mathcal M_r$ 
in the non-filling case).
\end{restatable}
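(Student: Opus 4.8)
The plan is to derive Proposition~\ref{prop:restriction} as a direct consequence of the ``furthermore'' clause in Theorem~\ref{thm:differential_image}. Suppose $\theta = (W_h, \ldots, W_1) \in Crit(L)$ with $\W = \mu_{\bm d}(\theta)$ of rank $e \le r$. By definition, $dL(\theta) = 0$, and by the chain rule $dL(\theta) = d\ell(\W) \circ d\mu_{\bm d}(\theta)$, so the linear functional $d\ell(\W)$ (a priori defined on all of $\RR^{d_h \times d_0}$, viewed as a covector) vanishes on the image of $d\mu_{\bm d}(\theta)$ inside $\RR^{d_h \times d_0}$. Theorem~\ref{thm:differential_image} tells us that this image always contains $T_{\W}\mathcal M_e$, the tangent space to the determinantal variety $\mathcal M_e$ at the (smooth) point $\W$. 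Hence $d\ell(\W)$ restricted to $T_{\W}\mathcal M_e$ is zero, which is exactly the statement that $\W \in Crit(\ell|_{\mathcal M_e})$.

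First I would check that the setup is consistent: since $\rk(\W) = e$, the point $\W$ lies in $\mathcal M_e$ and, being of full rank $e$ within $\mathcal M_e$, it is a \emph{smooth} point of $\mathcal M_e$ (the singular locus of $\mathcal M_e$ is $\mathcal M_{e-1}$), so $T_{\W}\mathcal M_e$ is well-defined and $d(\ell|_{\mathcal M_e})(\W)$ makes sense. This matters because $\W$ need not be smooth in $\mathcal M_r$ when $e < r$, but the proposition only asserts criticality for the restriction to $\mathcal M_e$, where $\W$ is smooth. Then I would spell out the chain-rule identity: for any tangent vector $v \in T_{\W}\mathcal M_e$, using that $\mathrm{Im}\, d\mu_{\bm d}(\theta) \supseteq T_{\W}\mathcal M_e$, pick a preimage $u$ with $d\mu_{\bm d}(\theta)(u) = v$; then $\langle \nabla\ell(\W), v\rangle = d\ell(\W)(d\mu_{\bm d}(\theta)(u)) = dL(\theta)(u) = 0$. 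Since $v$ was arbitrary, $d(\ell|_{\mathcal M_e})(\W) = 0$.

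The second sentence of the proposition is just a reinterpretation: taking $e < r$, the above shows $\W$ is critical for $\ell$ restricted to $\mathcal M_e \subsetneq \mathcal M_r$, and in the non-filling case $\mathcal M_{r-1} \supseteq \mathcal M_e$ is precisely the singular locus of the functional space $\mathcal M_r$, so no extra argument is needed. I do not anticipate a genuine obstacle here — the entire content has been front-loaded into Theorem~\ref{thm:differential_image}. The only point requiring a little care is being precise about which tangent space the differential $d\ell$ is being tested against (ambient $\RR^{d_h\times d_0}$ versus $T_{\W}\mathcal M_e$) and making sure the inclusion $T_{\W}\mathcal M_e \subseteq \mathrm{Im}\, d\mu_{\bm d}(\theta)$ from Theorem~\ref{thm:differential_image} applies in the stated generality (it is asserted there ``in both situations'' whenever $\rk(\W) = e < r$, and the boundary case $e = r$ is covered by the maximal-rank statement plus Proposition~\ref{prop:submersion}, where $T_{\W}\mathcal M_r$ equals the full image). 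Assembling these observations gives the result.
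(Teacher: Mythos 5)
Your proposal is correct and follows essentially the same route as the paper: the paper's proof likewise invokes the "furthermore" clause of Theorem~\ref{thm:differential_image} (that $\mathrm{Im}\,d\mu_{\bm d}(\theta) \supseteq T_{\W}\mathcal M_e$) and concludes via the chain rule that $d\ell(\W)$ vanishes on $T_{\W}\mathcal M_e$. Your extra remarks on the smoothness of $\W$ in $\mathcal M_e$ and the boundary case $e = r$ (which is in fact covered directly by Proposition~\ref{prop:tangentContainment} in the appendix) are fine but not points the paper needs to elaborate.
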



Note that if $d_h =1$, then either $\W = 0$ or $\rk(\W) = 1$, and in the latter case Proposition~\ref{prop:restriction} implies that $\W \in Crit(\ell|_{\RR^{d_0} \setminus \{0\}})$. If $\ell$ is convex, we immediately obtain that \emph{all critical points} (not just local minima, as in~\cite{laurent2017}) below a certain energy level are \emph{global minima}.

\begin{corollary} Assume that $\ell$ is a smooth convex function and that $d_h =1$. If $\theta \in Crit(L)$, then either $\W = \mu_{\bm d}(\theta) = 0$ or $\theta$ is  \emph{global minimum} for $L$.
\end{corollary}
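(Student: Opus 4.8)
The plan is to obtain this as a quick corollary of Proposition~\ref{prop:restriction} together with convexity of $\ell$, after a short preliminary observation about ranks. First I would note that since $d_h = 1$ the product $\W = \mu_{\bm d}(\theta)$ is a single $1 \times d_0$ row, so $\rk(\W) \in \{0,1\}$; moreover $d_h = 1$ is one of the $d_i$ and all widths are positive, so $r = \min\{d_i\} = 1 = \min\{d_h,d_0\}$, i.e. the architecture is filling. Hence $\mathcal M_r = \mathcal M_1 = \RR^{1 \times d_0}$ is the whole (smooth, convex) space of rows, $\mu_{\bm d}$ is surjective onto it, and in particular $\min_\theta L(\theta) = \min_{M \in \RR^{1 \times d_0}} \ell(M)$.

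Next I would fix $\theta \in Crit(L)$ and split on $\rk(\W)$. If $\rk(\W) = 0$ then $\W = 0$, which is the first alternative of the statement and nothing remains to be shown. If $\rk(\W) = 1$, set $e = 1 \le r$ and apply Proposition~\ref{prop:restriction} to get $\W \in Crit(\ell|_{\mathcal M_e}) = Crit(\ell|_{\mathcal M_1})$. Since a rank-one $1 \times d_0$ matrix is a smooth point of $\mathcal M_1$, and near it $\mathcal M_1$ coincides with an open subset of $\RR^{1 \times d_0}$, this reads simply as $d\ell(\W) = 0$ as a linear functional on $\RR^{1 \times d_0}$. Convexity of $\ell$ on $\RR^{1 \times d_0}$ then upgrades this to: $\W$ is a global minimizer of $\ell$. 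Combining with the surjectivity of $\mu_{\bm d}$ gives $L(\theta) = \ell(\W) = \min_M \ell(M) = \min_\theta L(\theta)$, so $\theta$ is a global minimum of $L$.

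Honestly there is no real obstacle here; the corollary is a clean specialization of the earlier machinery. The one point requiring a little care is the smooth-locus bookkeeping needed to invoke Proposition~\ref{prop:restriction} — namely that a nonzero $1 \times d_0$ matrix is a smooth point of $\mathcal M_1$ and that the intrinsic differential of $\ell|_{\mathcal M_1}$ there agrees with the ambient differential of $\ell$. This same bookkeeping is exactly why the case $\W = 0$ cannot be folded into the conclusion: there $e = 0$, the stratum $\mathcal M_0$ is a single point, the restricted differential vanishes for trivial reasons, and Proposition~\ref{prop:restriction} therefore imposes no constraint on $\ell$ at $\W$, so such a $\theta$ may fail to be a global minimum.
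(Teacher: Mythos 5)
Your proposal is correct and follows essentially the same route as the paper: the rank dichotomy $\rk(\W)\in\{0,1\}$ forced by $d_h=1$, Proposition~\ref{prop:restriction} giving $\W \in Crit(\ell|_{\mathcal M_1})$ (i.e.\ $d\ell(\W)=0$ at the nonzero, hence smooth, point $\W$), and convexity plus surjectivity of $\mu_{\bm d}$ onto $\RR^{1\times d_0}$ in this filling case to conclude that $\theta$ is a global minimum of $L$. Your closing remark about why the $\W=0$ case cannot be absorbed matches the paper's intent as well.
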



Proposition~\ref{prop:restriction} shows that critical points for $L$ such that $\rk(\W)<r$ correspond to critical points for $\ell$ restricted to a smaller determinantal variety. Using Lemma~\ref{lem:perturbation}, it is possible to show that these points are essentially always \emph{saddles} for $L$. 

\begin{restatable}{proposition}{spuriousRank}\label{prop:spurious_rank} Let $\theta \in Crit(L)$ be such that $\rk(\W)  < r$, and assume that $d \ell(\W)\ne 0$. Then, for any neighborhood $U$ of $\theta$, there exists $\theta'$ in $U$ such that $\mu_{\bm d}(\theta') = \W$ but $\theta'\not \in Crit(L)$. In particular, $\theta$ is a saddle point.
\end{restatable}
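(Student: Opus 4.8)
The second conclusion --- that $\theta$ is a saddle --- is immediate from Lemma~\ref{lem:perturbation} once the first one is in hand, so the whole task is to show: every neighborhood $U$ of $\theta$ contains a $\bm d$-factorization $\theta'$ of $\W$ with $dL(\theta')\neq 0$. The starting point is the chain rule: for every $\theta'$ in the fiber $\mu_{\bm d}^{-1}(\W)$ we have $dL(\theta')=d\ell(\W)\circ d\mu_{\bm d}(\theta')$, where $d\ell(\W)$ is a \emph{fixed} nonzero matrix, identical for all points of the fiber. Hence $\theta'\notin Crit(L)$ exactly when $d\ell(\W)$ is not orthogonal (in the trace inner product on $\RR^{d_h\times d_0}$) to the image of $d\mu_{\bm d}(\theta')$. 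Now $\theta\in Crit(L)$ and $\rk(\W)=e<r$, so by Proposition~\ref{prop:restriction} we have $\W\in Crit(\ell|_{\mathcal M_e})$, \ie\ $d\ell(\W)\perp T_{\W}\mathcal M_e$; in matrix terms $Col(d\ell(\W))\perp Col(\W)$ and $Row(d\ell(\W))\perp Row(\W)$. A top singular pair of $d\ell(\W)$ therefore supplies unit vectors $\bar u,\bar v$ with $\bar u^{\top}d\ell(\W)\bar v\neq 0$ and $\bar u\perp Col(\W)$, $\bar v\perp Row(\W)$. Since the image of $d\mu_{\bm d}(\theta')$ contains $Col(W'_{>1})\otimes\RR^{d_0}$ by~\eqref{eq:image_differential}, it suffices to produce, arbitrarily close to $\theta$, a $\bm d$-factorization $\theta'$ of $\W$ with $\bar u\in Col(W'_{>1})$: then $\bar u\bar v^{\top}$ lies in the image and pairs nontrivially with $d\ell(\W)$. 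Symmetrically, $\bar v\in Row(W'_{<h})$ would do as well.

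The mechanism making this possible is the slack encoded by $\rk(\W)=e<r=\min_i d_i$; concretely one tilts $Col(W'_{>1})$ toward $\bar u$ by a small rank-one perturbation of an outer layer, of the form $W'_h=W_h+t\,\bar u\,b^{\top}$ with $b\in Col(W_{<h})^{\perp}$, which preserves the product $W'_hW_{<h}=\W$ while adding $\bar u$ to the column space whenever the new column is genuinely new. A clean way to organize this is to note that $dL$ restricted to the fiber $F=\mu_{\bm d}^{-1}(\W)$ is a \emph{polynomial} map (again because $d\ell(\W)$ is constant on $F$), so $F\cap Crit(L)$ is a Zariski-closed subvariety of the algebraic variety $F$; it is then enough to show that some irreducible component of $F$ through $\theta$ is not contained in $Crit(L)$, and non-critical points of $L$ will accumulate at $\theta$. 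On the generic factorization of a fixed combinatorial type the subspaces $Col(W_{>i})$, $Row(W_{<i})$ appearing in~\eqref{eq:image_differential} are as large as the type permits and in general position, so the differential image is not contained in the hyperplane $d\ell(\W)^{\perp}$ --- hence such a generic point is non-critical. Because $e<r$, the fiber always possesses a component that is ``non-aligned'' at layer $1$ or layer $h$ (generically $\rk(W_{>1})>e$ or $\rk(W_{<h})>e$); one exhibits it explicitly, \eg\ the factorization with $W_1$ of rank $e$ and $W_{h}\cdots W_2$ of rank $e+1\le\min(d_1,\dots,d_h)$ and column space containing $\bar u$.

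The hard part is to guarantee that such a non-aligned component actually passes through the given $\theta$ --- which may sit at an intersection of several components of the (in general reducible) variety $F$, some of which could be entirely contained in $Crit(L)$. I would lean on the stratified description of $F$ underlying Theorem~\ref{thm:fiberMMMcomponents}, which applies since $\rk(\W)<r$, to rule this out; an alternative that sidesteps the global geometry is induction on the depth $h$ (base case $h=1$ vacuous, since there $dL(\theta)=d\ell(\W)\neq 0$): peel off the outer layer and split on whether $W_h^{\top}d\ell(\W)=0$. If it is nonzero, $(W_{h-1},\dots,W_1)$ is a critical point with nonvanishing differential for the depth-$(h-1)$ problem with loss $\W'\mapsto\ell(W_h\W')$, and the inductive hypothesis produces the required nearby factorization; if it is zero, then $\bar u\perp Col(W_h)\supseteq Col(W_{>1})$, and the rank-one perturbation of $W_h$ above goes through with the extra room this provides. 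In all cases, once such $\theta'$ have been found in every neighborhood of $\theta$, Lemma~\ref{lem:perturbation} yields that $\theta$ is a saddle.
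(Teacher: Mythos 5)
Your setup is the right one (and matches the paper's starting point): $dL(\theta')=d\ell(\W)\circ d\mu_{\bm d}(\theta')$ with $d\ell(\W)$ fixed and nonzero on the fiber, so everything reduces to exhibiting, arbitrarily close to $\theta$, a $\bm d$-factorization of $\W$ whose differential image is not annihilated by $d\ell(\W)$; the final appeal to Lemma~\ref{lem:perturbation} is also exactly how the paper concludes. But none of your three routes actually establishes that existence statement. (i) The single perturbation $W_h'=W_h+t\,\bar u\,b^{\top}$ needs $b\in Col(W_{<h})^{\perp}$ with $b^{\top}W_{h-1}\cdots W_2\neq 0$, \ie\ $\rk(W_{h-1}\cdots W_2)>\rk(W_{<h})$; this fails, for instance, whenever $W_1$ is surjective, and you flag the issue (``whenever the new column is genuinely new'') without resolving it. (ii) In the Zariski route you yourself name the missing step: that a component of the fiber through the \emph{given} $\theta$ is not contained in $Crit(L)$. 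Path-connectedness of the fiber (Theorem~\ref{thm:fiberMMMcomponents}) does not give this --- a connected fiber can still contain a full Euclidean neighborhood of $\theta$ inside $Crit(L)$ --- so this route is not closed. (iii) The induction on depth is unsound in the branch $W_h^{\top}d\ell(\W)\neq 0$: the inner matrix $\W'=W_{h-1}\cdots W_1$ may well have $\rk(\W')=r'=\min\{d_0,\ldots,d_{h-1}\}$, where the inductive hypothesis does not apply and, worse, where its conclusion is false: by Theorem~\ref{thm:differential_image} and Proposition~\ref{prop:submersion}, criticality of the inner tuple forces $\W'\in Crit(\tilde\ell|_{\mathcal M_{r'}})$ and then \emph{every} factorization of $\W'$ is critical for the inner loss, so no perturbation that freezes $W_h$ can ever work. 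In the branch $W_h^{\top}d\ell(\W)=0$ you still need the same vector $b$ as in (i), and the vanishing of $W_h^{\top}d\ell(\W)$ does not provide it.

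Here is a concrete instance of the bad branch. Take $h=3$, $(d_3,d_2,d_1,d_0)=(3,3,2,3)$, so $r=2$, and $W_1=\left[\begin{smallmatrix}1&0&0\\0&1&0\end{smallmatrix}\right]$, $W_2=\left[\begin{smallmatrix}1&0\\0&1\\0&0\end{smallmatrix}\right]$, $W_3=\left[\begin{smallmatrix}1&0&0\\0&0&1\\0&0&0\end{smallmatrix}\right]$, and $\ell(W)=\langle e_2e_3^{\top},W\rangle$ (standard basis vectors $e_i$), so $d\ell(\W)=e_2e_3^{\top}\neq 0$. One checks $\theta\in Crit(L)$, $\rk(\W)=1<2$, and $W_3^{\top}d\ell(\W)=e_3e_3^{\top}\neq 0$, so you are in the ``nonzero'' branch; yet $\rk(W_2W_1)=2=r'$, every factorization $(W_2',W_1')$ of $W_2W_1$ keeps $(W_3,W_2',W_1')$ critical, and every product-preserving perturbation of $W_3$ alone (necessarily of the form $W_3+u\,e_3^{\top}$) or of $W_2$ alone also stays critical. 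The only way out is to perturb the \emph{bottom} layer, $W_1\mapsto W_1+t\,e_2v^{\top}$ with $v_3\neq 0$, which indeed makes the $W_3$-partial nonzero. This is precisely why the paper's proof is organized as an iteration over all layers: it uses product-preserving rank-one updates $W_{j}\mapsto W_j+\epsilon(v\otimes w)$ with $w$ in the kernel of the adjacent partial product, and whenever \emph{all} such updates at a given layer remain critical it deduces that $d\ell(\W)$ annihilates a strictly larger block ($Col(W_{>j})\otimes\RR^{d_0}$, resp.\ $\RR^{d_h}\otimes Row(W_{<j})$), moving up in the filling case and, after splitting at a layer with $d_i=r$, also down in the non-filling case, until the only alternative is $d\ell(\W)=0$, contradicting the hypothesis. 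Your singular-pair pairing $\bar u^{\top}d\ell(\W)\bar v\neq 0$ is compatible with that scheme, but as written the existence of the required nearby non-critical factorization is not proved.
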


\begin{restatable}{proposition}{landscapeSummary}\label{prop:landscape_summary} Let $\ell$ be any smooth convex function, and let $L = \ell\circ \mu_{\bm d}$. If $\theta$ is a non-global local minimum for $L$, then necessarily $\rk(\W) = r$ (so $\theta$ is a \emph{pure} critical point). In particular, $L$ has non-global minima if and only if $\ell|_{\mathcal M_r}$ has non-global minima.
\end{restatable}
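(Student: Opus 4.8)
The plan is to combine Proposition~\ref{prop:spurious_rank} with Proposition~\ref{prop:submersion}, splitting on the rank of $\W = \mu_{\bm d}(\theta)$. Let $\theta$ be a local minimum for $L$, and set $e = \rk(\W)$. We always have $e \le r$. First I would dispose of the case $e < r$. By Proposition~\ref{prop:restriction}, $\W \in Crit(\ell|_{\mathcal M_e})$; since $\ell$ is convex and $\mathcal M_e$ contains $0$ (indeed $\mathcal M_e$ is a cone), the only critical points of $\ell|_{\mathcal M_e}$ would have to be minima of $\ell$ over $\mathcal M_e$. The cleaner route, though, is: either $d\ell(\W) = 0$, in which case $\W$ is the unconstrained global minimizer of the convex function $\ell$ on all of $\RR^{d_h \times d_0}$ and hence $\theta$ is a global minimum for $L$ (contradicting ``non-global''); or $d\ell(\W) \ne 0$, in which case Proposition~\ref{prop:spurious_rank} applies and tells us $\theta$ is a saddle, contradicting that it is a local minimum. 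Either way, $e < r$ is impossible for a non-global local minimum, so $\rk(\W) = r$.

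Next, granted $\rk(\W) = r$, Theorem~\ref{thm:differential_image} (non-filling case, or the filling case where $r = \min\{d_h,d_0\}$) guarantees that $d\mu_{\bm d}(\theta)$ has maximal rank $\dim \mathcal M_r$, i.e. $\mu_{\bm d}$ is a local submersion onto $\mathcal M_r$ at $\theta$. Hence $\theta$ is a pure critical point by definition, and by Proposition~\ref{prop:submersion} the local behavior of $L$ at $\theta$ matches the local behavior of $\ell|_{\mathcal M_r}$ at $\W$: in particular $\theta$ is a local minimum for $L$ if and only if $\W$ is a local minimum for $\ell|_{\mathcal M_r}$, and since $\mu_{\bm d}$ is surjective onto $\mathcal M_r$, $\theta$ is global for $L$ iff $\W$ is global for $\ell|_{\mathcal M_r}$. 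Therefore a non-global local minimum of $L$ exists precisely when a non-global local minimum of $\ell|_{\mathcal M_r}$ exists; conversely, given any non-global local minimum $\W$ of $\ell|_{\mathcal M_r}$, pick any $\bm d$-factorization $\theta$ of $\W$ (which exists since $\mathcal M_r$ is exactly the image of $\mu_{\bm d}$) — since $\rk(\W) = r$ this $\theta$ is a local submersion point, so by Proposition~\ref{prop:submersion} it is a non-global local minimum of $L$. This establishes both assertions.

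The main obstacle is entirely in the first paragraph: ruling out $\rk(\W) < r$. One must be careful that Proposition~\ref{prop:spurious_rank} has the hypothesis $d\ell(\W) \ne 0$, so the argument genuinely has to branch on whether $d\ell(\W)$ vanishes, and in the vanishing branch one uses convexity of $\ell$ to conclude $\W$ is a global minimizer of the ambient (unconstrained) problem — hence a fortiori a global minimizer of $\ell|_{\mathcal M_r}$ and of $L$. The second paragraph is essentially bookkeeping with Theorem~\ref{thm:differential_image} and Proposition~\ref{prop:submersion}, plus the observation that $\mathrm{Im}(\mu_{\bm d}) = \mathcal M_r$ so that "global over $\mathcal M_r$" and "global over the $\theta$-space" coincide.
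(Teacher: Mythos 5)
Your first paragraph (ruling out $\rk(\W)<r$ by branching on whether $d\ell(\W)$ vanishes, using convexity in one branch and Proposition~\ref{prop:spurious_rank} in the other) and your forward direction for the ``in particular'' claim (rank $r$ $\Rightarrow$ submersion by Theorem~\ref{thm:differential_image} $\Rightarrow$ transfer of local/global minimality by Proposition~\ref{prop:submersion}) are correct and essentially identical to the paper's argument.

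The gap is in your converse direction. Given a non-global local minimum $\W$ of $\ell|_{\mathcal M_r}$, you pick a factorization $\theta$ and write ``since $\rk(\W)=r$ this $\theta$ is a local submersion point'' --- but nothing you have proved tells you that a non-global local minimum of $\ell|_{\mathcal M_r}$ has rank exactly $r$; your first paragraph establishes this only for local minima of $L$ in parameter space, and transporting it to function space is precisely what is at stake here. The claim is in fact true, but it needs an argument: if $e=\rk(\W)<r$, then for every matrix $N$ with $\rk(N)\le r-e$ the entire line $\W+tN$ lies in $\mathcal M_r$, so local minimality of $\W$ forces $d\ell(\W)(N)=0$ for all rank-one $N$, hence $d\ell(\W)=0$, and convexity then makes $\W$ a global minimum, a contradiction. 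Alternatively (and this is how the paper proceeds) you can bypass rank considerations entirely: if $\W$ is a local minimum of $\ell|_{\mathcal M_r}$, then \emph{any} $\theta\in\mu_{\bm d}^{-1}(\W)$ is a local minimum of $L$, simply because $\mu_{\bm d}$ is continuous with image in $\mathcal M_r$, so $L(\theta')=\ell(\mu_{\bm d}(\theta'))\ge\ell(\W)=L(\theta)$ for all $\theta'$ near $\theta$; non-globality transfers because $\mathrm{Im}(\mu_{\bm d})=\mathcal M_r$ gives $\inf L=\inf_{\mathcal M_r}\ell$. With either patch your proof is complete; as written, the converse step is unsupported.
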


This statement succinctly explains many known facts on the landscape of linear networks. For example, we recover the main result from~\citep{laurent2017}, which states that when $\ell$ is a smooth convex function and $\mu_{\bm d}$ is filling ($r = \min\{d_h,d_0\}$), then all local minima for $L$ are global minima: indeed, this is because $\mathcal M_r = \RR^{d_h\times d_0}$ is a linear space, so $\ell|_{\mathcal M_r}$ does not have non-global minima. 
On the other hand, when $\mu_{\bm d}$ is not filling, the functional space is not convex, and multiple local minima may exist even when $\ell$ is a convex function. We will in fact present many examples of smooth convex functions $\ell$ such that $L = \ell\circ \mu_{\bm d}$ has non-global local minima (see Figure~\ref{fig:experiments}).
In the special case that $\ell$ is a quadratic loss (for any data distribution), then it is a remarkable fact that there are no non-global local minima even when $\mu_{\bm d}$ is not filling~\citep{baldi1989,kawaguchi2016}. In the next section, we will provide an intrinsic geometric justification for this property. 

\begin{remark}
In~\cite{laurent2017}, the authors observe that their ``structural hypothesis'' (\ie, for us, the fact that the network is filling) is a necessary assumption for their main result, as otherwise critical points of $\ell$ might not lie in the functional space of the network. This last observation however does not imply the necessity of the filling assumption, and indeed in the case of the quadratic loss there are no local bad minima despite the fact $\mathcal M_r \subsetneq \RR^{d_h \times d_0}$. 
\end{remark}



Finally, we conclude this section by pointing out that although the pure critical locus is determined by the geometry of the functional space, the ``lift''  from function space to parameter space is not completely trivial. In particular, there is always a large positive-dimensional set of critical parameters associated with a critical linear function $\W$ (all possible $\bm d$-factorizations of $\W$).
More interestingly, this set may be topologically disconnected into a large number of components that are all functionally equivalent (see Theorem~\ref{thm:fiberMMMcomponents}).
This observation agrees with the folklore knowledge that neural networks can have \emph{many disconnected valleys} where the loss function achieves the same value.

\subsection{The quadratic loss}
\label{ssec:quadLoss}

We now assume that $\ell: \RR^{d_y \times d_x} \rightarrow \RR$ is of the form
$\ell(\overline W) = \|\overline WX - Y\|^2$,
where $X \in \RR^{d_x \times s}$ and $Y \in \RR^{d_y \times s}$ are fixed data matrices. As mentioned above, it is known that $L = \ell\circ\mu_{\bm d}$ has no non-global local minima, even when $\mu_{\bm d}$ is non-filling ~\citep{baldi1989,kawaguchi2016}.
In this section, we discuss the intrinsic geometric reasons for this special behavior.

It is easy to relate the landscape of $L$ with the \emph{Euclidean distance function from a determinantal variety} (or, equivalently, to the problem \emph{low-rank matrix approximation}). Indeed, we know from Proposition~\ref{prop:landscape_summary} that $L$ has non-global local minima if and only if the same is true for $\ell|_{\mathcal M_r}$.
{
Furthermore, assuming that $XX^T$ has full rank, we use its square root $P = (XX^T)^{\nicefrac{1}{2}}$ as a positive definite matrix to derive
  \begin{equation*}
\begin{aligned}
\|WX - Y\|^2 &= \langle WX,WX\rangle_F -
2 \langle WX,Y\rangle_F + \langle Y,Y\rangle_F\\
& = \langle WP,WP \rangle_F - 2\langle WP, YX^T P^{-1} \rangle_F  + \langle Y,Y\rangle_F \\
&= \|WP - Q_0\|^2 + const.,
\end{aligned}
\end{equation*}
where $Q_0 = YX^T P^{-1}$ and \emph{``const.''} only depends on the data matrices $X$ and $Y$.
Hence, minimizing $\ell(W)$  is equivalent to minimizing
$\|WP - Q_0\|^2$.
Since the bijection $\mathcal{M}_r \to \mathcal{M}_r, W \mapsto WP$
is also a bijective on the tangent spaces, 
it  provides a one-to-one correspondence from  the critical points of 
$\min_{W \in \mathcal{M}_r} \|WP - Q_0\|^2$ to the critical points of 
$\min_{W \in \mathcal{M}_r} \|W - Q_0\|^2$.
All in all, studying the critical points of $\ell |_{\mathcal{M}_r}$ is equivalent 
to studying the critical points of the function
$h_{Q_0}(W) = \|W - Q_0\|^2$ where $W$ is restricted to the determinantal variety 
$\mathcal{M}_r$. }

The function $h_{Q_0}(W)$ is described by following generalization of the classical Eckart-Young Theorem. The formulation we prove is an extension of Example 2.3 in~\cite{draisma2014} and Theorem~2.9 in~\cite{ottaviani2013}.
We consider a fixed matrix $Q_0 \in \RR^{d_y \times d_x}$ and a singular value decomposition (SVD) $Q_0 = U \Sigma V^T$, where we assume $\Sigma \in \RR^{d_y \times d_x}$ has decreasing diagonal entries $\sigma_1,\ldots,\sigma_m$, with $m = \min(d_y,d_x)$. For any $\mathcal I \subset \{1,2,\ldots,m\}$ we write $\Sigma_{\mathcal I} \in \RR^{d_y \times d_x}$ for the
diagonal matrix with entries 
$\sigma_{\mathcal I,1},  \ldots, \sigma_{\mathcal I,m}$
where $\sigma_{\mathcal I,i} = \sigma_i$ if $i \in \mathcal{I}$ and $\sigma_{\mathcal I,i}=0$ otherwise.

\begin{restatable}{theorem}{EckartYoung}
\label{thm:EY}
If the singular values of $Q_0$ are pairwise distinct and positive,
 $h_{Q_0} |_{\mathcal{M}_r}$
has exactly $\binom{m}{r}$ critical points, namely the matrices $Q_{\mathcal{I}} = U \Sigma_\mathcal{I} V^T$ with $\# (\mathcal I) = r$.
Moreover, its unique
local and global minimum is $Q_{\{1,\ldots,r\}}$.
More precisely, the {index} of $Q_\mathcal{I}$ as a critical point of $h_{Q_0} |_{\mathcal{M}_r}$ (\ie, the number of negative eigenvalues of the Hessian matrix for any local parameterization) is
\[
\mathrm{index}(Q_\mathcal{I}) = 
\# \{(j,i) \in \mathcal{I} \times \mathcal{I}^c \,\, | \,\, j> i\},
\quad\quad \text{where $\mathcal I^c = \{1,\ldots,m\} \setminus \mathcal I$.}
\]

\end{restatable}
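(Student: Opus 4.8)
The plan is to parameterize the determinantal variety near a critical point, compute the first-order condition to identify the critical points, and then compute the Hessian explicitly in SVD-adapted coordinates to read off the index.

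\textbf{Step 1: First-order conditions.} First I would observe that $h_{Q_0}(W) = \|W - Q_0\|^2$ is a smooth function on all of $\RR^{d_y \times d_x}$, so for a smooth point $W \in \mathcal M_r$ (i.e.\ $\rk(W) = r$) to be critical for $h_{Q_0}|_{\mathcal M_r}$, we need $\nabla h_{Q_0}(W) = 2(W - Q_0) \perp T_W \mathcal M_r$. Using the standard description of the tangent space to the determinantal variety, $T_W \mathcal M_r = \{\dot W \mid \dot W (\ker W) \subseteq \mathrm{Col}(W)\}$ (equivalently, writing $W = U_1 \Sigma_1 V_1^T$ in reduced SVD form, $T_W \mathcal M_r = \mathrm{Col}(W) \otimes \RR^{d_x} + \RR^{d_y} \otimes \mathrm{Row}(W)$), the orthogonality condition forces $W - Q_0$ to have column space orthogonal to $\mathrm{Col}(W)$ and row space orthogonal to $\mathrm{Row}(W)$. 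A short computation shows this implies $W$ is obtained from $Q_0$ by keeping an $r$-subset of its singular components: $W = Q_{\mathcal I} = U \Sigma_{\mathcal I} V^T$ with $\#\mathcal I = r$. When the $\sigma_i$ are distinct and positive these are exactly $\binom{m}{r}$ points, all of rank exactly $r$, hence all smooth points of $\mathcal M_r$; this also shows $h_{Q_0}|_{\mathcal M_r}$ has no critical points in the singular locus $\mathcal M_{r-1}$ under this genericity assumption (one can check directly that $\mathcal M_{r-1}$ contains no local minima, or invoke Proposition~\ref{prop:restriction}-type reasoning), so the count is complete.

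\textbf{Step 2: Local parameterization and Hessian at $Q_{\mathcal I}$.} Fix $\mathcal I$ and set $W_* = Q_{\mathcal I}$. After absorbing the orthogonal matrices $U, V$ (which act isometrically on $\RR^{d_y \times d_x}$ and preserve $\mathcal M_r$), I may assume $Q_0 = \Sigma$ is diagonal and $W_* = \Sigma_{\mathcal I}$. Now I parameterize $\mathcal M_r$ near $W_*$: split the index set $\{1,\ldots,m\}$ into $\mathcal I$ and $\mathcal I^c$, and write a nearby rank-$r$ matrix in block form adapted to the splitting $\RR^{d_y} = \RR^{\mathcal I} \oplus (\RR^{d_y} \ominus \RR^{\mathcal I})$ and similarly for $\RR^{d_x}$. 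Concretely, near $W_*$ every rank-$r$ matrix can be written as
\[
W = \begin{pmatrix} A & B \\ C & C A^{-1} B \end{pmatrix}
\]
in suitable coordinates where $A$ is the invertible $r \times r$ block (a standard chart on $\mathcal M_r$), with $A$ near $\mathrm{diag}(\sigma_i)_{i \in \mathcal I}$ and $B, C$ near $0$. Substituting into $\|W - \Sigma\|^2$, the $A$-block contributes $\|A - \mathrm{diag}(\sigma_i)_{i\in\mathcal I}\|^2$, which is a positive-definite quadratic in the $A$-perturbation (no negative directions); the off-diagonal blocks $B, C$ involving one index in $\mathcal I$ and one in $\mathcal I^c$ contribute terms coupling $b_{ij}$ and $c_{ji}$; and the lower-right block contributes $\|CA^{-1}B - \Sigma_{\mathcal I^c, \mathcal I^c}\text{-part}\|^2$, which to second order is $\sum_{i \in \mathcal I^c}\sigma_i^2$ plus higher order — but crucially $CA^{-1}B$ is quadratic in $(B,C)$, so a $-2\langle CA^{-1}B, \Sigma\rangle$ cross term is itself quadratic in the perturbation and contributes to the Hessian.

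\textbf{Step 3: Diagonalizing the Hessian, counting negative directions.} The Hessian then block-diagonalizes over pairs $(j,i)$ with $j \in \mathcal I$, $i \in \mathcal I^c$: the relevant $2\times 2$ form in the variables $(b_{ji}, c_{ij})$ (the $(j,i)$ entry of $B$ and the $(i,j)$ entry of $C$, or the transposed convention) works out to something proportional to $\sigma_j^2 b^2 + \sigma_j^2 c^2 - 2\sigma_i \sigma_j\, bc$ (the cross term coming from expanding the lower-right block against $\Sigma$), up to normalization; its determinant has sign $\mathrm{sign}(\sigma_j^4 - \sigma_i^2\sigma_j^2) = \mathrm{sign}(\sigma_j^2 - \sigma_i^2) = \mathrm{sign}(\sigma_j - \sigma_i)$ since $\sigma$'s are positive. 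Hence each pair $(j,i)$ with $j \in \mathcal I$, $i \in \mathcal I^c$ contributes exactly one negative eigenvalue when $\sigma_j < \sigma_i$, i.e.\ when $j > i$ (indices ordered by decreasing $\sigma$), and zero negative eigenvalues otherwise; all other directions ($A$-block, and pairs with both indices in $\mathcal I$ or both in $\mathcal I^c$ that don't survive the rank constraint) are positive or absent. This yields exactly $\mathrm{index}(Q_{\mathcal I}) = \#\{(j,i) \in \mathcal I \times \mathcal I^c : j > i\}$. In particular this index is $0$ iff $\mathcal I = \{1,\ldots,r\}$, giving the unique local (hence global) minimum, and positive otherwise, so there are no other local minima.

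\textbf{Main obstacle.} The delicate part is Step 2--3: choosing the local parameterization of $\mathcal M_r$ so that the Hessian computation is tractable and correctly accounting for the second-order contribution of the nonlinear lower-right block $CA^{-1}B$ against the constant matrix $\Sigma$ — it is easy to miss that this nominally ``higher order'' term contributes to the Hessian precisely because $W_*$ is not the zero matrix on that block. Getting the $2\times 2$ forms and their signs exactly right (and verifying that the cross-index pairs fully decouple from each other and from the $A$-block) is where the real work lies; I would double-check it against the smallest cases, e.g.\ $m = 2$, $r = 1$, where the two critical points are $\mathrm{diag}(\sigma_1, 0)$ (index $0$) and $\mathrm{diag}(0,\sigma_2)$ (index $1$).
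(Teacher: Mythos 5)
Your proposal is correct, and it reaches the index formula by a genuinely different route than the paper. Step 1 (identifying the critical points via $Q_0-W\perp T_W\mathcal{M}_r$, i.e.\ orthogonality of column and row spaces, and then summing the two SVDs to get an SVD of $Q_0$) is the same argument as the paper's. For the index, however, the paper does not use an intrinsic chart: it works with the overparameterization $W=AB$ through the matrix multiplication map $\mu_{(d_y,r,d_x)}$, invokes Theorem~\ref{thm:differential_image} (submersion at rank-$r$ points) to argue that the negative-eigenvalue count of the Hessian of $L=h_\Sigma\circ\mu_{(d_y,r,d_x)}$ at a chosen factorization equals the index on $\mathcal{M}_r$, and then computes the full characteristic polynomial via Schur complements, counting negative roots of quadratic factors $t^2-2t(\sigma_j^2+1)+4(\sigma_j^2-\sigma_i^2)$. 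Your graph chart $W=\left(\begin{smallmatrix} A & B \\ C & CA^{-1}B\end{smallmatrix}\right)$ has exactly $\dim\mathcal{M}_r$ coordinates, so no submersion lemma is needed, and your expansion is right: the only second-order cross terms come from $-2\langle \dot C\,\Sigma_{\mathcal I,\mathcal I}^{-1}\dot B,\Sigma_{\mathcal I^c}\rangle$, which couples $\dot B_{ji}$ only with $\dot C_{ij}$ for the same pair $(j,i)\in\mathcal I\times\mathcal I^c$, giving the $2\times 2$ blocks $\sigma_j^2(b^2+c^2)-2\sigma_i\sigma_j\,bc$ with one negative eigenvalue exactly when $\sigma_j<\sigma_i$, i.e.\ $j>i$; the $\dot A$-block and the unpaired $B,C$ entries are positive definite, so the decoupling you flag as the delicate point does hold. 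The trade-off: your route is more elementary and intrinsic for the index, while the paper's factorization approach reuses its differential machinery and, for the ``unique local minimum'' claim, uses an explicit rotation-path argument (Theorem~\ref{thm:EY1}) that works even when singular values coincide or vanish, whereas your deduction (positive index $\Rightarrow$ not a local minimum, plus existence of a global minimizer on the closed set $\mathcal{M}_r$ by coercivity) needs the distinctness hypothesis and the small supplementary check, which you correctly flag, that no matrix of rank below $r$ is a local minimum (perturb along a singular direction of $Q_0-W$). Spelling out that check and the SVD computation in Step 1 would complete the proof.
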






\begin{figure}[t]
    \centering
    \raisebox{-.5\height}{\includegraphics[width=0.25\textwidth]{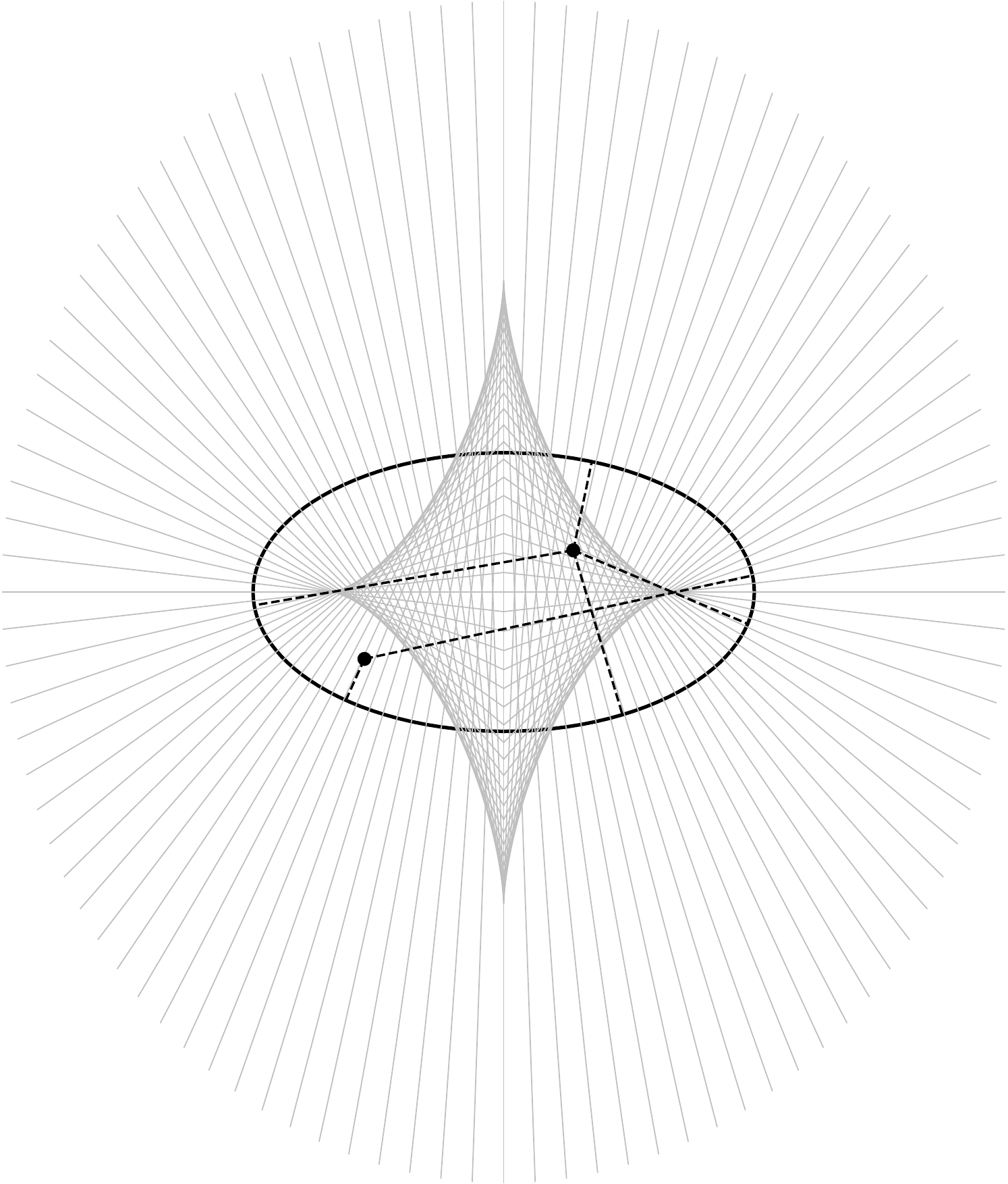}} \hspace{2cm}
    \raisebox{-.5\height}{\includegraphics[width=0.22\textwidth]{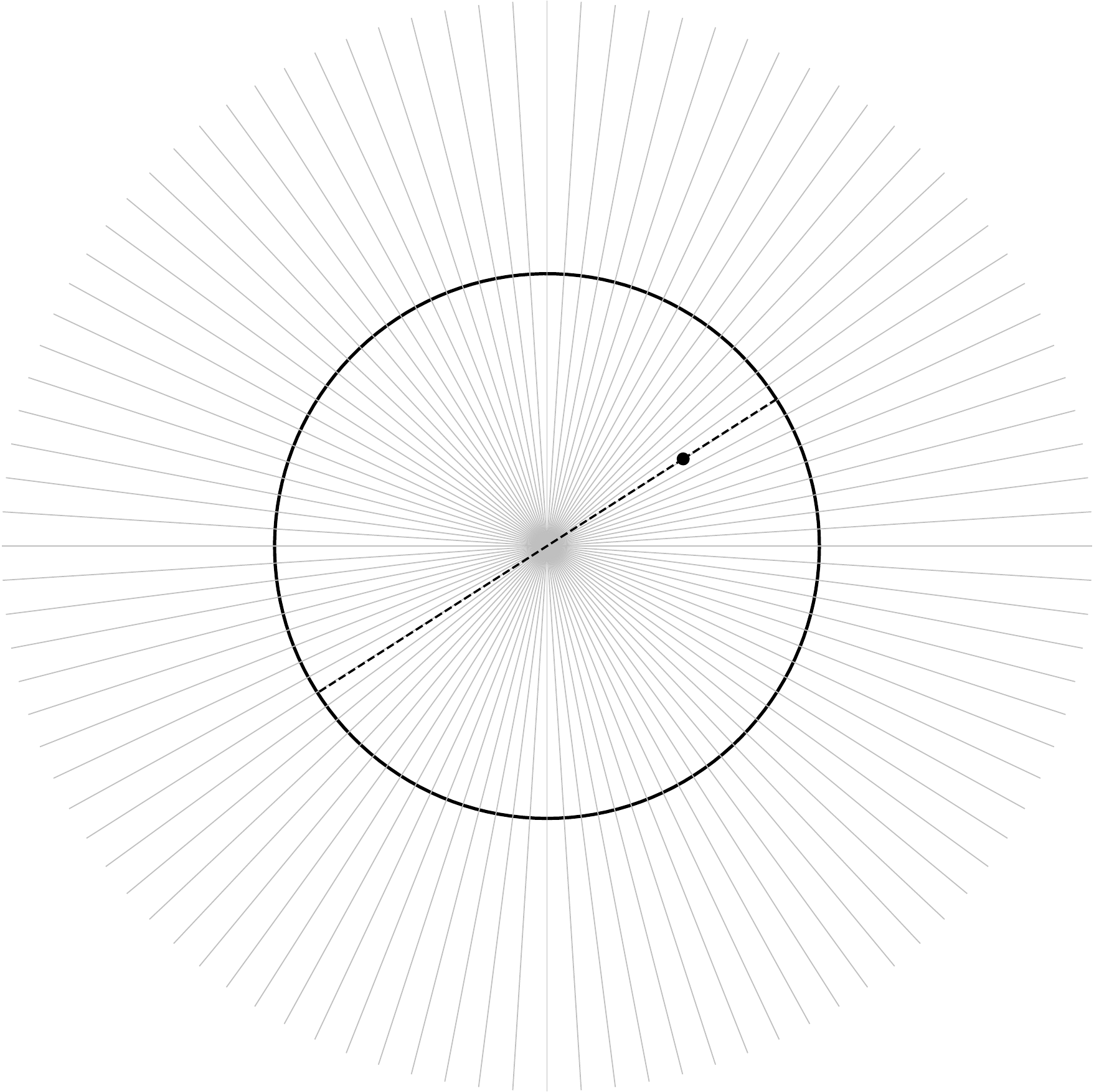}}
    \caption{\emph{Left:} If $\mathcal V \subset \RR^2$ is an ellipse, the distance function $h_u(p) = \|p - u\|^2$ restricted to $\mathcal V$ generally has $2$ or $4$ real critical points, depending on whether $u$ lies inside or outside the diamond-shaped region bounded by the caustic curve. \emph{Right:} If $\mathcal V \subset \RR^2$ is a circle, then the caustic curve degenerates to a point and the distance function generically has always $2$ real critical points.}
    \label{fig:caustics}
\end{figure}

In the appendix we present a more general version of this statement without the assumption that the singular values of $Q_0$ are pairwise distinct and positive.
The surprising aspect of this result is that the structure of the critical points is the same for almost \emph{all} choices of $Q_0$. We want to emphasize that this is a special behavior of determinantal varieties with respect to the Euclidean distance, and the situation changes drastically if we apply even infinitesimal changes to the quadratic loss function. More precisely, any linear perturbation of the Euclidean norm will result in a totally different landscape, as the following example shows (more details are given in Appendix~\ref{sec:edd}).
\begin{example}
\label{ex:edd}
Let us consider the variety $\mathcal{M}_1 \subseteq \mathbb{R}^{3 \times 3}$ of rank-one $(3 \times 3)$-matrices.
By Theorem~\ref{thm:EY}, for almost all $Q_0$, the function $h_{Q_0} |_{\mathcal{M}_1}$ has three (real) critical points.
Applying a linear change of coordinates to $\RR^{d_y \times d_x} \cong \RR^{d_y d_x}$
yields a different quadratic loss $\tilde{h}_{Q_0}$. Using tools from algebraic geometry, it is possible to show that for almost all linear coordinate changes (an open dense set), the function $\tilde{h}_{Q_0} |_{\mathcal{M}_1}$ has 39 critical points over the complex numbers.\footnote{This means that the algebraic equations corresponding to the vanishing of the differential have exactly 39 complex solutions.}
The number of real critical points however varies, depending on whether $Q_0$ belongs to different open regions separated by a \emph{caustic hypersurface} in $\RR^{3\times 3}$.
Furthermore, the number of local minima varies as well; in particular, it is no longer true that all $Q_0$ admit a unique local minimum. 
Figure~\ref{fig:experiments} presents some simple computational experiments illustrating this behavior.
\end{example}
For all determinantal varieties, the situation is similar to the description in Example~\ref{ex:edd}. More generally, given an algebraic variety $\mathcal V \subset \RR^n$ and a point $u \in \RR^n$, the number of (real) critical points of the distance function $h_u(p) = \|p - u\|^2$ restricted to $\mathcal V$ is usually not constant as $u$ varies: the behavior changes when $u$ crosses the \emph{caustic hypersurface}, or \emph{ED (Euclidean distance) discriminant}, of~$\mathcal V$; see Figure~\ref{fig:caustics}. In the case of determinantal varieties with the standard Euclidean distance, this caustic hypersurface (more precisely its real locus) degenerates to a set of codimension $2$, which does not partition the space into different regions. This is analogous to the case of the circle in~Figure~\ref{fig:caustics}.

\begin{figure}[htbp]
    \centering
    \includegraphics[width=0.44\textwidth]{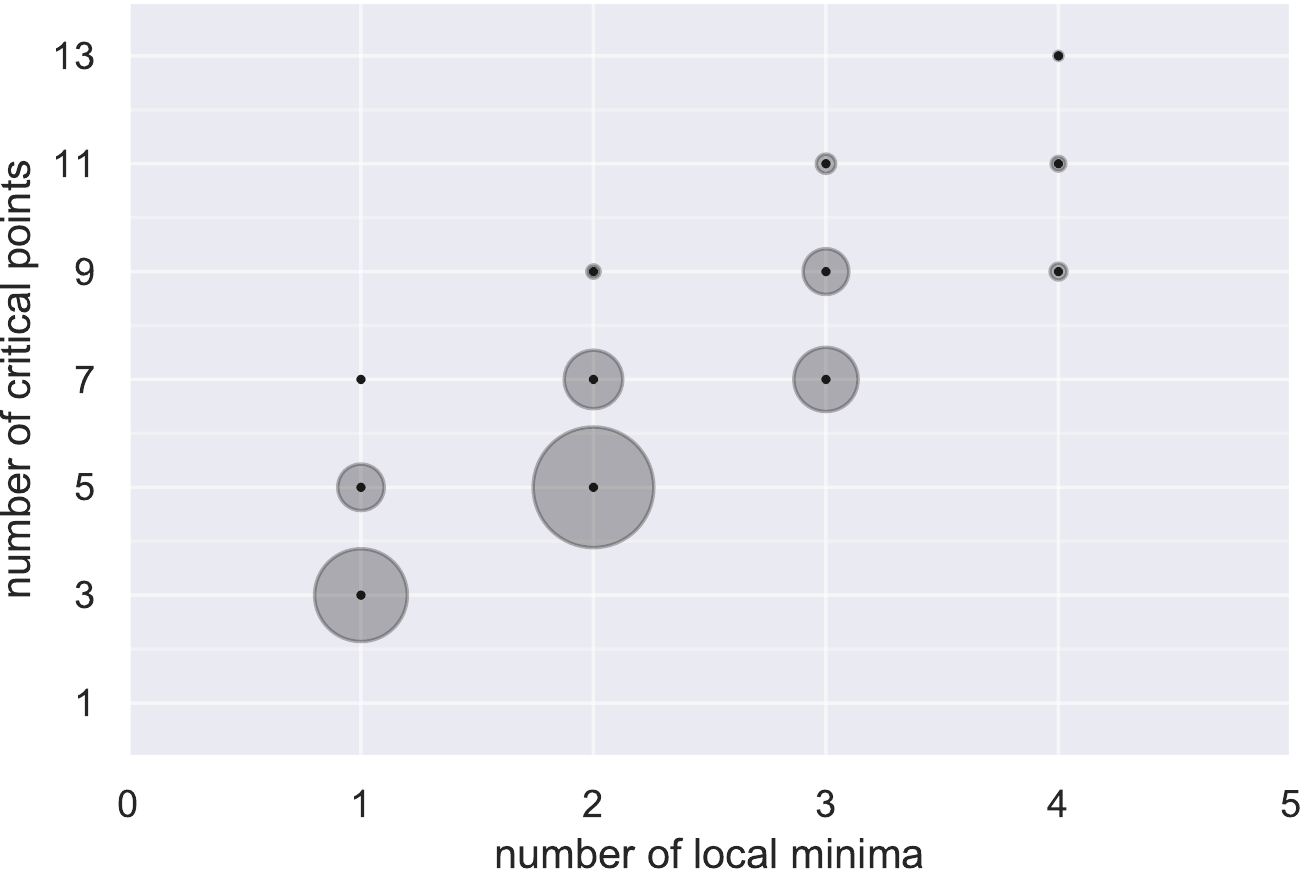}
    \vspace{-.2cm}
    \caption{Real critical points and local minima for random choices of $\tilde h_{Q_0} |_{\mathcal{M}_1}$ as defined in Example~\ref{ex:edd}. The size of each disk is proportional to the number of instances we found with that number of critical points and local minima. This shows that linear networks with a convex loss may indeed have multiple non-global local minima. More details in Appendix~\ref{sec:edd} (Table~\ref{tab:experiment} and Experiment~\ref{exp}).}
    \label{fig:experiments}
\end{figure}

\subsection{Using different parameterizations: normalized networks}

In the simple linear network model~\eqref{eq:linear_network}, the functional space $\mathcal M_r\subset
\RR^{d_h \times d_0}$ is parameterized using the
matrix multiplication map $\mu_{\bm d}$. On the other hand, one can envision
many variations of this model that are network architectures with the same functional space but parameterized differently. Examples include linear networks with skip connections, or convolutional linear networks. In this subsection, we take a look at a model for \emph{normalized
linear networks}: these are maps of the form
    \begin{equation}\label{eq:normalized_network}
        \Psi(\theta,x) = W_{h} \frac{W_{h-1}}{\|W_{h-1}\|} \ldots \frac{W_1}
        {\|W_1\|}x, \qquad \theta = (W_{h},\ldots,W_{1}),
    \end{equation}
where $W_i \in \RR^{d_i \times d_{i-1}}$ as before. This is a simple model for
different types of weight normalization schemes often used in practice. It is
easy to see that the difference between~\eqref{eq:normalized_network} and
our previous linear network lies only in the parameterization of linear maps, 
since for normalized networks the matrix multiplication map is replaced by
\[
\nu_{\bm d}: \Omega \rightarrow \RR^{d_h \times d_0}, \quad
(W_h,\ldots,W_1) \mapsto \overline W = W_h \frac{W_{h-1}}{\|W_{h-1}\|} \ldots 
\frac{W_1}
        {\|W_1\|},
\]
where $\Omega = \{(W_{h},\ldots,W_1) \, | \, W_i \ne 0, i=1,\ldots,h-1\} \subset
\RR^{d_\theta}$. According to our definitions, if $L = \ell \circ \mu_{\bm d}$
and $L' = \ell \circ \nu_{\bm d}$ are losses respectively for linear networks
and normalized linear networks, then the pure critical loci of $L$ and $L'$
will correspond to each other (since these only depend on the functional
space), but a priori the spurious critical loci induced by the two
parameterizations may be different. In this particular setting, however, we
show that this is not the case: the new paramerization effectively does not
introduce different critical points, and in fact makes the critical locus slightly smaller.

\begin{restatable}{proposition}{NormalizedNets}
\label{prop:normalizedNets}
If $L' = \ell\circ\nu_{\bm d}$ and $L
= \ell\circ\mu_{\bm d}$, then the critical locus $Crit(L')$ is in
``correspondence'' with $Crit(L) \cap \Omega$, meaning that 
\[
\{\nu_{\bm d}(\theta') \, | \, \theta' \in Crit(L')\} = \{\mu_{\bm d}(\theta)
\, |
\,
\theta \in Crit(L) \cap \Omega\}.
\]
\end{restatable}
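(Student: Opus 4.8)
The plan is to relate critical points of $L' = \ell \circ \nu_{\bm d}$ on $\Omega$ to those of $L = \ell \circ \mu_{\bm d}$ by exploiting the fact that $\nu_{\bm d}$ and $\mu_{\bm d}$ differ only by rescaling the intermediate factors. First I would observe that $\nu_{\bm d} = \mu_{\bm d} \circ \rho$, where $\rho: \Omega \to \Omega$ is the smooth map $(W_h, \ldots, W_1) \mapsto (W_h, W_{h-1}/\|W_{h-1}\|, \ldots, W_1/\|W_1\|)$. Since $\rho$ is the identity on the image-relevant data up to scaling, one checks that $\mu_{\bm d} \circ \rho = \nu_{\bm d}$ on $\Omega$, and moreover $\rho$ restricted to $\Omega$ is a submersion onto its image (the ``normalized'' locus where each $W_i$, $0 < i < h$, has unit norm). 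The key computation is the chain rule $dL'(\theta') = d\ell(\nu_{\bm d}(\theta')) \circ d\mu_{\bm d}(\rho(\theta')) \circ d\rho(\theta')$, so that a critical point $\theta'$ of $L'$ produces, via $d\rho(\theta')$ being surjective onto the tangent space of the normalized locus, a point $\rho(\theta')$ at which $d\mu_{\bm d}$ composed with $d\ell$ vanishes \emph{restricted to that tangent space}. I would then need to promote this to full vanishing, i.e.\ $\rho(\theta') \in Crit(L)$.

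The mechanism for that promotion is the scaling invariance of $\mu_{\bm d}$ itself: $\mu_{\bm d}(W_h, \lambda_{h-1} W_{h-1}, \ldots, \lambda_1 W_1) = (\prod \lambda_i)^{?}$... more precisely $\mu_{\bm d}$ is invariant under the rescalings $W_i \mapsto \lambda_i W_i$, $W_{i-1} \mapsto \lambda_i^{-1} W_{i-1}$ (for $0 < i < h$, pushing scalars between adjacent layers), and also $\mu_{\bm d}(W_h, \ldots, W_1)$ is equivariant under a global scaling $W_i \mapsto \lambda W_i$. The directions along which $\rho$ collapses $\Omega$ are exactly the radial directions $W_i \mapsto t W_i$ for each intermediate $i$; along these directions $\nu_{\bm d}$ is constant, hence $L'$ is constant, hence its directional derivative is automatically zero — this tells us nothing new, but it also means the only ``missing'' tangent directions at $\rho(\theta')$ are ones along which $L$ is \emph{also} constant when we move by the compensating rescaling in an adjacent layer. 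Concretely: to show $d L(\rho(\theta')) = 0$, it suffices to check vanishing on a complement of the radial directions; but each radial direction $\partial_t (tW_i)$ in layer $i$ is, modulo the kernel of $d\mu_{\bm d}$ (which by the rescaling invariance contains $\partial_t(tW_i) - \partial_t(tW_{i+1})$-type vectors), equivalent to a radial perturbation of another layer, and iterating this one reduces every radial direction to a perturbation of the \emph{last} layer $W_h$ — which is a direction in the normalized locus, already controlled. So $dL(\rho(\theta')) = 0$. This shows $\mu_{\bm d}(\rho(\theta')) = \nu_{\bm d}(\theta')$ lies in $\{\mu_{\bm d}(\theta) \mid \theta \in Crit(L) \cap \Omega\}$, giving the inclusion $\subseteq$.

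For the reverse inclusion $\supseteq$, I would take $\theta \in Crit(L) \cap \Omega$ and let $\theta' = \rho(\theta)$ be its normalization; since $\nu_{\bm d}(\theta') = \mu_{\bm d}(\theta)$ it remains to check $\theta' \in Crit(L')$. Using $dL'(\theta') = dL(\rho(\theta')) \circ d\rho(\theta') = dL(\theta') \circ d\rho(\theta')$ — here $\rho(\theta') = \theta'$ since $\theta'$ is already normalized — and the fact that $dL(\theta') = 0$ because $\theta'$ is a $\bm d$-factorization of the same $\W$ as $\theta$ with $dL$ vanishing (one must verify $\theta' \in Crit(L)$; this follows either because radial rescalings are in the kernel of $d\mu_{\bm d}$ so critical points of $L$ come in rescaling-orbits, or by the explicit scaling-orbit argument: $L$ is invariant along the orbit through $\theta$ which contains $\theta'$, and the derivative transverse to the orbit is preserved). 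I expect the main obstacle to be exactly this bookkeeping about which directions lie in $\ker d\mu_{\bm d}(\theta)$ versus $\mathrm{im}\, d\rho(\theta')$, i.e.\ carefully matching the $h-1$ radial ``scaling'' directions killed by normalization against the $h-1$-dimensional torus of rescaling symmetries of $\mu_{\bm d}$, and confirming that after passing to the quotient the two parameterizations induce the same differential on function space up to an isomorphism. Once that linear-algebra matching is set up cleanly, both inclusions follow from the chain rule. The claim that the critical locus becomes ``slightly smaller'' is then the observation that $\rho$ identifies the full rescaling orbit to a single normalized representative, so $Crit(L')$ is (the normalization of) $Crit(L) \cap \Omega$ with the free scaling directions removed.
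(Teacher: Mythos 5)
Your forward inclusion is essentially correct, and it is the infinitesimal version of the paper's argument: from $L'=L\circ\rho$ (your $\rho$ is the paper's map $p$) and the fact that $\rho$ is a submersion onto the normalized locus $N$, criticality of $\theta'$ for $L'$ gives vanishing of $dL(\rho(\theta'))$ on $T_{\rho(\theta')}N$; the compensated directions $(0,\ldots,\dot W_{i+1}=W_{i+1},\ldots,0)-(0,\ldots,\dot W_i=W_i,\ldots,0)$ do lie in $\ker d\mu_{\bm d}$, so each intermediate radial direction is congruent modulo $\ker d\mu_{\bm d}$ to the $W_h$-radial direction, which is tangent to $N$, and full vanishing $dL(\rho(\theta'))=0$ follows. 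The paper encodes this same mechanism globally through a second map $q$ rather than through explicit kernel elements.

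The reverse inclusion, however, has a genuine gap. Setting $\theta'=\rho(\theta)$ you assert $\nu_{\bm d}(\theta')=\mu_{\bm d}(\theta)$, which is false in general: $\nu_{\bm d}(\rho(\theta))=\mu_{\bm d}(\theta)/\prod_{i<h}\|W_i\|$. Worse, $\rho(\theta)$ need not be critical for $L$ or $L'$: your justification that ``critical points of $L$ come in rescaling-orbits'' is valid only for \emph{compensated} rescalings (scalars on the layers with product $1$), which preserve $\W$ and preserve $\mathrm{Im}\,d\mu_{\bm d}$, hence the condition $d\ell(\W)\circ d\mu_{\bm d}=0$. A single radial rescaling is \emph{not} in $\ker d\mu_{\bm d}$, and $\rho$ rescales each intermediate layer without compensation, replacing $\W$ by $c\W$ with $c=1/\prod_{i<h}\|W_i\|$; criticality at $\W$ says nothing about $d\ell(c\W)$, so $\rho(\theta)\in Crit(L)$ can fail. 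The fix is precisely the second map the paper introduces, $q(\theta)=\bigl(W_h\prod_{i<h}\|W_i\|,\,W_{h-1}/\|W_{h-1}\|,\ldots,W_1/\|W_1\|\bigr)$, which pushes the norms into the last layer: $q(\theta)$ lies in the compensated orbit of $\theta$, so $\mu_{\bm d}(q(\theta))=\nu_{\bm d}(q(\theta))=\W$ and $dL(q(\theta))=d\ell(\W)\circ d\mu_{\bm d}(q(\theta))=0$ because the image of $d\mu_{\bm d}$ is unchanged under positive rescalings of the layers; since $q(\theta)$ is already normalized, $\rho(q(\theta))=q(\theta)$ and $dL'(q(\theta))=dL(q(\theta))\circ d\rho(q(\theta))=0$, which yields the inclusion $\supseteq$. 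Note that the correct compensated-orbit idea is already present in your kernel discussion for the forward direction; you simply applied the wrong normalization map in the reverse one.
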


\section{Conclusions}


We have introduced the notions of pure and spurious critical points as general tools for a geometric investigation of the landscape of neural networks. In particular, they provide a basic language for describing the interplay between a \emph{convex loss function} and an \emph{overparameterized, non-convex functional
space}. In this paper, we have focused on the landscape of linear networks. This simple model is useful for illustrating our geometric perspective, but also exhibits several interesting (and surprisingly subtle) features. 
For example, the absence of non-global minima in the loss landscape is a rather general property when the architecture is ``filling'', while in the ``non-filling'' setting it is a special property that holds for the quadratic loss. Furthermore, we have observed that even in this simple framework global minima can have (possibly exponentially) many disconnected components.

In the future, we hope to extend our analysis to different network models. For example, we can use our framework to study networks with polynomial activations~\citep{kileel2019expressive}, which are a direct generalization of the linear model. We expect that an analysis of pure and spurious critical points in this context can be used to address a conjecture in~\cite{venturi2018a} regarding the gap between ``upper'' and ``lower'' dimensions in functional space. A geometric investigation of networks with smooth non-polynomial activations is also possible; in that setting, the parameter space and the functional space are usually of the same dimension (\ie, $d_\theta = \dim(\mathcal M_{\Phi})$), however there is still an interesting stratification of singular loci, as explained for example in~\cite[Section 12.2.2]{amari2016}. General ``discriminant hypersurfaces'' can also be used to describe qualitative changes in the landscape as the data distribution varies. Finally, extending our analysis to networks with ReLU activations will require some care because of the non-differentiable setting. On the other hand, it is clear that ReLU networks behave as linear networks when restricted to appropriate regions of input space: this suggests that our study of ranks of differentials may be a useful building block for pursuing in this important direction.


\paragraph{Acknowledgements.}
We thank James Mathews for many helpful discussions in the beginning of this project.
We are gratuful to ICERM  (NSF DMS-1439786 and the Simons Foundation grant
507536) for the hospitality during the academic year 2018/2019 where many ideas for this project were developed.
MT and JB were partially supported by the Alfred P. Sloan Foundation, NSF RI-1816753, NSF CAREER CIF 1845360, and Samsung Electronics.
KK was partially supported by the Wallenberg AI, Autonomous Systems and Software Program (WASP) funded by the Knut and Alice Wallenberg Foundation.

\bibliography{references}

\begin{thebibliography}{3}
\providecommand{\natexlab}[1]{#1}
\providecommand{\url}[1]{\texttt{#1}}
\expandafter\ifx\csname urlstyle\endcsname\relax
  \providecommand{\doi}[1]{doi: #1}\else
  \providecommand{\doi}{doi: \begingroup \urlstyle{rm}\Url}\fi

\bibitem[Bengio \& LeCun(2007)Bengio and LeCun]{Bengio+chapter2007}
Yoshua Bengio and Yann LeCun.
\newblock Scaling learning algorithms towards {AI}.
\newblock In \emph{Large Scale Kernel Machines}. MIT Press, 2007.

\bibitem[Goodfellow et~al.(2016)Goodfellow, Bengio, Courville, and
  Bengio]{goodfellow2016deep}
Ian Goodfellow, Yoshua Bengio, Aaron Courville, and Yoshua Bengio.
\newblock \emph{Deep learning}, volume~1.
\newblock MIT Press, 2016.

\bibitem[Hinton et~al.(2006)Hinton, Osindero, and Teh]{Hinton06}
Geoffrey~E. Hinton, Simon Osindero, and Yee~Whye Teh.
\newblock A fast learning algorithm for deep belief nets.
\newblock \emph{Neural Computation}, 18:\penalty0 1527--1554, 2006.

\end{thebibliography}


\begin{thebibliography}{27}
\providecommand{\natexlab}[1]{#1}
\providecommand{\url}[1]{\texttt{#1}}
\expandafter\ifx\csname urlstyle\endcsname\relax
  \providecommand{\doi}[1]{doi: #1}\else
  \providecommand{\doi}{doi: \begingroup \urlstyle{rm}\Url}\fi

\bibitem[Amari(2016)]{amari2016}
Shun-ichi Amari.
\newblock \emph{Information {{Geometry}} and {{Its Applications}}}, volume 194
  of \emph{Applied {{Mathematical Sciences}}}.
\newblock {Springer Japan}, Tokyo, 2016.
\newblock ISBN 978-4-431-55977-1 978-4-431-55978-8.
\newblock \doi{10.1007/978-4-431-55978-8}.

\bibitem[Arora et~al.(2018)Arora, Cohen, Golowich, and
  Hu]{arora2018convergence}
Sanjeev Arora, Nadav Cohen, Noah Golowich, and Wei Hu.
\newblock A convergence analysis of gradient descent for deep linear neural
  networks.
\newblock \emph{arXiv preprint arXiv:1810.02281}, 2018.

\bibitem[Arora et~al.(2019)Arora, Cohen, Hu, and Luo]{arora2019implicit}
Sanjeev Arora, Nadav Cohen, Wei Hu, and Yuping Luo.
\newblock Implicit regularization in deep matrix factorization.
\newblock \emph{arXiv preprint arXiv:1905.13655}, 2019.

\bibitem[Bah et~al.(2019)Bah, Rauhut, Terstiege, and Westdickenberg]{bah2019}
Bubacarr Bah, Holger Rauhut, Ulrich Terstiege, and Michael Westdickenberg.
\newblock Learning deep linear neural networks: {{Riemannian}} gradient flows
  and convergence to global minimizers.
\newblock \emph{arXiv:1910.05505 [cs, math]}, November 2019.

\bibitem[Baldi \& Hornik(1989)Baldi and Hornik]{baldi1989}
Pierre Baldi and Kurt Hornik.
\newblock Neural networks and principal component analysis: {{Learning}} from
  examples without local minima.
\newblock \emph{Neural Networks}, 2\penalty0 (1):\penalty0 53--58, January
  1989.
\newblock ISSN 08936080.
\newblock \doi{10.1016/0893-6080(89)90014-2}.

\bibitem[Chizat \& Bach(2018)Chizat and Bach]{chizat2018}
Lenaic Chizat and Francis Bach.
\newblock On the {{Global Convergence}} of {{Gradient Descent}} for
  {{Over}}-parameterized {{Models}} using {{Optimal Transport}}.
\newblock \emph{arXiv:1805.09545 [cs, math, stat]}, May 2018.

\bibitem[Choromanska et~al.(2015)Choromanska, Henaff, Mathieu, Arous, and
  LeCun]{choromanska2015}
Anna Choromanska, Mikael Henaff, Micha\"el Mathieu, G\'erard~Ben Arous, and
  Yann LeCun.
\newblock The {{Loss Surfaces}} of {{Multilayer Networks}}.
\newblock In \emph{Proceedings of the {{Eighteenth International Conference}}
  on {{Artificial Intelligence}} and {{Statistics}}, {{AISTATS}} 2015, {{San
  Diego}}, {{California}}, {{USA}}, {{May}} 9-12, 2015}, 2015.

\bibitem[Draisma \& Horobet(2014)Draisma and Horobet]{draisma2014}
Jan Draisma and Emil Horobet.
\newblock The average number of critical rank-one approximations to a tensor.
\newblock \emph{arXiv:1408.3507 [math]}, August 2014.

\bibitem[Draisma et~al.(2013)Draisma, Horobet, Ottaviani, Sturmfels, and
  Thomas]{draisma2013}
Jan Draisma, Emil Horobet, Giorgio Ottaviani, Bernd Sturmfels, and Rekha~R.
  Thomas.
\newblock The {{Euclidean}} distance degree of an algebraic variety.
\newblock \emph{arXiv:1309.0049 [math]}, August 2013.

\bibitem[Garipov et~al.(2018)Garipov, Izmailov, Podoprikhin, Vetrov, and
  Wilson]{garipov2018}
Timur Garipov, Pavel Izmailov, Dmitrii Podoprikhin, Dmitry Vetrov, and
  Andrew~Gordon Wilson.
\newblock Loss {{Surfaces}}, {{Mode Connectivity}}, and {{Fast Ensembling}} of
  {{DNNs}}.
\newblock \emph{arXiv:1802.10026 [cs, stat]}, October 2018.

\bibitem[Grayson \& Stillman(2019)Grayson and Stillman]{M2}
Daniel~R. Grayson and Michael~E. Stillman.
\newblock Macaulay2, a software system for research in algebraic geometry.
\newblock Available at \url{http://www.math.uiuc.edu/Macaulay2/}, 2019.

\bibitem[Hardt \& Ma(2016)Hardt and Ma]{hardt2016}
Moritz Hardt and Tengyu Ma.
\newblock Identity {{Matters}} in {{Deep Learning}}.
\newblock \emph{arXiv:1611.04231 [cs, stat]}, November 2016.

\bibitem[Harris(1995)]{harris1995}
Joe Harris.
\newblock \emph{Algebraic Geometry: A First Course}.
\newblock Number 133 in Graduate Texts in Mathematics. {Springer}, New York,
  corr. 3rd print edition, 1995.
\newblock ISBN 978-0-387-97716-4.

\bibitem[Jaffali \& Oeding(2019)Jaffali and Oeding]{jaffali2019}
Hamza Jaffali and Luke Oeding.
\newblock Learning algebraic models of quantum entanglement.
\newblock \emph{arXiv preprint arXiv:1908.10247}, 2019.

\bibitem[Kawaguchi(2016)]{kawaguchi2016}
Kenji Kawaguchi.
\newblock Deep {{Learning}} without {{Poor Local Minima}}.
\newblock \emph{CoRR}, abs/1605.07110, 2016.

\bibitem[Kileel et~al.(2019)Kileel, Trager, and Bruna]{kileel2019expressive}
Joe Kileel, Matthew Trager, and Joan Bruna.
\newblock On the expressive power of deep polynomial neural networks.
\newblock \emph{arXiv preprint arXiv:1905.12207}, 2019.

\bibitem[Laurent \& {von Brecht}(2017)Laurent and {von Brecht}]{laurent2017}
Thomas Laurent and James {von Brecht}.
\newblock Deep linear neural networks with arbitrary loss: {{All}} local minima
  are global.
\newblock \emph{arXiv:1712.01473 [cs, stat]}, December 2017.

\bibitem[Lee(2003)]{lee2003}
John~M. Lee.
\newblock \emph{Introduction to Smooth Manifolds}.
\newblock Number 218 in Graduate Texts in Mathematics. {Springer}, New York,
  2003.
\newblock ISBN 978-0-387-95495-0 978-0-387-95448-6.

\bibitem[Lu \& Kawaguchi(2017)Lu and Kawaguchi]{lu2017}
Haihao Lu and Kenji Kawaguchi.
\newblock Depth {{Creates No Bad Local Minima}}.
\newblock \emph{arXiv:1702.08580 [cs, math, stat]}, February 2017.

\bibitem[Mehta et~al.(2018)Mehta, Chen, Tang, and Hauenstein]{mehta_loss_2018}
Dhagash Mehta, Tianran Chen, Tingting Tang, and Jonathan~D. Hauenstein.
\newblock The loss surface of deep linear networks viewed through the algebraic
  geometry lens.
\newblock \emph{{arXiv}:1810.07716}, 2018.
\newblock URL \url{http://arxiv.org/abs/1810.07716}.

\bibitem[Mei et~al.(2018)Mei, Montanari, and Nguyen]{mei2018}
Song Mei, Andrea Montanari, and Phan-Minh Nguyen.
\newblock A {{Mean Field View}} of the {{Landscape}} of {{Two}}-{{Layers Neural
  Networks}}.
\newblock \emph{arXiv:1804.06561 [cond-mat, stat]}, April 2018.

\bibitem[Ottaviani et~al.(2013)Ottaviani, Spaenlehauer, and
  Sturmfels]{ottaviani2013}
Giorgio Ottaviani, Pierre-Jean Spaenlehauer, and Bernd Sturmfels.
\newblock Exact {{Solutions}} in {{Structured Low}}-{{Rank Approximation}}.
\newblock \emph{arXiv:1311.2376 [cs, math, stat]}, November 2013.

\bibitem[Venturi et~al.(2018)Venturi, Bandeira, and Bruna]{venturi2018a}
Luca Venturi, Afonso~S Bandeira, and Joan Bruna.
\newblock Spurious valleys in two-layer neural network optimization landscapes.
\newblock \emph{arXiv preprint arXiv:1802.06384}, 2018.

\bibitem[Yun et~al.(2017)Yun, Sra, and Jadbabaie]{yun2017global}
Chulhee Yun, Suvrit Sra, and Ali Jadbabaie.
\newblock Global optimality conditions for deep neural networks.
\newblock \emph{arXiv preprint arXiv:1707.02444}, 2017.

\bibitem[Yun et~al.(2018)Yun, Sra, and Jadbabaie]{yun2018small}
Chulhee Yun, Suvrit Sra, and Ali Jadbabaie.
\newblock Small nonlinearities in activation functions create bad local minima
  in neural networks.
\newblock \emph{arXiv preprint arXiv:1802.03487}, 2018.

\bibitem[Zhang(2019)]{zhang2019depth}
Li~Zhang.
\newblock Depth creates no more spurious local minima.
\newblock \emph{arXiv preprint arXiv:1901.09827}, 2019.

\bibitem[Zhou \& Liang(2017)Zhou and Liang]{zhou2017}
Yi~Zhou and Yingbin Liang.
\newblock Critical {{Points}} of {{Neural Networks}}: {{Analytical Forms}} and
  {{Landscape Properties}}.
\newblock \emph{arXiv:1710.11205 [cs, stat]}, October 2017.

\end{thebibliography}
\bibliographystyle{iclr2020_conference}

\appendix

\section{Appendix}

\subsection{Determinantal varieties}
\label{sec:det_variaties}

We present some additional properties of determinantal varieties. For proofs and more details, we refer the reader to~\cite{harris1995}.
Given $r < \min (m,n)$, the $r$-th determinantal variety $\mathcal M_r \subset \RR^{m \times n}$ is defined as the set of matrices with rank at most $r$:
\begin{equation*}
    \mathcal M_r = \{P \in \RR^{m \times n} \,\, | \,\, \rk(P) \le r\} \subset \RR^{m \times n}.
\end{equation*}
As mentioned in the main part of the paper, $\mathcal M_r$ is an algebraic variety of dimension $r(m+n-r)$, that can be described as the zero-set of all $(r+1)\times(r+1)$ minors. For $r>0$, the the singular locus of $\mathcal M_r$ is exactly $\mathcal M_{r-1} \subset \mathcal M_r$. Some of our proofs will rely on the following explicit characterization of tangent space of determinantal varieties: given a a matrix $P \in \RR^{m \times n}$ of rank exactly $r$ (so $P$ is a smooth point on $\mathcal M_r$) we have that
\begin{equation*}
    T_P \mathcal M_r = \RR^m \otimes Row(P) + Col(P) \otimes \RR^n \subset \RR^{m \times n}.
\end{equation*}
We will also make use of the normal space to the tangent space $T_P \mathcal M_r$ at $P$, with respect to the Frobenius inner product. This is given by
\begin{equation*}
    (T_P \mathcal M_r)^\perp = Col(P)^\perp \otimes Row(P)^\perp,
\end{equation*}
where $Col(P)^\perp$ and $Row(P)^\perp$ are the orthogonal spaces to $Col(P)$ and $Row(P)$, respectively.

\subsection{Euclidean distance degrees and discriminants}
\label{sec:edd}

In this section, we informally discuss some algebraic notions related to {ED (Euclidean distance) degrees} and discriminants. A detailed presentation can be found  in~\cite{draisma2013}.
 Given an algebraic variety $\mathcal V \subset \RR^n$ and a point $u \in \RR^n$, the number of real critical points of the distance function $h_u(p) = \|p - u\|^2$ restricted to $\mathcal V$ is only locally constant as $u$ varies. In general, the behavior changes when $u$ crosses the \emph{caustic hypersurface}, or \emph{ED (Euclidean distance) discriminant}, of~$\mathcal V$.
The ED discriminant can be defined over the complex numbers, and in this setting it is indeed always a hypersurface (\ie, it has codimension one), however it can have higher codimension over the real numbers.
For instance, for a circle in the complex plane with the origin as its center,
a point $(u_1,u_2) \in \mathbb{C}^2$ is on the ED discriminant
if and only if $u_1^2 + u_2^2 = 0$. 
This defines a curve in the complex plane whose real locus is a point
(see right side of Figure~\ref{fig:caustics}).
By the Eckart-Young Theorem (Theorem~\ref{thm:EY}), 
the ED discriminant of the determinantal variety $\mathcal{M}_r$ is the locus of all matrices $Q_0$ with at least two coinciding singular values, so it is defined by the discriminant of $Q_0Q_0^T$.
As in the case of the circle, the ED discriminant of $\mathcal{M}_r$ has codimension two in $\RR^{d_y \times d_x}$.

Over the complex numbers, the number of critical points of the distance function $h_u$ restricted to $\mathcal V$ is actually \emph{the same} for every point $u \in \mathbb{C}^n$ not on the ED discriminant of~$\mathcal{V}$.
This quantity is known as the \emph{ED degree} of the variety $\mathcal{V}$.
 For instance, a circle has ED degree two whereas an ellipse has ED degree four (on the left side of Figure~\ref{fig:caustics}, points $u$ outside of the caustic curve yield two real critical points and two imaginary critical points). 
The Eckart-Young Theorem (Theorem~\ref{thm:EY}) tells us that the ED degree of the determinantal variety $\mathcal{M}_r \subset \RR^{d_y \times d_x}$ is $\binom{m}{r}$ where $m = \min ( d_x, d_y )$. As argued in the main part of the paper, this does not hold any longer after perturbing either the determinantal variety or the Euclidean distance slightly, even using only a linear change of coordinates. For an algebraic variety $\mathcal{V} \subset \mathbb{C}^n$, a linear change of coordinates is given by an automorphism $\varphi: \mathbb{C}^n \to \mathbb{C}^n$.
For almost all such automorphisms (\ie, for all $\varphi$ except those lying in some subvariety of $\mathrm{GL}(n, \mathbb{C})$) the ED degree of $\varphi(\mathcal{V})$ is \emph{the same}; see Theorem~5.4 in~\cite{draisma2013}. 
This quantity is known as the \emph{general ED degree} of $\mathcal{V}$.
For instance, almost all linear coordinate changes will deform a circle into an ellipse, such that the general ED degree of the circle is four. 

In the above definition of the general ED degree, we fixed the standard Euclidean distance and perturbed the variety. 
Alternatively, we can fix the variety and change the standard Euclidean distance $\Vert \cdot \Vert$ to $\mathrm{dist}_\varphi = \Vert \varphi(\cdot) \Vert$.
The new distance function $h_{\varphi, u}(p) = \mathrm{dist}_\varphi(p-u)^2$ from $u$ satisfies $h_{\varphi(u)}(\varphi(p)) = h_{\mathrm{id},\varphi(u)}(\varphi(p)) =h_{\varphi, u}(p)$.
Hence, the ED degree of $\varphi(\mathcal{V})$ with respect to the standard Euclidean distance $\mathrm{dist}_\mathrm{id} = \Vert \cdot \Vert$
equals the ED degree of $\mathcal{V}$ with respect to the perturbed Euclidean distance $\mathrm{dist}_\varphi$. In particular, the general ED degree of $\mathcal{V}$ can be obtained by computing the ED degree after applying a sufficiently random linear change of coordinates on either 
the Euclidean distance
or the variety $\mathcal{V}$ itself.

As in the case of a circle,
the general ED degree of the determinantal variety $\mathcal{M}_r$ is \emph{not} equal to the ED degree of $\mathcal{M}_r$.
Furthermore, there is no known closed formula for the general ED degree of $\mathcal{M}_r$ only involving the parameters $d_x$, $d_y$ and $r$. 
In the special case of rank-one matrices, one can derive a closed expression from the Catanese-Trifogli formula (Theorem~7.8 in \cite{draisma2013}): the general ED degree of $\mathcal{M}_1$ is
\[
\sum_{s=0}^{d_x+d_y}(-1)^s(2^{d_x+d_y+1-s}-1)(d_x+d_y-s)!\left[\sum_{\substack{i+j=s\\i\le d_x,\ j\le d_y}}\frac{\binom{d_x+1}{i}\binom{d_y+1}{j}}{(d_x-i)!(d_y-j)!}\right].
\]
This expression yields $39$ for $d_x = d_y =3$, as mentioned in Example~\ref{ex:edd}.
For general $r$, formulas for the general ED degree of $\mathcal{M}_r$ involving Chern and polar classes can be found in~\cite{ottaviani2013, draisma2013}.
A short algorithm to compute the general ED degree of $\mathcal{M}_r$ is given in Example~7.11 of \cite{draisma2013}; it uses a package for advanced intersection theory in the algebro-geometric software \texttt{Macaulay2}~\citep{M2}.

This discussion shows that the Eckart-Young Theorem is indeed very special. 
The intrinsic reason for this is that the determinantal variety $\mathcal{M}_r$ intersects the ``isotropic quadric'' associated with the standard Euclidean distance (\ie, zero locus of $X_{1,1}^2 + \ldots + X_{d_x,d_y}^2$ in $\mathbb{C}^{d_x \times d_y}$) in a particular way (\ie, non-transversely).
Performing a random linear change of coordinates on either $\mathcal{M}_r$ or the isotropic quadric makes the intersection transverse.
So the ED degree after the linear change of coordinates is the general ED degree of $\mathcal{M}_r$, and the Eckart-Young Theorem does not apply. 

In summary, we have observed that 
the degeneration from an ellipse to a circle
is analogous to the degeneration from a determinantal variety with a perturbed Euclidean distance
to the determinantal variety with the standard Euclidean distance:
in both cases, the ED degree drops because the situation becomes degenerate.
Moreover, the ED discriminant drops dimension, 
which causes the special phenomenon 
that the number of {real} critical points is almost everywhere the same. 

\begin{experiment}
\label{exp}
In general, it is very difficult to describe the open regions in $\RR^n$ that are separated by the ED discriminant of a variety $\mathcal{V} \subset \RR^n$.
Finding the ``typical'' number of real critical points for the distance function $h_u$ restricted to $\mathcal{V}$, 
requires the computation of the volumes of these open regions. 
In the current state of the art in real algebraic geometry, this is only possible for very particular varieties~$\mathcal{V}$.
For these reasons, and to get more insights on the typical number of real critical points of determinantal varieties with a perturbed Euclidean distance, we performed computational experiments with \texttt{Macaulay2}~\citep{M2}
in the situation of Example~\ref{ex:edd}.
We fixed the determinantal variety $\mathcal{M}_1 \subseteq \RR^{3 \times 3}$ of rank-one $(3 \times 3)$-matrices. 
In each iteration of the experiment, 
we picked a random automorphism $\varphi: \RR^{3 \times 3} \to \RR^{3 \times 3}$
and a random matrix $Q_0 \in \RR^{3 \times 3}$.
We first verified that the number of complex critical points of the perturbed quadratic distance function $h_{\varphi, Q_0}$ restricted to $\mathcal{M}_1$
is the expected number $39$.
After that, we computed the number of real critical points and the number of local minima among them.
Our results for $2000$ iterations can be found in Table~\ref{tab:experiment} and Figure~\ref{fig:experiments}.
Although this is a very rudimentary experiment in an extremely simple setting, it provides clear evidence that the number of local minima of the perturbed distance function is generally not one. 

\paragraph{Implementation details:}
We note that our computations of real critical points and local minima involved numerical methods and might thus be affected by numerical errors. 
In our implementation we used several basic tests to rule out numerically bad iterations, so that we can report our results with high confidence. 
The entries of the random matrix $Q_0$ are independently and uniformly chosen among the integers in $Z = \lbrace -10,-9, \ldots, 9, 10 \rbrace$.
The random automorphism $\varphi$
is given by a matrix in $Z^{9 \times 9}$
whose entries are also chosen independently and uniformly at random. 
\end{experiment}

\begin{table}[t]
\begin{center}
\caption{Number of critical points (columns) and number of minima (rows) in our experiments} \vspace{.3cm}
\label{tab:experiment}
\begin{tabular}{clccccccc}  
\toprule
&& \multicolumn{6}{c}{\#(critical points)}\\
&& \textbf{1}&\textbf{3}&\textbf{5}&\textbf{7}&\textbf{9}&\textbf{11}&\textbf{13} \\
\midrule
&\textbf{1}& 0 & 476 & 120 & 1 & 0 & 0 & 0 \\
\#(local
&\textbf{2}& 0 & 0 & 805 & 190 & 10 & 0 & 0 \\
minima)
&\textbf{3}& 0 & 0 & 0 & 228 & 116 & 21 & 0 \\
&\textbf{4}& 0 & 0 & 0 & 0 & 16 & 12 & 5\\
\bottomrule
\end{tabular}
\end{center}
\end{table}

\subsection{Pure and spurious critical points in predictor space}\label{sec:predictor}

We illustrate a variation of our functional setting where the notions of pure and spurious can also be naturally applied. We consider a training sample $x_1,\ldots,x_N \in \RR^{d_x}$, $y_1,\ldots,y_N \in \RR$ (for notational simplicity we use ${d_y} = 1$ but this is not necessary). We then write an empirical risk of the form
\[
L(\theta) = g(\hat Y(\theta),Y),
\]
where $\hat Y(\theta) = (\Phi(\theta,x_1),\ldots,\Phi(\theta,x_N)) \in \RR^N$, $Y = (y_1,\ldots,y_N) \in \RR^N$ and $g: \RR^N \times \mathbb R^N \rightarrow \RR$ is a convex function. As $\theta$ varies, $\hat Y(\theta)$ defines a ``predictor manifold'' $\mathcal Y \subset \RR^N$, which depends only on the input data $x_1,\ldots,x_N$, but not on $\theta$. The function $L(\theta)$ can be naturally seen as a composition
\[
\RR^{d_\theta} \overset{\eta}{\rightarrow} \mathcal Y \overset{g}{\rightarrow} \RR,
\]
where $\eta(\theta) = \hat Y(\theta) \in \mathcal Y$. We may now distinguish again between ``pure'' and ``spurious'' critical points for $L$. In an underparameterized regime $d_{\theta} < N$, or if the input data $x_1,\ldots,x_N$ is in some way special, then $\mathcal Y \subsetneq \RR^N$ is a submanifold (with singularities), and critical points may arise because we are restricting $g$ to $\mathcal Y$ (pure), or because of the parameterization map $\eta$ (spurious).
In a highly overparameterized regime $d_\theta \gg N$ (which is usually the case in practice), we expect $\mathcal Y = \RR^N$. This can be viewed as analogous to the ``filling'' situation described for linear networks in this paper. In particular, all critical points that are not global minima for $L$ are necessarily spurious, since $g|_{\mathcal Y} = g$ is convex. 

\subsection{Proof of Theorem~\ref{thm:differential_image}}
We first show Lemma~\ref{lem:rank_differential} with help of the following general observation:

\begin{proposition}
\label{prop:dimensionNestedVS}
Let $V_1^+ \subseteq V_2^+ \subseteq \ldots \subseteq V_{h}^+$
and $V_1^- \supseteq V_2^- \supseteq \ldots \supseteq V_{h}^-$
be vector spaces with dimensions
$r_i^+ := \dim(V_i^+)$
and $r_i^- := \dim(V_i^-)$
for $i = 1, \ldots, h$. Then we have
\begin{align*}
    \dim \left( \left( V_1^+ \otimes V_1^- \right) + \left( V_2^+ \otimes V_2^- \right)
     + 
     \ldots + \left( V_{h}^+ \otimes V_{h}^- \right) \right)
     =  \sum_{i=1}^{h} r_i^+ r_i^- - \sum_{i=1}^{h-1} r_i^+ r_{i+1}^-.
\end{align*}
\end{proposition}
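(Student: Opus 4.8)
The plan is to prove Proposition~\ref{prop:dimensionNestedVS} by induction on $h$, building up the sum one term at a time and tracking exactly how much each new term $V_i^+ \otimes V_i^-$ adds to the running span. The base case $h=1$ is trivial since $\dim(V_1^+ \otimes V_1^-) = r_1^+ r_1^-$, matching the formula (the second sum is empty). For the inductive step, set $A_{h-1} = (V_1^+ \otimes V_1^-) + \ldots + (V_{h-1}^+ \otimes V_{h-1}^-)$ and $B = V_h^+ \otimes V_h^-$, so that $\dim(A_{h-1} + B) = \dim A_{h-1} + \dim B - \dim(A_{h-1} \cap B)$. By the inductive hypothesis $\dim A_{h-1}$ is known and $\dim B = r_h^+ r_h^-$, so everything reduces to showing
\[
\dim(A_{h-1} \cap B) = r_{h-1}^+ \, r_h^-.
\]
Once this is established, the formula for $h$ follows by rearranging the telescoping sums.

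The heart of the argument is therefore the identity $A_{h-1} \cap B = V_{h-1}^+ \otimes V_h^-$ (from which the dimension count is immediate, since $V_{h-1}^+ \subseteq V_h^+$ and $V_h^- \subseteq V_{h-1}^-$, so this is a genuine subspace of $B$ of the claimed dimension). The inclusion $V_{h-1}^+ \otimes V_h^- \subseteq B$ is clear, and $V_{h-1}^+ \otimes V_h^- \subseteq V_{h-1}^+ \otimes V_{h-1}^- \subseteq A_{h-1}$ using $V_h^- \subseteq V_{h-1}^-$, which gives one containment. For the reverse containment I would work in coordinates adapted to the flags: choose a basis $e_1, e_2, \ldots$ of the ambient ``column'' space so that $V_i^+ = \mathrm{span}(e_1,\ldots,e_{r_i^+})$ for all $i$ (possible because the $V_i^+$ are nested), and similarly a basis $f_1, f_2, \ldots$ of the ``row'' space with $V_i^- = \mathrm{span}(f_1, \ldots, f_{r_i^-})$ — wait, the $V_i^-$ are \emph{decreasing}, so instead order the basis so that $V_i^-= \mathrm{span}(f_1, \ldots, f_{r_i^-})$ still works if we index so larger $i$ gives fewer vectors; concretely $V_h^- \subseteq \ldots \subseteq V_1^-$ means I can pick $f_1,\ldots,f_{r_1^-}$ with $V_i^- = \mathrm{span}(f_1,\ldots,f_{r_i^-})$. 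Then each $V_i^+ \otimes V_i^-$ is spanned by the ``matrix units'' $e_j \otimes f_k$ with $j \le r_i^+$ and $k \le r_i^-$, so $A_{h-1}$ is spanned by the coordinate basis vectors $e_j \otimes f_k$ for which there exists some $i \le h-1$ with $j \le r_i^+$ and $k \le r_i^-$. Since $A_{h-1}$ and $B$ are both spanned by subsets of the coordinate basis $\{e_j \otimes f_k\}$, their intersection is spanned by the common subset: $e_j \otimes f_k \in A_{h-1} \cap B$ iff $j \le r_h^+$, $k \le r_h^-$, and (there is $i\le h-1$ with $j\le r_i^+$ and $k \le r_i^-$). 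Because $r_1^+ \le \ldots \le r_{h-1}^+$ and $r_1^- \ge \ldots \ge r_{h-1}^-$, the condition ``$\exists i \le h-1$'' with the monotonicity is equivalent to $j \le r_{h-1}^+$ and $k \le r_1^-$; intersecting with $j \le r_h^+$, $k \le r_h^-$ and using $r_{h-1}^+ \le r_h^+$, $r_h^- \le r_1^-$ gives exactly $j \le r_{h-1}^+$, $k \le r_h^-$, i.e. $A_{h-1}\cap B = V_{h-1}^+ \otimes V_h^-$, of dimension $r_{h-1}^+ r_h^-$.

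The main obstacle — really the only subtle point — is verifying the claim about when a coordinate basis vector $e_j \otimes f_k$ lies in $A_{h-1}$: one must check that the union of ``staircase'' index sets $\{(j,k) : j \le r_i^+, k \le r_i^-\}$ over $i = 1,\ldots,h-1$ is itself a single axis-aligned rectangle $\{j \le r_{h-1}^+, k \le r_1^-\}$ \emph{restricted appropriately}, which is false in general for arbitrary $r_i^\pm$ but true here precisely because $r_i^+$ is increasing while $r_i^-$ is decreasing (so the staircases are nested along one diagonal direction). Actually the union is a staircase region, not a rectangle, but that is harmless: what matters is only its intersection with the rectangle defining $B$, namely $\{j \le r_h^+, k \le r_h^-\}$, and since $r_h^+ \ge r_i^+$ and $r_h^- \le r_i^-$ for $i \le h-1$, within that rectangle the binding constraints collapse to $j \le r_{h-1}^+$ and $k \le r_h^-$. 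I would state this combinatorial fact as a short sublemma, prove it by the monotonicity of the two sequences, and then assemble the induction. Everything else is bookkeeping with the telescoping sums.
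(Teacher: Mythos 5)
Your proof is correct and takes essentially the same route as the paper: induction on $h$, with the whole inductive step resting on the identity $\bigl((V_1^+\otimes V_1^-)+\ldots+(V_{h-1}^+\otimes V_{h-1}^-)\bigr)\cap (V_h^+\otimes V_h^-)=V_{h-1}^+\otimes V_h^-$ and the standard dimension formula. The only difference is that the paper states this intersection identity as an immediate consequence of the nested inclusions, whereas you verify it explicitly with bases adapted to the two flags and the staircase/monotonicity argument — a fine elaboration of the same key observation, not a different approach.
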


\begin{proof}
We prove this assertion by induction on $h$.
The base case ($h=1$) is clear:
$\dim(V_1^+ \otimes V_1^-) = r_1^+ r_1^-$.
For the induction step, we set 
$V :=  ( V_1^+ \otimes V_1^- )  + 
     \ldots + ( V_{h-1}^+ \otimes V_{h-1}^- )$.
The key observation is that the inclusions
$V_1^+ \subseteq V_2^+ \subseteq \ldots \subseteq V_{h}^+$
and $V_1^- \supseteq V_2^- \supseteq \ldots \supseteq V_{h}^-$
imply that $V \cap ( V_{h}^+ \otimes V_{h}^-) =  V_{h-1}^+ \otimes V_{h}^-$.
Hence, applying the induction hypothesis to $V$, we derive
\begin{align*}
    \dim \left( V + \left( V_{h}^+ \otimes V_{h}^- \right) \right)
    &= \dim(V) + \dim(V_{h}^+ \otimes V_{h}^-) - \dim(V_{h-1}^+ \otimes V_{h}^-) \\
    &= \left( \sum_{i=1}^{h-1} r_i^+ r_i^- - \sum_{i=1}^{h-2} r_i^+ r_{i+1}^-\right)
    + r_{h}^+ r_{h}^- - r_{h-1}^+ r_{h}^- \\
    &= \sum_{i=1}^{h} r_i^+ r_i^- - \sum_{i=1}^{h-1} r_i^+ r_{i+1}^- .
    \qedhere
\end{align*}
\end{proof}

\RankDifferential*

\begin{proof}
The image of the differential $d \mu_{\bm d} (\theta)$ is given in~\eqref{eq:image_differential}.
Due to
\begin{equation}\label{eq:nestedVS}
\begin{aligned}
Col(\W) \subseteq Col(W_{>1}) \subseteq \ldots \subseteq Col(W_{>h-1}) = Col(W_{h})
,\\[.2cm]
Row(\W) \subseteq Row(W_{<h}) \subseteq \ldots \subseteq Row(W_{<2}) = Row(W_1), \\
\end{aligned}
\end{equation}
we can apply Proposition~\ref{prop:dimensionNestedVS}, which concludes the proof. 
\end{proof}

Now we provide a proof for Theorem~\ref{thm:differential_image}, starting from a refinement of the last statement.

\begin{proposition}
\label{prop:tangentContainment}
Let $r = \min \{d_i\}$, $\theta = (W_h,\ldots,W_1)$, $\W = \mu_{\bm d}(\theta)$, and $e = \rk \W$.
The image of the differential $d \mu_{\bm d}$ at $\theta$ contains the tangent space $T_{\W} \mathcal{M}_e$ of $\mathcal M_e$ at $\W$. Furthermore, for every $\W \in \mathcal M_e \setminus \mathcal M_{e-1}$ there exists $\theta'$ such that $\mu_{\bm d}(\theta') = \W$ and the image of $d \mu_{\bm d} (\theta')$ is exactly $T_{\W} \mathcal M_e$.
\end{proposition}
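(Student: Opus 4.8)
The plan is to analyze the image of $d\mu_{\bm d}(\theta)$ using the explicit description in~\eqref{eq:image_differential} together with the nested chains of column and row spaces in~\eqref{eq:nestedVS}. Recall from Appendix~\ref{sec:det_variaties} that for $\W$ of rank exactly $e$ the tangent space is $T_{\W}\mathcal M_e = \RR^{d_h}\otimes Row(\W) + Col(\W)\otimes\RR^{d_0}$. For the first claim, I would observe that the image in~\eqref{eq:image_differential} already contains the first summand $\RR^{d_h}\otimes Row(W_{<h})$ and the last summand $Col(W_{>1})\otimes\RR^{d_0}$; since $Row(\W)\subseteq Row(W_{<h})$ and $Col(\W)\subseteq Col(W_{>1})$ by~\eqref{eq:nestedVS}, the image contains $\RR^{d_h}\otimes Row(\W) + Col(\W)\otimes\RR^{d_0} = T_{\W}\mathcal M_e$. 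This handles the containment statement (which is also the last sentence of Theorem~\ref{thm:differential_image}).

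For the second claim, I would construct an explicit factorization $\theta'$ of $\W$ that makes the image as small as possible. Take an SVD-type decomposition $\W = A B$ with $A \in \RR^{d_h \times e}$ of rank $e$ and $B \in \RR^{e \times d_0}$ of rank $e$. The idea is to route all the ``rank'' through an internal bottleneck: since $e = \rk\W \le r = \min\{d_i\}$, each intermediate dimension $d_i$ is at least $e$, so we can choose the factors $W_i$ so that every partial product $W_{<i}$ has rank exactly $e$ with $Row(W_{<i}) = Row(B) = Row(\W)$, and every partial product $W_{>i}$ has rank exactly $e$ with $Col(W_{>i}) = Col(A) = Col(\W)$. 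Concretely, one can pick $W_1$ to have row space $Row(B)$, then let each subsequent $W_i$ ($1 < i < h$) act as a rank-$e$ map preserving this image appropriately, and finally let $W_h$ produce column space $Col(A)$; a clean way is $W_1 = \iota_1 B$, $W_h = A \pi_h$, and $W_i = \iota_i \pi_i$ for the middle layers, where $\iota_i,\pi_i$ are suitable inclusions/projections of the $e$-dimensional coordinate subspace, chosen so the product telescopes to $AB = \W$. With this choice, every term $Col(W_{>i})\otimes Row(W_{<i})$ in~\eqref{eq:image_differential} equals $Col(\W)\otimes Row(\W)$, and the two boundary terms become $\RR^{d_h}\otimes Row(\W)$ and $Col(\W)\otimes\RR^{d_0}$; hence the image is exactly $\RR^{d_h}\otimes Row(\W) + Col(\W)\otimes\RR^{d_0} = T_{\W}\mathcal M_e$. (This matches Lemma~\ref{lem:rank_differential}: plugging $\rk(W_{>i}) = \rk(W_{<i}) = e$ for all interior $i$ gives $\rk(d\mu_{\bm d}(\theta')) = e(d_h + d_0 - e) + (h-1)e^2 - (h-1)e^2 = e(d_h+d_0-e) = \dim\mathcal M_e$.)

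The main obstacle is verifying that the middle-layer factors can genuinely be chosen so that the partial products have row space exactly $Row(\W)$ on one side and column space exactly $Col(\W)$ on the other, simultaneously and consistently, while the full product still equals $\W$ — i.e., making the ``funnel through an $e$-dimensional channel'' construction precise and checking it is well-defined for all admissible width sequences (in particular when some $d_i$ equal $e$, forcing the bottleneck, versus $d_i > e$, where we still deliberately drop to rank $e$). Once the factorization is written down explicitly, the computation of the image is immediate from~\eqref{eq:image_differential}, so I expect the bookkeeping of the factorization to be the only real work; everything else follows from the tangent space formula and the monotonicity of the chains~\eqref{eq:nestedVS}.
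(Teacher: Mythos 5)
Your proposal is correct and follows essentially the same route as the paper: the containment is read directly off \eqref{eq:image_differential} together with the nesting \eqref{eq:nestedVS}, and the second claim is proved by an explicit bottleneck factorization whose partial products all have column space $Col(\W)$ and row space $Row(\W)$, so that the image of the differential collapses exactly to $T_{\W}\mathcal M_e$. The only cosmetic difference is that you funnel through an $e$-dimensional channel built from a rank-$e$ factorization $\W = AB$, whereas the paper pads a decomposition $\W = UV^T$ to channel width $r$; your variant is, if anything, slightly cleaner since it makes the required equalities of row and column spaces automatic.
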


\begin{proof}
Due to~\eqref{eq:nestedVS} the image~\eqref{eq:image_differential} of $d\mu_{\bm d}(\theta)$ always contains 
$ \RR^{d_h} \otimes Row(\W) + Col(\W) \otimes \RR^{d_0} = T_{\W} \mathcal
M_e$.
Furthermore, there always exists $(W_h,\ldots,W_1) \in \mu_{\bm d}^{-1}(\W)$ such that each $W_i$ has rank exactly $r$ and the containments in~\eqref{eq:nestedVS} are all equalities. For example, one way to achieve this is to consider any decomposition $\W = U V^T$ where $U \in \RR^{d_h \times r}$ and $V = \RR^{d_0 \times r}$ and then set
$W_1 = [V \,|\, {0}]^T$, $W_{h} = [U \, | \, 0]$, and $W_i = \begin{bmatrix}I_r & 0 \\ 0 & 0 \end{bmatrix}$ for $2 \le i \le h-1$, where $I_r$ is the $(r\times r)$-identity matrix and the zeros fill in the dimensions $(d_{i} \times d_{i-1})$ of $W_i$.
\end{proof}

The next two propositions discuss the first part of Theorem~\ref{thm:differential_image}, which distinguishes between the filling and the non-filling case.

\begin{proposition}
\label{prop:notMaximalRankNonFilling}
Let $r = \min \{d_i\}$ and $\theta = (W_h,\ldots,W_1)$.
In the non-filling case 
(\ie, if $r < \min \lbrace d_h, d_0 \rbrace$)
we have that
$\rk(d\mu_{\bm d}(\theta)) < \dim \mathcal{M}_r$   
if and only if
$\rk(\mu_{\bm d}(\theta)) < r$.
\end{proposition}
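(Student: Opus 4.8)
The plan is to prove the equivalence via Lemma~\ref{lem:rank_differential}, analyzing when the formula for $\rk(d\mu_{\bm d}(\theta))$ equals $\dim \mathcal{M}_r = r(d_h+d_0-r)$ in the non-filling case. Write $e_i^+ = \rk(W_{>i})$ and $e_i^- = \rk(W_{<i})$, so that by the nested containments in~\eqref{eq:nestedVS} we have $e := \rk(\W) \le e_1^+ \le \cdots \le e_{h-1}^+ \le d_h$ and $e \le e_h^- \le \cdots \le e_2^- \le d_0$, with $e_h^+ = d_h$ and $e_1^- = d_0$ by convention, and $e_i^\pm \le r$ for all intermediate indices since every $W_i$ has at most $r$ columns or rows (as $d_i \ge r$ for all $i$ with equality for the minimizing index). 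The formula reads $\rk(d\mu_{\bm d}(\theta)) = \sum_{i=1}^h e_i^+ e_i^- - \sum_{i=1}^{h-1} e_i^+ e_{i+1}^-$.

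First I would establish the ``if'' direction: if $\rk(\W) = e < r$, I want to show $\rk(d\mu_{\bm d}(\theta)) < \dim\mathcal{M}_r$. The cleanest route is to apply Proposition~\ref{prop:tangentContainment} together with a dimension count: since the image of $d\mu_{\bm d}(\theta)$ always contains $T_{\W}\mathcal{M}_e$ but is itself a sum of subspaces of the form $Col(W_{>i})\otimes Row(W_{<i})$, I would bound $\rk(d\mu_{\bm d}(\theta))$ directly. Actually the sharper approach: one shows the image is contained in $\{M : Col(M) \subseteq Col(W_{h}), Row(M) \subseteq Row(W_1)\} = Col(W_h) \otimes Row(W_1)$, no — that's too crude. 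Better: use the explicit formula. With all $e_i^\pm \le r$ and $e = \rk(\W) < r$, I claim the telescoping-style sum is maximized (over valid rank configurations) when $e = r$; since $e < r$ forces at least $e_1^+ \le $ something, one gets a strict drop. Concretely, a clean sub-step: show $\sum_{i=1}^h e_i^+ e_i^- - \sum_{i=1}^{h-1} e_i^+ e_{i+1}^- = d_h e_h^- + \sum_{i=1}^{h-1} e_i^+(e_i^- - e_{i+1}^-) \le d_h e_h^- + e_1^-\sum_{i=1}^{h-1}(e_i^- - e_{i+1}^-)$ — wait, I need to be careful with the direction of monotonicity; regrouping by $e_i^-$ instead, write it as $\sum_{i=1}^{h} e_i^-(e_i^+ - e_{i-1}^+)$ with $e_0^+ := 0$, which telescopes against the increasing sequence $e_i^+$, each coefficient $e_i^- \le$ the relevant bound. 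Since all $e_i^- \le \max(e_h^-, r) $ and $e_h^- \ge e$, careful bookkeeping yields $\rk(d\mu_{\bm d}(\theta)) \le r(d_h + d_0) - r^2$ with equality only if the rank sequence saturates at $r$, which fails when $e < r$.

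For the ``only if'' direction (contrapositive of the above, essentially): if $\rk(\W) = r$, then by the last part of Theorem~\ref{thm:differential_image} (or directly: $T_{\W}\mathcal{M}_r \subseteq \mathrm{im}(d\mu_{\bm d}(\theta))$ and $\mathrm{im}(d\mu_{\bm d}(\theta)) \subseteq T_{\W}\mathcal{M}_r$ as well, since each summand $Col(W_{>i})\otimes Row(W_{<i})$ has column space inside $Col(W_h)$... hmm, need $Col(W_{>i}) \subseteq Col(\W)$? No — that's false in general). The genuinely clean argument here is: the image always \emph{contains} $T_{\W}\mathcal{M}_e = T_{\W}\mathcal{M}_r$ when $e = r$, so $\rk(d\mu_{\bm d}(\theta)) \ge \dim\mathcal{M}_r$; and since the image is tangent to the image variety $\mathcal{M}_r$ at a smooth point, $\rk(d\mu_{\bm d}(\theta)) \le \dim\mathcal{M}_r$; hence equality. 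So it suffices to prove the forward implication ``$\rk(\W) < r \Rightarrow \rk(d\mu_{\bm d}(\theta)) < \dim\mathcal{M}_r$'' and combine with this trivial converse.

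The main obstacle I anticipate is the combinatorial bookkeeping in the ``if'' direction: showing that the alternating-sum formula from Lemma~\ref{lem:rank_differential}, subject to the monotonicity constraints $e \le e_1^+ \le \cdots \le d_h$, $d_0 \ge \cdots \ge e_2^- \ge e$, and $e_i^\pm \le r$ for intermediate $i$, is \emph{strictly} less than $r(d_h+d_0-r)$ whenever $e < r$. I would handle this by the telescoping regrouping $\rk(d\mu_{\bm d}(\theta)) = \sum_{i=1}^{h} e_i^- (e_i^+ - e_{i-1}^+)$ (with $e_0^+ = 0$), noting $\sum_i (e_i^+ - e_{i-1}^+) = d_h$, and bounding: the terms with intermediate index contribute coefficients $e_i^- \le r$, while the boundary contributes at most $e_h^- \cdot (\text{remaining})$; a symmetric regrouping in the $e_i^+$ variables gives a second bound, and intersecting the two constraints pins the maximum at the fully-saturated configuration $e_i^+ = e_i^- = r$, value exactly $r(d_h + d_0 - r)$, with any deficit $e < r$ propagating to a strict inequality. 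The edge cases $h = 1$ (vacuous, since then $r = \min(d_h,d_0)$, filling) and small $h$ should be checked but are immediate.
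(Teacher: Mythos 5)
Your easy direction is fine and is the same as the paper's: when $\rk(\W)=r$ the image of $d\mu_{\bm d}(\theta)$ contains $T_{\W}\mathcal M_r$ by Proposition~\ref{prop:tangentContainment}, and it is contained in that tangent space because $\W$ is a smooth point of the image variety, so the rank equals $\dim\mathcal M_r$. The problem is the main direction. First, your asserted constraint that $e_i^\pm \le r$ ``for all intermediate indices'' is false: $\rk(W_{>i})\le\min\{d_i,\ldots,d_h\}$, which can exceed $r$ when the bottleneck index lies below $i$. For instance, with $\bm d=(3,3,1,3)$ (so $r=1$) the partial product $W_{>2}=W_3\in\RR^{3\times3}$ can have rank $3$. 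Second, and more fundamentally, the constraints you do list (monotonicity of the two rank chains, dimension bounds, and $e\le e_i^\pm$) only bound the partial ranks \emph{from below} by $e=\rk(\W)$, so they cannot make the inequality strict when $e<r$: already for $h=2$ your constraint set allows $\rk(W_2)=\rk(W_1)=r$ together with any value of $e$, and this configuration attains exactly $\dim\mathcal M_r$ in the formula of Lemma~\ref{lem:rank_differential}; for $\bm d=(3,3,1,3)$ the tuple $\rk(W_{>1})=1$, $\rk(W_{>2})=3$, $\rk(W_{<2})=\rk(W_{<3})=1$ satisfies everything you list with $e=0$ and evaluates to $5=\dim\mathcal M_1$. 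What excludes such configurations is a coupling between $e$ and the partial ranks that your outline never supplies, and your phrase ``any deficit $e<r$ propagating to a strict inequality'' is precisely the unproven step.

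The missing idea is the bottleneck observation the paper uses: in the non-filling case there exists $i$ with $d_i=r$, and $\rk(\W)<r$ forces $\rk(W_{>i})<r$ or $\rk(W_{<i+1})<r$, since if both had rank $r=d_i$ then $W_{>i}$ would have full column rank and $W_{<i+1}$ full row rank, making $\rk(\W)=r$. With this in hand, the paper factors $\mu_{\bm d}$ through the two-factor map $\mu_{(d_h,d_i,d_0)}$, uses the chain rule to get $\rk(d\mu_{\bm d}(\theta))\le\rk\bigl(d\mu_{(d_h,d_i,d_0)}(W_{>i},W_{<i+1})\bigr)=\rk(W_{<i+1})\bigl(d_h-\rk(W_{>i})\bigr)+\rk(W_{>i})\,d_0$, and then the strict bound $\rk(W_{<i+1})<r$ (without loss of generality) together with $\rk(W_{>i})\le r<d_h,d_0$ gives a value strictly below $r(d_h+d_0-r)$. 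Your telescoping regrouping of the rank formula is correct as an identity, but without this coupling step (or an equivalent Sylvester-type inequality tying $e$ to the ranks on the two sides of a width-$r$ layer) the optimization you describe genuinely cannot conclude, so the proposal has a real gap rather than just omitted bookkeeping.
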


\begin{proof}
If $\rk(\mu_{\bm d}(\theta)) = r$, then Proposition~\ref{prop:tangentContainment} implies 
that the image of the differential $d\mu_{\bm d}(\theta)$ is the whole tangent space of $\mathcal{M}_r$ at $\mu_{\bm d}(\theta)$.
To prove the other direction of the assertion, we assume that $\rk(\mu_{\bm d}(\theta)) < r$.
Since $r < \min \lbrace d_h, d_0 \rbrace$, there is some $i \in \lbrace 1, \ldots, h-1 \rbrace$ such that $d_i = r$.
We view $\mu_{\bm d}$ as the following concatenation of the matrix multiplication maps:
\begin{align}
    \label{eq:factorMu2matricesMiddle}
    \RR^{d_h \times d_{h-1}} \times \ldots\times \RR^{d_1 \times d_0}
    \stackrel{\mu_{i,1}}{\longrightarrow}
    \RR^{d_h \times d_i} \times \RR^{d_i \times d_0}
    \stackrel{\mu_{i,2}}{\longrightarrow}
    \RR^{d_h \times d_0},
\end{align}
where $\mu_{i,1} = \mu_{(d_h,\ldots,d_i)} \times \mu_{(d_i,\ldots,d_0)}$
and $\mu_{i,2} = \mu_{(d_h,d_i,d_0)}$.
Since $\rk(\mu_{\bm d}(\theta)) < r$,
we have that $\rk(W_{>i}) < r$ or $\rk(W_{<i+1}) < r$.
Without loss of generality, we may assume the latter.
So applying Lemma~\ref{lem:rank_differential} to $\mu_{i,2}$ and $\theta' := \mu_{i,1}(\theta)$ yields
\begin{align*}
    \rk(d\mu_{\bm d}(\theta))
    &\leq \rk(d{\mu_{i,2}}(\theta'))
    = \rk(W_{<i+1}) \left( d_h - \rk(W_{>i}) \right) + \rk(W_{>i}) d_0 \\
    &< r \left( d_h - \rk(W_{>i})\right) + \rk(W_{>i}) d_0
    = \rk(W_{>i})\left( d_0 - r \right) + r\,d_h \\
    &\leq r \left( d_0 - r \right) + r\,d_h 
    = \dim \left( \mathcal{M}_r \right).
\qedhere
\end{align*}
\end{proof}

\begin{proposition}
\label{prop:notMaximalRankFilling}
Let $r = \min \{d_i\}$ and $\theta = (W_h,\ldots,W_1)$.
In the filling case
(\ie, if $r = \min \lbrace d_h, d_0 \rbrace$)
we have that 
$\rk(d\mu_{\bm d}(\theta)) < d_h d_0$
if and only if there is some $i \in \lbrace 1, \ldots, h-1 \rbrace$
with $\rk(W_{>i}) < d_h$ and $\rk(W_{<i+1}) < d_0$.
\end{proposition}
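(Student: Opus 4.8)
The plan is to exploit Lemma~\ref{lem:rank_differential}, which expresses $\rk(d\mu_{\bm d}(\theta))$ as a telescoping-type sum in the ranks $a_i := \rk(W_{>i})$ and $b_i := \rk(W_{<i})$, together with the monotonicity chains in~\eqref{eq:nestedVS}. Recall that in the filling case $r = \min\{d_h,d_0\}$; without loss of generality assume $d_h \le d_0$ (so $r = d_h$), the other case being symmetric by transposition. Set the convention $a_h = d_h$, $b_1 = d_0$ as in Lemma~\ref{lem:rank_differential}, and note $a_i \le d_h$, $b_i \le d_0$ for all $i$, with $a_1 \le a_2 \le \cdots \le a_h$ and $b_1 \ge b_2 \ge \cdots \ge b_h$ (these are exactly the inclusions~\eqref{eq:nestedVS} read off as dimensions). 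The target dimension is $d_h d_0$.

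First I would rewrite the rank formula as a single sum over $i = 1, \ldots, h-1$ after extracting the $i = h$ term: since $a_h = d_h$, the expression becomes
\[
\rk(d\mu_{\bm d}(\theta)) = d_h\, b_h + \sum_{i=1}^{h-1} a_i(b_i - b_{i+1}) = d_h\, b_1 - \sum_{i=1}^{h-1}(d_h - a_i)(b_i - b_{i+1}),
\]
where the last equality is an Abel summation using $b_1 = d_0$. This is the key identity: it writes the rank as $d_h d_0$ minus a sum of manifestly nonnegative terms $(d_h - a_i)(b_i - b_{i+1}) \ge 0$ (nonnegative because $a_i \le d_h$ and $b_i \ge b_{i+1}$). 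Hence $\rk(d\mu_{\bm d}(\theta)) < d_h d_0$ if and only if at least one summand is strictly positive, i.e.\ there is some $i$ with $a_i < d_h$ and $b_i > b_{i+1}$. It remains to upgrade the condition "$b_i > b_{i+1}$'' to "$b_{i+1} < d_0$'', which is what the proposition claims. For one direction this is immediate: if $a_j < d_h$ and $b_{j+1} < d_0 = b_1$, then by monotonicity of the $b$'s there is some $i \le j$ with $b_i > b_{i+1}$, and $a_i \le a_j < d_h$, so that summand is positive and the rank drops. For the converse, suppose some summand $(d_h - a_i)(b_i - b_{i+1})$ is positive; then $a_i < d_h$, and since $b_i > b_{i+1}$ we certainly have $b_{i+1} < b_i \le d_0$, so taking this same index $i$ gives $\rk(W_{>i}) = a_i < d_h$ and $\rk(W_{<i+1}) = b_{i+1} < d_0$, as required.

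The main obstacle is bookkeeping the Abel/telescoping rearrangement correctly and making sure the boundary conventions ($a_h = d_h$, $b_1 = d_0$, and the meaning of $W_{<h}$ vs.\ $W_{<h+1}$, etc.) are handled consistently with the indexing in Lemma~\ref{lem:rank_differential}; there is no deep difficulty, only the risk of an off-by-one error in the summation limits. A secondary point worth a careful sentence is the reduction "without loss of generality $d_h \le d_0$'': one should note that transposing all the $W_i$ (i.e.\ replacing the chain by $W_1^T, \ldots, W_h^T$) swaps the roles of $d_h$ and $d_0$ and of the $a_i$ and $b_i$ chains while preserving both $\rk(d\mu_{\bm d}(\theta))$ and the truth of the stated condition, so the symmetric case follows formally. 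Together with Proposition~\ref{prop:notMaximalRankNonFilling} and Proposition~\ref{prop:tangentContainment}, this completes the proof of Theorem~\ref{thm:differential_image}: in the filling case the negation of the condition "for every $i$, $a_i = d_h$ or $b_{i+1} = d_0$'' is exactly "there exists $i$ with $a_i < d_h$ and $b_{i+1} < d_0$,'' which we have just shown is equivalent to non-maximality of the rank.
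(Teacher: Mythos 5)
Your proof is correct, and it takes a genuinely different route from the paper, even though both start from Lemma~\ref{lem:rank_differential}. The paper proves the two implications separately: for one direction it factors $\mu_{\bm d}$ through the intermediate layer as in~\eqref{eq:factorMu2matricesMiddle} and bounds $\rk(d\mu_{\bm d}(\theta)) \le \rk(d\mu_{i,2}(\theta'))$ by the two-factor rank formula; for the converse it runs a case analysis on the index set $\mathcal{I} = \{i \mid \rk(W_{<i+1}) = d_0\}$ (empty or not, with $k = \max\mathcal{I}$) and checks that the sum in Lemma~\ref{lem:rank_differential} telescopes to $d_hd_0$. You instead package everything into the single Abel-summation identity $\rk(d\mu_{\bm d}(\theta)) = d_hd_0 - \sum_{i=1}^{h-1}(d_h - a_i)(b_i - b_{i+1})$, whose summands are nonnegative by the monotonicity of the chains in~\eqref{eq:nestedVS}; I checked the rearrangement and it is correct, and your translation between ``$b_i > b_{i+1}$'' and ``$b_{i+1} < d_0$'' via monotonicity of both the $a_i$ and $b_i$ is also sound. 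Your route is arguably cleaner: it yields both directions at once, gives an exact formula for the rank deficiency rather than just an inequality, and in fact never uses the filling hypothesis or your ``without loss of generality $d_h \le d_0$'' reduction (the identity and the equivalence hold verbatim for any widths, the transposition argument being superfluous). What the paper's argument buys in exchange is structural parallelism: the factorization through $\mu_{i,2}$ is exactly the device reused in Proposition~\ref{prop:notMaximalRankNonFilling} and later in the proof of Proposition~\ref{prop:spurious_rank}, so the two cases of Theorem~\ref{thm:differential_image} are handled with the same tool, whereas your identity is a self-contained computation specific to this statement.
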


\begin{proof}
Let us first assume that
$\rk(W_{>i}) < d_h$ and $\rk(W_{<i+1}) < d_0$ for some $i \in \lbrace 1, \ldots, h-1 \rbrace$.
We view $\mu_{\bm d}$ as the concatenation of the matrix multiplication maps in~\eqref{eq:factorMu2matricesMiddle}.
Applying Lemma~\ref{lem:rank_differential} to $\mu_{i,2}$ and $\theta' := \mu_{i,1}(\theta)$ yields
\begin{align*}
    \rk(d\mu_{\bm d}(\theta))
    &\leq \rk(d\mu_{i,2}(\theta'))
    = \rk(W_{<i+1}) \left( d_h - \rk(W_{>i}) \right) + \rk(W_{>i}) d_0 \\
    &< d_0 \left( d_h - \rk(W_{>i}) \right) + \rk(W_{>i}) d_0
    = d_h d_0.
\end{align*}

Secondly, we assume the contrary, \ie, that every $i \in \lbrace 1, \ldots, h-1 \rbrace$
satisfies $\rk(W_{>i}) = d_h$ or $\rk(W_{<i+1}) = d_0$.
We observe the following $2$ key properties which hold for all $i \in \lbrace 1, \ldots, h-1 \rbrace$:
\begin{align}
    \label{eq:nestedRanks}
    \begin{split}
        \rk(W_{>i}) = d_h \quad &\Rightarrow \quad \forall j \geq i: \rk(W_{>j}) = d_h , \\
        \rk(W_{<i+1}) = d_0 \quad &\Rightarrow \quad \forall j \leq i: \rk(W_{<j+1}) = d_0 .
    \end{split}
\end{align}
We consider the index set $\mathcal{I} := \lbrace i \in \lbrace 1, \ldots, h-1 \rbrace \mid \rk(W_{<i+1}) = d_0 \rbrace$.
If $\mathcal I = \emptyset$, our assumption implies that $\rk(W_{>i}) = d_h$ for every $i \in \lbrace 1, \ldots, h \rbrace$.
So due to Lemma~\ref{lem:rank_differential}  we have
\begin{align*}
    \rk(d\mu_{\bm d}(\theta)) = d_h \sum \limits_{i=1}^{h} \rk(W_{<i}) - d_h \sum \limits_{i=1}^{h-1} \rk(W_{<i+1})
    = d_h \rk(W_{<1}) = d_h d_0.
\end{align*}

If $\mathcal I \neq \emptyset$, we define $k := \max \mathcal I$.
So for every $i \in \lbrace k+1, \ldots, h-1 \rbrace$ we have $\rk(W_{<i+1}) < d_0$,
and thus $\rk(W_{>i}) = d_h$ by our assumption. 
Moreover, due to~\eqref{eq:nestedRanks}, every $j \in \lbrace 0, \ldots, k \rbrace$ satisfies $\rk(W_{<j+1}) = d_0$.
Hence, Lemma~\ref{lem:rank_differential} yields
\begin{align*}
    \rk(d\mu_{\bm d}(\theta)) = & \sum \limits_{j=1}^{k} \rk(W_{>j}) d_0 + d_h d_0 +  \sum \limits_{i=k+2}^{h} d_h \rk(W_{<i}) 
    \\ & - 
     \sum \limits_{j=1}^k \rk(W_{>j}) d_0 - \sum \limits_{i=k+1}^{h-1} d_h \rk(W_{<i+1}) 
    = d_h d_0 .
\qedhere
\end{align*}
\end{proof}

\begin{example}
\label{ex}
\rm
According to Proposition~\ref{prop:notMaximalRankFilling}, the differential of the matrix multiplication map is surjective whenever $\rk(\W) = r$, but also for certain $\theta$ when $\rk(\W) < r$. For example, let us consider the map
$\mu_{(2,2,2)}: \RR^{2 \times 2} \times \RR^{2 \times 2} \to \RR^{2 \times 2}$
and the two factorizations
$\theta = (\left[ \begin{smallmatrix} 1 & 1 \\ 1 & 1 \end{smallmatrix} \right], \left[ \begin{smallmatrix} 1 & 0 \\ 0 & 1 \end{smallmatrix} \right])$ 
and $\theta' = (\left[ \begin{smallmatrix} 1 & 0 \\ 1 & 0 \end{smallmatrix} \right], \left[ \begin{smallmatrix} 1 & 1 \\ 0 & 0 \end{smallmatrix} \right])$
of the rank-one matrix $\left[ \begin{smallmatrix} 1 & 1 \\ 1 & 1 \end{smallmatrix} \right]$.
According to Proposition~\ref{prop:notMaximalRankFilling}, the differential $d \mu_{(2,2,2)} (\theta)$ has maximal rank $4$.
So it is surjective, whereas $d \mu_{(2,2,2)} (\theta')$ is not.
In fact, by Lemma~\ref{lem:rank_differential}, we have  $\rk(d \mu_{(2,2,2)} (\theta')) = 3$.
\end{example}

\DifferentialImage*

\begin{proof}
This is an amalgamation of Propositions~\ref{prop:tangentContainment}, \ref{prop:notMaximalRankNonFilling} and \ref{prop:notMaximalRankFilling}.
\end{proof}

\subsection{Proof of Theorem~\ref{thm:fiberMMMcomponents}}
In the following we use the notation from Theorem~\ref{thm:fiberMMMcomponents}:
\FiberComponents*
We also write $\mathrm{GL}^+(r)$ for the set of matrices in $\mathrm{GL}(r)$ with positive determinant.
Analogously, we set $\mathrm{GL}^-(r) := \lbrace G \in \mathrm{GL}(r) \mid \det(G) < 0 \rbrace$.

We first prove Theorem~\ref{thm:fiberMMMcomponents} in the case that $b=0$.
To show that $\inv{\mu_{\bm d}}\W$ is path-connected in this case, we show the following stronger assertion:
given two matrices $W$ and $W'$ of arbitrary rank and factorizations $\theta \in \inv{\mu_{\bm d}}W$ and $\theta' \in \inv{\mu_{\bm d}}{W'}$, each path in the codomain of $\mu_{\bm d}$ from $W$ to $W'$ can be lifted to a path in the domain of $\mu_{\bm d}$ from $\theta$ to $\theta'$.

\begin{proposition}[Path Lifting Property]
\label{prop:pathLifting}
If $b=0$, then for every $W, W' \in \RR^{d_y \times d_x}$, every $\theta \in {\mu_{\bm d}^{-1}}(W)$, every $\theta' \in {\mu_{\bm d}^{-1}}(W')$ and every continuous function $f: [0, 1] \to \RR^{d_y \times d_x}$ with $f(0) = W$ and $f(1) = W'$, 
there is a continuous function $F: [-1,2] \to \RR^{d_{y} \times d_{h-1}} \times \ldots \times \RR^{d_1 \times d_x}$
such that $F(-1) = \theta$, $F(2) = \theta'$, $\mu_{\bm d}(F(t)) = W$ for every $t \in [-1,0]$, $\mu_{\bm d}(F(t)) = W'$ for every $t \in [1,2]$
and $\mu_{\bm d}(F(t)) = f(t)$ for every $t \in [0,1]$.
\end{proposition}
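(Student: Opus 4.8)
The idea is to reduce the general path-lifting statement to two ingredients: (i) a \emph{local} surjectivity statement saying that any small perturbation of $\overline W = W_h\cdots W_1$ can be realized by a small perturbation of the factors $(W_h,\ldots,W_1)$, and (ii) the observation that, because $b=0$, there are "enough" degrees of freedom in the factorization to reach any target factorization once we have matched the product. Concretely, I would first prove that the image of $d\mu_{\bm d}(\theta)$ contains $T_{\overline W}\mathcal M_e$ where $e = \rk(\overline W)$ (this is exactly the last statement of Theorem~\ref{thm:differential_image}, which I may assume); this is what lets me lift a path that stays in a single rank stratum. To handle a path $f$ that crosses different rank strata, I would note that $f$ meets only finitely many "critical'' parameter values and otherwise stays inside a stratum $\mathcal M_e \setminus \mathcal M_{e-1}$, on which $\mu_{\bm d}$ restricts to a submersion onto $\mathcal M_e$ after choosing a suitable factorization (Proposition~\ref{prop:tangentContainment}); the curve selection / transversality argument then produces a local lift, and compactness of $[0,1]$ glues finitely many local lifts into a global one. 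The endpoints of this lifted path will be \emph{some} factorizations $\tilde\theta \in \mu_{\bm d}^{-1}(W)$ and $\tilde\theta' \in \mu_{\bm d}^{-1}(W')$, which is why the statement allows the extra intervals $[-1,0]$ and $[1,2]$: on those we must connect $\theta$ to $\tilde\theta$ (resp. $\theta'$ to $\tilde\theta'$) \emph{within the fiber}, i.e.\ without changing the product.

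For the within-fiber connectivity I would argue as follows. Any two factorizations of the same matrix $W$ differ, up to the action of the group $\prod_{i=1}^{h-1}\mathrm{GL}(d_i)$ acting by $(W_{i+1},W_i)\mapsto(W_{i+1}G_i^{-1},G_iW_i)$, together with a "stabilizer'' part coming from the fact that each $W_i$ need not have full rank. The key point is that $b=0$ means no intermediate width equals $r=\min\{d_i\}$, so along any factorization at least one of $\rk(W_{>i})$, $\rk(W_{<i+1})$ is strictly less than $d_i$ for the "bottleneck'' index, which gives room to perform the standard normal-form reductions continuously: one can continuously deform an arbitrary factorization to the "canonical'' one used in the proof of Proposition~\ref{prop:tangentContainment} ($W_1 = [V\mid 0]^T$, $W_h = [U\mid 0]$, middle factors the rank-$r$ projector), keeping the product fixed. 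Since $\mathrm{GL}^+(d_i)$ is connected and (because $b=0$) the extra slack lets us absorb any $\mathrm{GL}^-$ piece — i.e.\ we can realize a sign change of a determinant by a path that passes through a rank-deficient factor without altering the product — the fiber $\mu_{\bm d}^{-1}(W)$ is path-connected. This is precisely the $b=0$ specialization of the $2^b$-components statement, and it supplies the two "capping'' paths on $[-1,0]$ and $[1,2]$.

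Assembling: choose canonical factorizations $\tilde\theta,\tilde\theta'$ of $W,W'$; on $[-1,0]$ run a fiber path from $\theta$ to $\tilde\theta$; on $[0,1]$ run the lift of $f$ starting at $\tilde\theta$, arranged (by the submersion argument, starting from a canonical factorization) to end at $\tilde\theta'$; on $[1,2]$ run a fiber path from $\tilde\theta'$ to $\theta'$. Concatenating and reparameterizing (the intervals $[-1,2]$ can be rescaled to $[0,1]$, or left as stated) gives the desired $F$. The \textbf{main obstacle} is the middle step: producing a \emph{global} lift of $f$ whose endpoint is a \emph{prescribed} canonical factorization rather than an arbitrary one. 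The cleanest route is to first lift $f$ to \emph{some} path ending at some $\theta'' \in \mu_{\bm d}^{-1}(W')$ using local submersions plus compactness, and then use the already-established fiber connectivity to correct the endpoint — so the two ingredients are genuinely interdependent, and care is needed to handle the rank jumps of $f$ (where the local submersion picture degenerates) via an explicit perturbation of the factors, exactly as in the construction in Proposition~\ref{prop:tangentContainment}.
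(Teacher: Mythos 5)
There is a genuine gap in your middle step, the global lift of $f$ over $[0,1]$. You propose to stratify $[0,1]$ by the rank of $f(t)$, claim that $f$ ``meets only finitely many critical parameter values'', and glue local lifts obtained from submersion/transversality arguments. But $f$ is only assumed continuous: it can enter and leave the lower-rank strata infinitely often, stay in a singular stratum on a complicated closed set, or be nowhere differentiable, so there is no finiteness and no curve-selection or transversality available. Moreover, $\mu_{\bm d}$ is not a fibration (it is far from proper), so ``local submersion plus compactness'' does not by itself produce a global lift; and at a factorization of a rank-deficient matrix the image of $d\mu_{\bm d}(\theta)$ is only guaranteed to contain $T_{\W}\mathcal M_e$ (Theorem~\ref{thm:differential_image}), which does not contain the directions that increase rank, so the local lifting degenerates exactly at the times you would need it most. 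Pointing to the construction in Proposition~\ref{prop:tangentContainment} does not help here: that construction produces a factorization whose differential image is \emph{exactly} $T_{\W}\mathcal M_e$, i.e.\ the smallest possible, which is the opposite of what lifting requires.

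The paper's proof avoids all of this with an idea your plan never uses: since $b=0$ forces every intermediate width to exceed $\min(d_y,d_x)$, one can move \emph{within the fiber} (this is the content of Claim~\ref{cl:pathLifting2Matrices} and the surrounding argument, proved by explicit paths using the connectivity of $\mathrm{GL}^+$ and the slack columns/rows) to the canonical factorization $\bigl(\left[\begin{smallmatrix} I_{d_y} & 0\end{smallmatrix}\right],\left[\begin{smallmatrix} W\\ 0\end{smallmatrix}\right]\bigr)$ for $h=2$; from that canonical form the lift of an \emph{arbitrary} continuous $f$ is completely explicit, namely $t\mapsto \bigl(\left[\begin{smallmatrix} I_{d_y} & 0\end{smallmatrix}\right],\left[\begin{smallmatrix} f(t)\\ 0\end{smallmatrix}\right]\bigr)$, with no case analysis on the rank of $f(t)$ whatsoever. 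The general case then follows by induction on $h$, factoring $\mu_{\bm d}$ through $\RR^{d_y\times d_1}\times\RR^{d_1\times d_x}$ and lifting the first component of the $h=2$ lift by the induction hypothesis. Your within-fiber ``capping'' paths on $[-1,0]$ and $[1,2]$ are in the right spirit (and match the role these intervals play in the paper), but note also that in the paper the path-connectedness of the fiber for $b=0$ (Corollary~\ref{cor:fiberMMMb0}) is a \emph{consequence} of this proposition applied to a constant path, so if you invoke it you must prove it independently, essentially by the same explicit constructions; as written, your sketch of that step plus the flawed stratified lifting does not yield a complete proof.
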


\begin{proof}
Without loss of generality, we may assume that $d_y \leq d_x$.
Then the assumption $b=0$ means that $d_i > d_y$ for all $i = 1, \ldots, h-1$.

We prove the assertion by induction on $h$.
For the induction beginning, we consider the cases $h=1$ and $h=2$.
If $h=1$, then $\mu_{\bm d}$ is the identity and Proposition~\ref{prop:pathLifting} is trivial.
For $h=2$, we construct explicit lifts of the given paths.  
We first show that there is a path in ${\mu_{\bm d}^{-1}}(W)$ from $\theta = (W_2,W_1)$ to some $(B_2,B_1)$ such that $B_2$ has full rank. 

\begin{claim}
\label{cl:pathLifting2Matrices}
Let $h=2$, $W \in \RR^{d_y \times d_x}$ and $(W_2,W_1) \in \inv{\mu_{\bm d}}{W}$.
Then there is $(B_2, B_1) \in \inv{\mu_{\bm d}}{W}$ with $\rk(B_2) = d_y$
and a continuous function $g: [0,1] \to \inv{\mu_{\bm d}}{W}$
such that $g(0) = (W_2,W_1)$ and $g(1) = (B_2,B_1)$.
\end{claim}

\begin{proof}\renewcommand\qedsymbol{$\diamondsuit$}
If $\rk(W_2) = d_y$, we have nothing to show.
So we assume that $ s:= \rk(W_2) < d_y$.
Without loss of generality, we may further assume that the first $s$ rows of $W_2$ have rank $s$.
As $s < d_1$, we find a matrix $G \in \mathrm{GL}^+(d_1)$  such that
$W_2 G = \left[ \begin{smallmatrix} I_s & 0 \\ M & 0 \end{smallmatrix} \right]$,
where $I_s \in \RR^{s \times s}$ is the identity matrix
and $M \in \RR^{(d_y-s) \times s}$.
Since $\mathrm{GL}^+(d_1)$ is path-connected, 
there is a continuous function $g'_1: [0,1] \to \mathrm{GL}^+(d_1)$ with $g'_1(0) = I_{d_1}$ and $g'_1(1) = G$.
Concatenation with $\mathrm{GL}(d_1) \to \inv{\mu_{\bm d}}{W}$, $H \mapsto (W_2 H, H^{-1} W_1)$
yields a continuous path $g_1$ in $\inv{\mu_{\bm d}}{W}$ from $(W_2,W_1)$ to
$(W_2G, G^{-1} W_1)$.

Since $(W_2 G)(G^{-1} W_1) = W$, we see that $G^{-1} W_1 = \left[ \begin{smallmatrix} W_{(s)} \\ N \end{smallmatrix} \right]$,
where $W_{(s)} \in \RR^{s \times d_x}$ is the first $s$ rows of $W$
and $N \in \RR^{(d_y-s) \times d_x}$.
Replacing $N$ by an arbitrary matrix $N'$ still yields that $W_2G\left[ \begin{smallmatrix} W_{(s)} \\ N' \end{smallmatrix} \right] = W$.
Hence, we find a continuous path $g_2$ in $\inv{\mu_{\bm d}}{W}$ from $(W_2G, G^{-1} W_1)$ to $(W_2 G, B_1 := \left[ \begin{smallmatrix} W_{(s)} \\ 0 \end{smallmatrix} \right] )$.

Finally, we can replace the $0$-columns in $W_2G$ by arbitrary matrices $M_1 \in \RR^{s \times (d_1-s)}$ and $M_2 \in \RR^{(d_y-s) \times (d_1-s)}$
such that $\left[ \begin{smallmatrix} I_s & M_1 \\ M & M_2 \end{smallmatrix} \right]B_1 = W$ still holds.
In particular, we can pick $M_1 = 0$ and $M_2 = \left[ \begin{smallmatrix} I_{d_y-s} & 0  \end{smallmatrix} \right]$ such that
$B_2 := \left[ \begin{smallmatrix} I_s & 0 \\ M & M_2 \end{smallmatrix} \right]$ has full rank $d_y$, 
and we find a continuous path $g_3$ in $\inv{\mu_{\bm d}}{W}$ from $(W_2G, B_1)$ to $(B_2,B_1)$.
Putting $g_1$, $g_2$ and $g_3$ together yields a path $g$ as desired in Claim~\ref{cl:pathLifting2Matrices}.
\end{proof}

As $B_2$ has full rank, 
we find a matrix $H \in \mathrm{GL}^+(d_1)$
such that $B_2 H = \left[ \begin{smallmatrix} I_{d_y} & 0  \end{smallmatrix} \right]$.
As in the proof of Claim~\ref{cl:pathLifting2Matrices},
we construct a continuous path in $\inv{\mu_{\bm d}}{W}$ from $(B_2,B_1)$ to $(B_2H, H^{-1} B_1 )$.
Since $H^{-1}B_1 = \left[ \begin{smallmatrix} W \\ N  \end{smallmatrix} \right]$ for some $N \in \RR^{(d_1-d_y) \times d_x}$, 
we also find a continuous path in $\inv{\mu_{\bm d}}{W}$ from $(B_2H, H^{-1} B_1 )$ to $(B_2 H, \left[ \begin{smallmatrix} W \\ 0  \end{smallmatrix} \right])$.
All in all, we have constructed a continuous path $F_1$ in $\inv{\mu_{\bm d}}{W}$ from $(W_2, W_1)$ to $(\left[ \begin{smallmatrix} I_{d_y} & 0  \end{smallmatrix} \right], \left[ \begin{smallmatrix} W \\ 0  \end{smallmatrix} \right])$.
Analogously, we find a continuous path $F_3$ in $\inv{\mu_{\bm d}}{W'}$ between $(W'_2, W'_1)$ and $(\left[ \begin{smallmatrix} I_{d_y} & 0  \end{smallmatrix} \right], \left[ \begin{smallmatrix} W' \\ 0  \end{smallmatrix} \right])$.
Finally, we define $F_2: [0,1] \to \RR^{d_{y} \times d_{1}} \times \RR^{d_1 \times d_x}$, 
$t \mapsto (\left[ \begin{smallmatrix} I_{d_y} & 0  \end{smallmatrix} \right], \left[ \begin{smallmatrix} f(t) \\ 0  \end{smallmatrix} \right])$
such that putting $F_1$, $F_2$ and $F_3$ together yields a path $F$ as desired in Proposition~\ref{prop:pathLifting}.

For the induction step, we view $\mu_{\bm d}$
as the concatenation of the following two matrix multiplication maps:
\begin{align*}
    \RR^{d_y \times d_{h-1}} \times \ldots\times \RR^{d_1 \times d_x}
    \stackrel{\mu_{(d_h, \ldots, d_1)}\times \mathrm{id}}{\longrightarrow}
    \RR^{d_y \times d_1} \times \RR^{d_1 \times d_x}
    \stackrel{\mu_{(d_y,d_1,d_x)}}{\longrightarrow}
    \RR^{d_y \times d_x}.
\end{align*}
We consider $\theta = (W_{h}, \ldots, W_1)$ and $\theta' = (W_{h}', \ldots, W_1')$,
as well as
 $W_{>1}  = W_{h} \cdots W_2$
and $W'_{>1} = W'_{h} \cdots W'_2$.
Given a path $f:[0,1] \to \RR^{d_y \times d_x}$ from $W = W_{>1} W_1$ to $W' = W'_{>1} W'_1$,
we apply the induction beginning ($h=2$) to $\mu_{(d_y,d_1,d_x)}$
to get a path $F_2: [-0.5, 1.5] \to \RR^{d_{y} \times d_{1}} \times \RR^{d_1 \times d_x}$
such that $F_2(-0.5) = (W_{>1}, W_1)$,
$F_2(1.5) = (W'_{>1}, W'_1)$,
$\mu_{(d_y,d_1,d_x)}(F_2(t)) = W$ for all $t \in [-0.5, 0]$,
$\mu_{(d_y,d_1,d_x)}(F_2(t)) = W'$ for all $t \in [1, 1.5]$
and $\mu_{(d_y,d_1,d_x)}(F_2(t)) = f(t)$ for all $t \in [0, 1]$.
Now we apply the induction hypothesis on 
$\mu_{(d_h, \ldots, d_1)}$ and the path from $W_{>1}$ to $W'_{>1}$ given by the first factor of $F_2$.
This yields a path 
$F_1: [-1,2] \to \RR^{d_{y} \times d_{h-1}} \times \ldots \times \RR^{d_2 \times d_1}$
with $F_1(-1) = (W_{h}, \ldots W_2)$,
$F_1(2) = (W'_{h}, \ldots W'_2)$,
$\mu_{(d_h, \ldots, d_1)}(F_1(t)) = W_{>1}$ for all $t \in [-1,-0.5]$,
$\mu_{(d_h, \ldots, d_1)}(F_1(t)) = W'_{>1}$ for all $t \in [1.5,2]$
and
$\mu_{(d_h, \ldots, d_1)}(F_1(t))$ is the first factor of $F_2(t)$ for all $t \in [-0.5,1.5]$.
This allows us to define a continuous path $F:[-1,2] \to \RR^{d_{y} \times d_{h-1}} \times \ldots \times \RR^{d_1 \times d_x}$
from $\theta$ to $\theta'$
by setting $F(t) = (F_1(t), W_1)$ for all $t \in [-1,-0.5]$,
$F(t) = (F_1(t), W'_1)$ for all $t \in [1.5,2]$
and for all $t \in [-1,-0.5]$
we let $F(t)$ consist of $F_1(t)$ and the second factor of $F_2(t)$.
\end{proof}

\begin{corollary}
\label{cor:fiberMMMb0}
If $b=0$, then $\inv{\mu_{\bm d}}{\W}$ is path-connected for each $\W \in \RR^{d_y \times d_x}$.
\end{corollary}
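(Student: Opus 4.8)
The plan is to read this corollary off directly from the Path Lifting Property (Proposition~\ref{prop:pathLifting}): it is precisely the special case in which the path traced out in function space is constant. First I would fix $\W \in \RR^{d_y \times d_x}$ and note that when $b=0$ the minimum width $r = \min\{d_i\}$ cannot be attained at an intermediate index $0<i<h$, hence $r = \min\{d_0,d_h\}$, i.e.\ $\mu_{\bm d}$ is filling and its image is all of $\RR^{d_y\times d_x}$; in particular the fiber $\inv{\mu_{\bm d}}{\W}$ is nonempty. Then I would pick two arbitrary $\bm d$-factorizations $\theta, \theta' \in \inv{\mu_{\bm d}}{\W}$ and aim to connect them by a path inside the fiber.

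Next I would apply Proposition~\ref{prop:pathLifting} with $W = W' = \W$ and with $f : [0,1] \to \RR^{d_y \times d_x}$ the constant map $f \equiv \W$ (which is continuous and satisfies $f(0) = f(1) = \W$). This produces a continuous $F : [-1,2] \to \RR^{d_y \times d_{h-1}} \times \cdots \times \RR^{d_1 \times d_x}$ with $F(-1) = \theta$, $F(2) = \theta'$, and $\mu_{\bm d}(F(t)) = \W$ for all $t \in [-1,2]$: the three regimes guaranteed by the proposition, namely $\mu_{\bm d}(F(t)) = W$ on $[-1,0]$, $\mu_{\bm d}(F(t)) = f(t)$ on $[0,1]$, and $\mu_{\bm d}(F(t)) = W'$ on $[1,2]$, all collapse to $\mu_{\bm d}(F(t)) \equiv \W$ here. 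Reparameterizing the interval $[-1,2]$ affinely onto $[0,1]$, this $F$ is exactly a path inside $\inv{\mu_{\bm d}}{\W}$ joining $\theta$ to $\theta'$. Since $\theta,\theta'$ were arbitrary, $\inv{\mu_{\bm d}}{\W}$ is path-connected.

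I do not expect any genuine obstacle in this step: the entire content of the $b=0$ case of Theorem~\ref{thm:fiberMMMcomponents} has already been absorbed into Proposition~\ref{prop:pathLifting} — the induction on the depth $h$, the two-layer base case, and the row/column normalizations of Claim~\ref{cl:pathLifting2Matrices} that exploit path-connectedness of $\mathrm{GL}^+$. The corollary is merely the assertion that a constant path lifts to a path, the cheapest possible instance of the lifting property. One could alternatively argue more abstractly that "path-lifting together with path-connectedness of the codomain forces every fiber to be path-connected", but invoking Proposition~\ref{prop:pathLifting} on the constant path is the most direct route and avoids any extra bookkeeping.
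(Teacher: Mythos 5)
Your proposal is correct and coincides with the paper's own proof, which likewise deduces the corollary by applying Proposition~\ref{prop:pathLifting} to the constant function $f:[0,1]\to\RR^{d_y\times d_x}$, $t\mapsto \W$. The additional remarks (nonemptiness of the fiber via the filling property, the affine reparameterization of $[-1,2]$) are harmless bookkeeping and do not change the argument.
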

\begin{proof}
Apply Proposition~\ref{prop:pathLifting}
to the constant function 
$f:[0,1] \to \RR^{d_y \times d_x}$, $t \mapsto \W$.
\end{proof}

Now we study the case $b>0$.
We write $0 < i_1 < \ldots < i_b < h$ for those indices $i_j$ such that $d_{i_j} = r$.
Then we view $\mu_{\bm d}$ as the concatenation of the following two matrix multiplication maps:
\begin{align}
    \label{eq:factorMuSquareMatrices}
    \RR^{d_y \times d_{h-1}} \times \ldots\times \RR^{d_1 \times d_x}
    \stackrel{\mu_1}{\longrightarrow}
    \RR^{d_y \times d_{i_b}} \times \RR^{d_{i_b} \times d_{i_{b-1}}} \times  \ldots\times \RR^{d_{i_1} \times d_x}
     \stackrel{\mu_2}{\longrightarrow}
    \RR^{d_y \times d_x},
\end{align}
where $\mu_1 = \mu_{(d_h, \ldots, d_{i_b})} \times \mu_{(d_{i_b}, \ldots, d_{i_{b-1}})} \times \ldots \times \mu_{(d_{i_1}, \ldots, d_{0})}$
and $\mu_2 = \mu_{(d_y, d_{i_{b}}, \ldots, d_{i_{1}}, d_x)}$.
Applying the path lifting property described above to the map $\mu_1$, 
we will show in Proposition~\ref{prop:sameNunberOfConnComp} that $\inv{\mu_2}\W$ and $\inv{\mu_{\bm d}}\W$ have the same number of (path-)connected components.
So it remains to study the connected components of $\inv{\mu_2}\W$. 
We can shortly write the map $\mu_2$ as
\begin{align*}
    \mu_2: \RR^{d_y \times r} \times \left( \RR^{r \times r} \right)^{b-1} 
    \times \RR^{r \times d_x}
    \longrightarrow 
    \RR^{d_y \times d_x}.
\end{align*}

In the case that $\rk(\W) = r$,
we use the following natural action of $\mathrm{GL}(r)^b$ on $\inv{\mu_2}\W$:
\begin{align}
    \label{eq:actionOnFiber}
    \begin{split}
    \mathrm{GL}(r)^b \times \inv{\mu_2}\W &\longrightarrow \inv{\mu_2}\W, 
    \\
    \left( (G_b, \ldots, G_1), (A_{b+1}, \ldots, A_1) \right)
    &\longmapsto 
    \left( A_{b+1}G_b, G_b^{-1} A_b G_{b-1}, \ldots, G_1^{-1} A_1 \right).
    \end{split}
\end{align}
In fact, we show now that $\inv{\mu_2}\W$ is the orbit of any element under this action if $\rk(\W) = r$. 
From this we will deduce in Corollaries~\ref{cor:fiberHomeomGroupProduct} and~\ref{cor:fiberMu4}
that $\inv{\mu_2}\W$ is homeomorphic to $\mathrm{GL}(r)^b$
and thus has $2^b$ (path-)connected components if the matrix $\W$ has maximal rank $r$.

\begin{proposition}
\label{prop:orbitAction}
Let $b > 0$ and $\theta = (A_{b+1}, \ldots, A_1) \in \RR^{d_y \times r} \times \left( \RR^{r \times r} \right)^{b-1} 
  \times \RR^{r \times d_x}$
  such that $\W = \mu_2(\theta)$ has maximal rank $r$.
Then $\inv{\mu_2}\W$ is the orbit of $\theta$ under the action defined in~\eqref{eq:actionOnFiber}, \ie,
$$\inv{\mu_2}\W = \left\lbrace  \left( A_{b+1}G_b, G_b^{-1} A_b G_{b-1}, \ldots, G_1^{-1} A_1 \right) \mid G_1, \ldots, G_b \in \mathrm{GL}(r) \right\rbrace.$$
\end{proposition}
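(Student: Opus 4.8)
The plan is to establish the two inclusions separately. The inclusion ``orbit $\subseteq$ fiber'' is immediate: for any $(G_b,\ldots,G_1) \in \mathrm{GL}(r)^b$ the product $(A_{b+1}G_b)(G_b^{-1}A_bG_{b-1})\cdots(G_2^{-1}A_2G_1)(G_1^{-1}A_1)$ telescopes to $A_{b+1}A_b\cdots A_1 = \W$, so the orbit of $\theta$ is contained in $\inv{\mu_2}\W$. All the content is in the reverse inclusion, i.e.\ showing that every $\theta' = (A'_{b+1},\ldots,A'_1) \in \inv{\mu_2}\W$ is obtained from $\theta$ by such a conjugation.

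First I would record the rigidity forced by the rank hypothesis. Since every factorization of $\W$ has the shape $d_y\times r,\ r\times r,\ \ldots,\ r\times r,\ r\times d_x$ with $d_x,d_y \ge r$, one has $\rk(\W) \le \min_i \rk(A_i) \le r$; hence $\rk(\W) = r$ forces every factor of both $\theta$ and $\theta'$ to have rank exactly $r$. In particular the middle factors are invertible, $A_{b+1}$ and $A'_{b+1}$ have full column rank, and $A_1$, $A'_1$ have full row rank. Next I introduce the left partial products $P_k := A_{b+1}A_b\cdots A_{k+1} \in \RR^{d_y\times r}$ for $1 \le k \le b$ (so $P_b = A_{b+1}$ and $P_{k-1} = P_k A_k$), together with the analogous $P'_k$ for $\theta'$, and I also set $P_0 = P'_0 = \W$ and $G_0 = I_r$. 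Each $P_k$ with $k \ge 1$ is a product of $A_{b+1}$ (full column rank) with invertible matrices, hence injective; and from $\W = P_k\cdot(A_k\cdots A_1)$ we get $Col(\W) \subseteq Col(P_k)$, so $Col(P_k) = Col(\W)$ by a dimension count, and likewise $Col(P'_k) = Col(\W)$. Since $P_k$ and $P'_k$ are $(d_y\times r)$-matrices of full column rank with the same column space, there is a unique $G_k \in \mathrm{GL}(r)$ with $P'_k = P_k G_k$.

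It then remains to read off the conjugation formulas. For $1 \le k \le b$, combining $P'_{k-1} = P'_k A'_k$ with $P_{k-1} = P_kA_k$ and $P'_j = P_jG_j$ gives $P_k A_k G_{k-1} = P'_{k-1} = P_k G_k A'_k$, and cancelling the injective $P_k$ yields $A'_k = G_k^{-1}A_kG_{k-1}$; together with $A'_{b+1} = A_{b+1}G_b$ (which is just $P'_b = P_bG_b$) this exhibits $\theta'$ as the image of $\theta$ under $(G_b,\ldots,G_1)$, completing the proof. The only delicate points are bookkeeping ones: handling the two ends of the chain correctly (the factor $A_{b+1}$ sits outside the $P_k$-pattern, and $A_1$ corresponds to the boundary index $k=0$ with $P_0 = \W$), and making sure that the full-rank conclusions — which legitimize both the uniqueness of $G_k$ and the cancellation of $P_k$ — are derived up front from $\rk(\W) = r$ rather than tacitly assumed. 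There is no substantive obstacle beyond this.
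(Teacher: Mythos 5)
Your proof is correct, but it takes a different route from the paper. The paper argues by induction on $b$: the base case $b=1$ is handled by choosing (WLOG) an invertible $r\times r$ block $W_{11}$ of $\W$, block-decomposing $A_2,A_1,A_2',A_1'$, and extracting $G$ from the equality of the top blocks; the induction step then peels off $A_1$ by applying the base case to $(A_{b+1}\cdots A_2,\,A_1)$ and recurses on the remaining $b$ factors. You instead give a direct, non-inductive argument: from $\rk(\W)=r$ you force every factor to have rank $r$, observe that each left partial product $P_k=A_{b+1}\cdots A_{k+1}$ is injective with $Col(P_k)=Col(\W)=Col(P_k')$, define $G_k$ by $P_k'=P_kG_k$, and read off $A_k'=G_k^{-1}A_kG_{k-1}$ by left-cancelling the injective $P_k$ (with the boundary conventions $P_0=\W$, $G_0=I_r$). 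This buys you a cleaner global bookkeeping — no WLOG row/column permutation, no block-matrix case analysis, and the $G_k$ come with a uniqueness statement for free — at the cost of having to set up the partial-product formalism; the paper's induction, conversely, mirrors the sequential structure it reuses elsewhere (e.g.\ in the homeomorphism with $\mathrm{GL}(r)^b$). The only point you leave slightly implicit is why $G_k$ is invertible, but this is immediate from $r=\rk(P_k')=\rk(P_kG_k)\le\rk(G_k)$, so there is no gap.
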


\begin{proof}
One inclusion, namely ``$\supseteq$'', is trivial.
We prove the other inclusion ``$\subseteq$'' by induction on $b$.
For the induction beginning ($b=1$), we write $\W = \left[ \begin{smallmatrix} W_{11} & W_{12} \\ W_{21} & W_{22} \end{smallmatrix} \right]$ where $W_{11} \in \RR^{r \times r}$.
Without loss of generality, we may assume that $\rk(W_{11}) = r$.
Similarly, we write $A_2 = \left[ \begin{smallmatrix} A_{21} \\ A_{22} \end{smallmatrix} \right]$
and $A_1 = \left[ \begin{smallmatrix} A_{11} & A_{12} \end{smallmatrix} \right]$ where $A_{i1} \in \RR^{r \times r}$ for $i=1,2$.
For $(A'_2, A'_1) \in \inv{\mu_2}\W$, we write analogously $A'_2 = \left[ \begin{smallmatrix} A'_{21} \\ A'_{22} \end{smallmatrix} \right]$
and $A'_1 = \left[ \begin{smallmatrix} A'_{11} & A'_{12} \end{smallmatrix} \right]$.
Due to $\rk(W_{11}) = r$, we have that $\rk(A_{21}) = r = \rk(A'_{21})$.
Hence, there is a matrix $G \in \mathrm{GL}(r)$ such that $A'_{21} = A_{21}G$.
This implies that $A_{21}G A'_{11} = W_{11} = A_{21} A_{11}$, so $A'_{11} = G^{-1} A_{11}$.
Due to $A_{21}A_{12} = W_{12} = A_{21}G A'_{12}$, we get that $A'_{12} = G^{-1} A_{12}$.
Finally, $\rk(W_{11}) = r$ implies that $\rk(A_{11}) = r$,
so $A_{22}A_{11} = W_{21} = A'_{22}G^{-1} A_{11}$ shows $A'_{22} = A_{22}G$.
Thus we have shown that $A'_2 = A_2G$ and $A'_1 = G^{-1} A_1$.

For the induction step ($b>1$), we consider $(A'_{b+1}, \ldots, A'_1) \in \inv{\mu_2}\W$ and
$A_{>1} = A_{b+1} \cdots A_2, A'_{>1} = A'_{b+1} \cdots A'_2 \in \RR^{d_y \times r}$.
Now we can apply the induction beginning to find $G_1 \in \mathrm{GL}(r)$ such that $A'_{>1} = A_{>1} G_1$ and $A'_1 = G_1^{-1} A_1$.
As $A'_{>1}$ has rank $r$ and $A'_{b+1} \cdots A'_2 = A'_{>1} = A_{b+1} \cdots (A_2 G_1)$, 
we can apply the induction hypothesis on the map which multiplies the left-most $b$ matrices.
This yields $G_b, \ldots G_2 \in \mathrm{GL}(r)$ such that $A'_{b+1} = A_{b+1} G_b, \ldots, A'_{3} = G_3^{-1} A_{3} G_{2}, A'_2 = G_2^{-1} (A_2 G_1) $.
\end{proof}

\begin{corollary}
\label{cor:fiberHomeomGroupProduct}
If $b>0$ and $\W \in \RR^{d_y \times d_x}$ has maximal rank $r$,
then $\inv{\mu_2}\W$ is homeomorphic to $\mathrm{GL}(r)^b$.
\end{corollary}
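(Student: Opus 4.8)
The plan is to realize the homeomorphism explicitly as the orbit map of the $\mathrm{GL}(r)^b$-action~\eqref{eq:actionOnFiber} and to construct a continuous inverse by hand. First I would fix any reference factorization $\theta = (A_{b+1}, \ldots, A_1) \in \inv{\mu_2}\W$ (which exists since $\W$ has rank $r$, so $\W \in \mathcal M_r = \mathrm{im}\,\mu_2$) and consider the orbit map
\[
\phi: \mathrm{GL}(r)^b \longrightarrow \inv{\mu_2}\W, \quad (G_b, \ldots, G_1) \longmapsto (A_{b+1}G_b,\, G_b^{-1}A_bG_{b-1},\, \ldots,\, G_1^{-1}A_1).
\]
This map is well defined because the product telescopes to $\mu_2(\theta) = \W$, it is continuous because matrix multiplication is polynomial and matrix inversion is rational (hence continuous) on $\mathrm{GL}(r)$, and it is surjective by Proposition~\ref{prop:orbitAction}. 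So the only remaining task is to produce a continuous two-sided inverse.

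The next step is some bookkeeping on ranks: since $\W = A_{b+1}A_b \cdots A_1$ has rank $r$ and each factor has at most $r$ rows or at most $r$ columns, every $A_i$ has rank exactly $r$. In particular $A_{b+1} \in \RR^{d_y \times r}$ admits a fixed left inverse $L$ (for instance $L = (A_{b+1}^T A_{b+1})^{-1} A_{b+1}^T$), and every $A_i$ with $2 \le i \le b$ is invertible. I would then define a candidate inverse $\psi: \inv{\mu_2}\W \to (\RR^{r\times r})^b$ by reading off the group elements from the defining equations of $\phi$ in order: for $\theta' = (A'_{b+1}, \ldots, A'_1)$, set $G_b := L A'_{b+1}$ (this is forced by $A'_{b+1} = A_{b+1}G_b$), and then recursively $G_{i-1} := A_i^{-1} G_i A'_i$ for $i = b, b-1, \ldots, 2$ (forced by $A'_i = G_i^{-1} A_i G_{i-1}$). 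Each of these formulas is a composition of continuous maps, so $\psi$ is continuous.

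Finally I would verify that $\psi$ inverts $\phi$. Plugging $\theta' = \phi(H_b, \ldots, H_1)$ with $H_i \in \mathrm{GL}(r)$ into $\psi$ gives $G_b = L A_{b+1} H_b = H_b$, and then by downward induction $G_{i-1} = A_i^{-1} H_i (H_i^{-1} A_i H_{i-1}) = H_{i-1}$; hence $\psi(\theta') = (H_b, \ldots, H_1)$. Since every point of $\inv{\mu_2}\W$ is of this form by Proposition~\ref{prop:orbitAction}, this simultaneously shows that $\psi$ actually takes values in $\mathrm{GL}(r)^b$ and that $\psi \circ \phi = \mathrm{id}$. Combined with surjectivity of $\phi$, this forces $\phi$ to be bijective with $\phi^{-1} = \psi$, and since both maps are continuous, $\phi$ is the desired homeomorphism $\mathrm{GL}(r)^b \xrightarrow{\sim} \inv{\mu_2}\W$. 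The one point that needs care — and the reason the argument is not completely formal — is that the formulas defining $\psi$ a priori only output matrices in $\RR^{r\times r}$ rather than in $\mathrm{GL}(r)$; this is exactly where Proposition~\ref{prop:orbitAction} is doing the real work, since it guarantees every fiber element is $\phi$ of a genuine tuple of invertible matrices, and the computation above then shows $\psi$ recovers precisely that tuple.
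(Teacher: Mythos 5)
Your proof is correct and follows essentially the same route as the paper: both realize the homeomorphism as the orbit map from $\mathrm{GL}(r)^b$, invoke Proposition~\ref{prop:orbitAction} for surjectivity, and construct a continuous inverse by reading off the group elements recursively. The only cosmetic difference is your choice of left inverse of $A_{b+1}$ (you use $(A_{b+1}^T A_{b+1})^{-1}A_{b+1}^T$, the paper projects onto $r$ linearly independent rows), which changes nothing substantive.
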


\begin{proof}
We fix $\theta = (A_{b+1}, \ldots, A_1) \in \inv{\mu_2}\W$.
The map $\varphi: \mathrm{GL}(r)^b \to \inv{\mu_2}\W$ given by the action~\eqref{eq:actionOnFiber} on $\theta$ is continuous.
We now construct its inverse.
As $\rk(\W) = r$, we have that $\rk(A_i)=r$ for all $i = 1, \ldots, b+1$.
Without loss of generality, we may assume that the first $r$ rows of $A_{b+1}$ have rank $r$.
We write $\pi: \RR^{d_y \times r} \to \RR^{r \times r}$ for the projection which forgets the last $d_y-r$ rows of a given matrix.
For $\theta' = (A'_{b+1}, \ldots, A'_1) \in \inv{\mu_2}\W$, Proposition~\ref{prop:orbitAction} shows that $\theta' = \varphi(G_b, \ldots, G_1)$ for some $(G_b, \ldots, G_1) \in \mathrm{GL}(r)^b$.
So we have that $G_b = \pi(A_{b+1})^{-1}\pi(A'_{b+1}), G_{b-1} = A_b^{-1}G_bA'_b, \ldots, G_1 = A_2^{-1}G_2A'_2$,
which defines a map
\begin{align*}
    \psi: \inv{\mu_2}\W &\longrightarrow \mathrm{GL}(r)^b, \\
    \left( A'_{b+1}, \ldots, A'_1 \right) &\longmapsto 
    \left(
   G(A'_{b+1}), \;
    A_b^{-1} G(A'_{b+1}) A'_b, \;
    \ldots, 
    A_2^{-1}\cdots A_b^{-1} G(A'_{b+1}) A'_b \cdots A'_2
    \right),
\end{align*}
where $G(A'_{b+1}) :=  \pi(A_{b+1})^{-1}\pi(A'_{b+1})$.
By construction, $\psi$ is the inverse of $\varphi$.
Since $\psi$ is continuous, it is a homeomorphism between $\inv{\mu_2}\W$ and $\mathrm{GL}(r)^b$.
\end{proof}

\begin{corollary}
\label{cor:fiberMu4}
If $b>0$ and $\W \in \RR^{d_y \times d_x}$ has maximal rank $r$,
then $\inv{\mu_2}\W$ has $2^b$ connected components.
Each of these components is path-connected.
\end{corollary}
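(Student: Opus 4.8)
The plan is to reduce everything to the well-known topology of the real general linear group. By Corollary~\ref{cor:fiberHomeomGroupProduct}, the fiber $\inv{\mu_2}\W$ is homeomorphic to $\mathrm{GL}(r)^b$ whenever $\W$ has maximal rank $r$, so it suffices to determine the connected components of $\mathrm{GL}(r)^b$ and to verify that each one is path-connected; transporting the conclusion back along that homeomorphism then proves the corollary.

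First I would recall the classical fact that $\mathrm{GL}(r) = \mathrm{GL}(r,\RR)$ has exactly two connected components, $\mathrm{GL}^+(r)$ and $\mathrm{GL}^-(r)$, and that each of them is path-connected. These two sets are disjoint, open, and cover $\mathrm{GL}(r)$, being the preimages of the two components of $\RR \setminus \{0\}$ under the continuous map $\det$, so they are distinct components. Path-connectedness of $\mathrm{GL}^+(r)$ follows from Gaussian elimination: row operations of the form ``add a multiple of one row to another'' keep the determinant fixed and can be scaled continuously to the identity, reducing $G$ to a diagonal matrix with positive determinant, whose diagonal entries are then moved continuously to $1$ (flipping pairs of signs via rotation blocks $\left[\begin{smallmatrix} \cos t & -\sin t \\ \sin t & \cos t \end{smallmatrix}\right]$). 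Multiplication by a fixed reflection is a homeomorphism $\mathrm{GL}^+(r) \to \mathrm{GL}^-(r)$, so $\mathrm{GL}^-(r)$ is path-connected as well. (One could instead cite this as standard.)

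Then I would invoke the elementary topological fact that the connected components of a finite product of spaces are exactly the products of the components of the factors, and that a finite product of path-connected spaces is path-connected. Applied to $\mathrm{GL}(r)^b$, each of whose $b$ factors has the two components $\mathrm{GL}^{\pm}(r)$, this gives precisely $2^b$ connected components, each of the form $\mathrm{GL}^{\varepsilon_1}(r) \times \cdots \times \mathrm{GL}^{\varepsilon_b}(r)$ with $\varepsilon_j \in \{+,-\}$, and each such component is path-connected. There is no genuine obstacle here: the only non-formal ingredient is the two-component structure of $\mathrm{GL}(r,\RR)$, which is either cited or dispatched in a couple of lines as above. If desired, one can make the correspondence explicit by noting that, under the homeomorphism $\psi$ of Corollary~\ref{cor:fiberHomeomGroupProduct}, the component of a factorization is recorded by the signs of the determinants of the $r\times r$ blocks $G(A'_{b+1}), A_b^{-1}G(A'_{b+1})A'_b, \ldots$, but this refinement is not needed for the statement.
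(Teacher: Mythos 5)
Your proposal is correct and follows essentially the same route as the paper: invoke the homeomorphism with $\mathrm{GL}(r)^b$ from Corollary~\ref{cor:fiberHomeomGroupProduct}, use the classical fact that $\mathrm{GL}(r)$ has two path-connected components, and conclude via the product structure. The extra details you supply (Gaussian elimination, the explicit sign bookkeeping under $\psi$) are fine but not needed; the paper simply cites the standard facts.
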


\begin{proof}
The group $\mathrm{GL}(r)$ has two connected components, namely
$\mathrm{GL}^+(r)$ and $\mathrm{GL}^-(r)$.
Both components are path-connected.
Hence, $\mathrm{GL}(r)^b$ has $2^b$ connected components, each of them path-connected.
By Corollary~\ref{cor:fiberHomeomGroupProduct}, the same holds for $\inv{\mu_2}\W$.
\end{proof}

To complete our understanding of the connected components of $\inv{\mu_2}\W$,
we consider the case that the matrix $W \in \RR^{d_y \times d_x}$ does not have maximal rank $r$.
In that case, it turns out that $\inv{\mu_2}\W$ is path-connected, 
which we show by constructing explicit paths between any two elements of $\inv{\mu_2}\W$.

\begin{proposition}
\label{prop:fiberMu4}
Let $\W \in \RR^{d_y \times d_x}$.
If $b>0$ and $\rk(\W) < r$, then $\inv{\mu_2}\W$ is path-connected.
\end{proposition}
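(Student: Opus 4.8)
The plan is to join every $\bm d$-factorization $(A_{b+1},\ldots,A_1)\in\inv{\mu_2}{\W}$ to one fixed canonical factorization by a path inside $\inv{\mu_2}{\W}$. Write $e=\rk\W$; the hypothesis is $e<r$, and since $r=\min\{d_i\}$ we also have $d_y,d_x\ge r$. Fix a rank-$e$ decomposition $\W=BC$ with $B\in\RR^{d_y\times e}$ and $C\in\RR^{e\times d_x}$ of full rank, and set $D=\left[\begin{smallmatrix}I_e&0\\0&0\end{smallmatrix}\right]\in\RR^{r\times r}$; the natural target is
\[
\theta^0=\bigl([B\mid 0],\,D,\,\ldots,\,D,\,[C;0]\bigr),
\]
which lies in $\inv{\mu_2}{\W}$ because $D^2=D$. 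I would organize the construction by induction on $b$, peeling one factor off the right via the splitting $\mu_2=\mu_{(d_y,r,d_x)}\circ(\mu_{(d_y,r,\ldots,r)}\times\mathrm{id})$, with the base case $b=1$ handled by an explicit argument in the spirit of Claim~\ref{cl:pathLifting2Matrices} and the $h=2$ part of Proposition~\ref{prop:pathLifting}.

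Throughout, two kinds of moves keep the product equal to $\W$ and are available: (i) the conjugation moves $(\ldots,A_{j+1},A_j,\ldots)\rightsquigarrow(\ldots,A_{j+1}H,H^{-1}A_j,\ldots)$ traced along a path in $\mathrm{GL}^+(r)$ from $I_r$, and more generally the whole $\mathrm{GL}(r)^b$-action of~\eqref{eq:actionOnFiber} restricted to paths starting at the identity; (ii) on the two end factors $A_{b+1}$ and $A_1$, column- and row-operations that do not touch the product. Using (i) and (ii) one first pushes every factorization to one with all intermediate factors equal to $D$: after bringing each interior $A_j$ to a rank normal form $\left[\begin{smallmatrix}I_\rho&0\\0&0\end{smallmatrix}\right]$ by conjugation, its surplus rank $\rho-e$ can be annihilated by simultaneously deforming the neighbouring sub-chains so that their relevant columns (resp. rows) vanish — a two-factor sub-problem of the form $\{(X,Y):XY=\W\}$ with $\rk\W=e<\rho=\min$, handled by the induction hypothesis and lifted back through the neighbouring factors. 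At that point $\W=A_{b+1}DA_1$, so writing $A_{b+1}=[P\mid *]$ and $A_1=[Q;*]$ we get $PQ=\W$; contracting the free blocks $*$ (they live in a contractible set) and reducing $(P,Q)$ then lands us in a small, explicit family of factorizations around $\theta^0$.

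The remaining, and I expect principal, difficulty is that the $\mathrm{GL}^+(r)$-conjugation moves alone only connect the $2^b$ ``orientation classes'' that already appear in Corollary~\ref{cor:fiberMu4}, so one must still merge these classes — and this is exactly where the slack $r-e\ge1$ is indispensable. Applying the action of~\eqref{eq:actionOnFiber} with all $G_i$ equal to a block matrix $\left[\begin{smallmatrix}S&0\\0&T\end{smallmatrix}\right]$ sends each intermediate $D$ to $\left[\begin{smallmatrix}S&0\\0&T\end{smallmatrix}\right]^{-1}D\left[\begin{smallmatrix}S&0\\0&T\end{smallmatrix}\right]=D$, sends $[B\mid0]$ to $[BS\mid0]$ and $[C;0]$ to $[S^{-1}C;0]$, and since $\det\left[\begin{smallmatrix}S&0\\0&T\end{smallmatrix}\right]=\det S\det T$ with $T\in\mathrm{GL}(r-e)$ of either sign, this conjugation can always be traced inside $\mathrm{GL}^+(r)$ regardless of the sign of $\det S$. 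This simultaneously shows that all decompositions $\W=BC$ (which differ by $\mathrm{GL}(e)$) give path-connected canonical points and supplies the orientation flip at each interior index. The bulk of a full write-up is then the bookkeeping: ordering the normalization so that flips at different indices do not undo one another, confirming that the product is preserved at every stage, and verifying the base case $b=1$ by hand.
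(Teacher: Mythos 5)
Your orientation-merging observation is sound and is the right key idea: because $r-e\ge 1$, the canonical point $\theta^0$ admits stabilizer-type moves by block matrices $\left[\begin{smallmatrix}S&0\\0&T\end{smallmatrix}\right]$ whose determinant sign can be corrected in the $T$-block, so the $2^b$ orientation classes of Corollary~\ref{cor:fiberMu4} indeed merge once one sits at a factorization with all interior factors equal to $D$. This is the same mechanism, in different clothing, as the paper's determinant flip in Claim~\ref{cl:lowRankPath2matrices}, where a kernel vector of the right-hand factor is added to a row to change the sign of the relevant $r\times r$ block without leaving the fiber.

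The genuine gap is in the reduction of an arbitrary factorization to such a canonical one. Your induction ``peeling one factor off the right'' does not directly apply: the left sub-product $A_{b+1}\cdots A_2$ may have full rank $r$ even though $\rk(\W)<r$ (the deficiency may sit entirely in $A_1$), so the induction hypothesis says nothing about the left sub-chain. The patch you propose --- normalize an interior factor, solve a grouped two-block problem $\{(X,Y): XY=\W\}$, and then ``lift back through the neighbouring factors'' --- invokes a path-lifting property for sub-chains of the form $\RR^{d_y\times r}\times(\RR^{r\times r})^k$ or $(\RR^{r\times r})^k\times\RR^{r\times d_x}$. But these sub-chains have interior widths equal to their minimal width, i.e.\ $b>0$, which is exactly where Proposition~\ref{prop:pathLifting} fails (and must fail: over a full-rank product such a sub-chain's fiber has $2^{k}$ components, so a path of products through full-rank matrices cannot in general be lifted with prescribed endpoints). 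Also note that fiber connectivity (your induction hypothesis) is not a lifting statement, so it cannot substitute for one. The paper sidesteps this entirely by only ever modifying two \emph{adjacent} factors while keeping their product fixed: Claim~\ref{cl:lowRankPath} migrates the rank deficiency leftward pair by pair (using the transposed Claim~\ref{cl:pathLifting2Matrices}-style argument), and then collapses the chain from the left with Claim~\ref{cl:lowRankPath2matrices}, which simultaneously boosts the rank of the relevant block and flips its determinant sign using a kernel vector. To complete your plan you would need either to prove a lifting statement adapted to rank-deficient targets or to reorganize the normalization into purely local two-factor moves --- at which point you have essentially reconstructed the paper's proof; your base case $b=1$, which you defer, also carries the nontrivial sign-flip argument and cannot be dispatched by the block-diagonal trick alone, since that trick is only available once the canonical form is reached.
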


\begin{proof}
We write $\W = \left[ \begin{smallmatrix} \W_1 \\ \W_2 \end{smallmatrix} \right]$
where $\W_1 \in \RR^{r \times d_x}$,
and denote by $e$ the rank of $\W$.
If $\rk(\W_1) = e$, then 
$\W_2 = M \W_1$ for some $M \in \RR^{(d_y - r) \times r}$.

\begin{claim}
\label{cl:lowRankPath2matrices}
If $b=1$, $(A_2,A_1) \in \inv{\mu_2}\W$, $\rk(\W_1) = e$ and $\W_2 = M \W_1$,
then there is a continuous function 
$f: [0,1] \to \inv{\mu_2}\W$
with $f(0) = (A_2,A_1)$
and $f(1) = (\left[ \begin{smallmatrix} I_r \\ M \end{smallmatrix} \right], \W_1)$.
\end{claim}

\begin{proof} \renewcommand\qedsymbol{$\diamondsuit$}
Since $\rk(\W) < r$, we have that $\rk(A_2) < r$ or $\rk(A_1) < r$.
If $\rk(A_2) < r$, we proceed as in the proof of Claim~\ref{cl:pathLifting2Matrices} to find a
path in $\inv{\mu_2}\W$ from 
$(A_2,A_1)$ to $(A'_2,A'_1)$ such that $\rk(A'_2) = r$.

Hence, we may assume that $\rk(A_2) = r$.
This implies that $\rk(A_1) = e$.
So $K := \ker(A_1^T) \subseteq \RR^r$ has positive dimension $r-e$.
We write $A_2 = \left[ \begin{smallmatrix} A_{21} \\ A_{22} \end{smallmatrix} \right]$
where $A_{21} \in \RR^{r \times r}$,
and denote by $r_2$ the rank of $A_{21}$.
So the rowspace $R \subseteq \RR^r$ of $A_{21}$ has dimension $r_2$.
We now show that $K + R = \RR^r$.
To see this we set $\delta := \dim (K \cap R)$.
Without loss of generality, we may assume that the first $e$ rows of $\W_1$ are linearly independent. 
Then the first $e$ rows of $A_{21}$ must also be linearly independent,
so we might further assume that the first $r_2$ rows of $A_{21}$ are linearly independent.
We denote by $A_{211}$ and $\W_{11}$ the matrices formed by the first $r_2$ rows of $A_{21}$ and $\W_{1}$, respectively.
In particular, we have that $A_{211} A_1 = \W_{11}$.
Now we choose a basis $(b_1, \ldots, b_{r_2})$ for $R$
such that $(b_1, \ldots, b_\delta)$ is a basis for $K \cap R$.
Since $R$ is the rowspace of $A_{211}$, there is some $G \in \mathrm{GL}(r_2)$ such that the $i$-th row of $G A_{211}$ is $b_i$.
So the first $\delta$ rows of $G \W_{11} = G A_{211} A_1$ are zero, 
which shows that $e = \rk(G \W_{11}) \leq r_2 - \delta$.
Thus, $\dim (K+R) = (r-e) + r_2 - \delta \geq r$, which proves that $K+R = \RR^r$.

If $r_2 < r$, we now show that there is a path in $\inv{\mu_2}\W$ from $(A_2, A_1)$
to $(A''_2 = \left[ \begin{smallmatrix} A''_{21} \\ A_{22} \end{smallmatrix} \right], A_1)$ such that $\rk(A''_{21}) = r$.
We may assume again that the first $r_2$ rows $a_1, \ldots, a_{r_2}$ of $A_{21}$ are linearly independent, \ie, that they form a basis for $R$.
Since $K + R = \RR^r$, we can extend this basis to a basis $(a_1, \ldots, a_r)$ for $\RR^r$ such that $a_i \in K$ for all $i > r_2$.
We define $A''_{21}$ such that its first $r_2$ rows are $a_1, \ldots, a_{r_2}$
and such that its $i$-th row, for $r_2 < i \leq r$, is the sum of $a_i \in K$ and the $i$-th row of $A_{21}$.
Then $A''_2 = \left[ \begin{smallmatrix} A''_{21} \\ A_{22} \end{smallmatrix} \right]$
satisfies $A''_2 A_1 = \W$.
Moreover, the straight line from $(A_2, A_1)$ to $(A''_2, A_1)$ is a path in $\inv{\mu_2}\W$.
Since the last $r-r_2$ rows of $A_{21}$ are contained in the linear span $R$ of the first $r_2$ rows of $A_{21}$,
the linearity of the determinant implies that $\det (A''_{21}) = \det(\left[ \begin{smallmatrix} a_1 & \cdots & a_r \end{smallmatrix} \right]) \neq 0$.

Thus, we may assume that $r_2 = r$.
If $\det(A_{21}) < 0$, we now construct a path in $\inv{\mu_2}\W$ from $(A_2, A_1)$
to $(A'''_2 = \left[ \begin{smallmatrix} A'''_{21} \\ A_{22} \end{smallmatrix} \right], A_1)$
such that $\det(A'''_{21}) > 0$.
For this, we pick a vector $v \in K \setminus \lbrace 0 \rbrace$.
Since the rows of $A_{21}$ form a basis for $\RR^r$,
there is an index $i \in \lbrace 1, \ldots, r \rbrace$
such that the matrix $D \in \RR^{r \times r}$ obtained from $A_{21}$ by replacing its $i$-th row with $v$ has full rank. 
We pick $\mu \in \RR$ such that $\det(A_{21}) + \mu \det(D) >  0$
and define $A'''_{21}$ to be the matrix obtained from $A_{21}$ by adding $\mu v$ to its $i$-th row.
Then $\det(A'''_{21}) = \det(A_{21}) + \mu \det(D) >  0$ and
$A'''_2 = \left[ \begin{smallmatrix} A'''_{21} \\ A_{22} \end{smallmatrix} \right]$
satisfies $A'''_2 A_1 = \W$.
Moreover, the straight line from $(A_2, A_1)$ to $(A'''_2, A_1)$ is a path in $\inv{\mu_2}\W$.

Therefore, we may assume that $\det(A_{21}) > 0$, so $G := A_{21}^{-1} \in \mathrm{GL}^+(r)$.
Any path in $\mathrm{GL}^+(r)$ from $I_r$ to $G$
yields a path in $\inv{\mu_2}\W$ from $(A_2, A_1)$
to $(A_2G, G^{-1}A_1) = (\left[ \begin{smallmatrix} I_r \\ A_{22}G \end{smallmatrix} \right], \W_1)$.
Since $(A_{22}G)\W_1 = \W_2 = M\W_1$,
the straight line from $(\left[ \begin{smallmatrix} I_r \\ A_{22}G \end{smallmatrix} \right], \W_1)$ to 
$(\left[ \begin{smallmatrix} I_r \\ M \end{smallmatrix} \right], \W_1)$ is a path in $\inv{\mu_2}\W$.
\end{proof}

\begin{claim} \label{cl:lowRankPath}
If $\theta = (A_{b+1}, \ldots, A_1) \in \inv{\mu_2}\W$, $\rk(\W_1) =e$ and $\W_2 = M \W_1$,
then there is a continuous function $F:[0,1] \to \inv{\mu_2}\W$
with $F(0) = \theta$
and $F(1) = (\left[ \begin{smallmatrix} I_r \\ M \end{smallmatrix} \right], I_r, \ldots, I_r, \W_1)$.
\end{claim}

\begin{proof}
\renewcommand\qedsymbol{$\diamondsuit$}
As $e < r$, at least one of the $A_i$ has rank smaller than $r$.
We set $k := \min \lbrace i \in \lbrace 1, \ldots, b+1 \rbrace \mid \rk(A_i) < r \rbrace$.
If $k < b+1$, we first show that there is a path in $\inv{\mu_2}\W$ from $\theta$
to $(A'_{b+1}, \ldots, A'_1)$
such that $\min \lbrace i \in \lbrace 1, \ldots, b+1 \rbrace \mid \rk(A'_i) < r \rbrace = b+1$.
Since $\rk(A_k)<r$, the rank of $\W' := A_{k+1}A_k$ is smaller than $r$.
We write $\W' = \left[ \begin{smallmatrix} \W'_1 & \W'_2 \end{smallmatrix} \right]$
where $\W'_1$ has $r$ columns.
Without loss of generality, we may assume that $\rk(\W'_1) = \rk(\W')$.
Then there is a matrix $N$ such that $\W'_2 = \W'_1N$.
Hence, we can apply the transposed version of Claim~\ref{cl:lowRankPath2matrices},
which yields a path from $(A_{k+1}, A_k)$
to $(\W'_1, \left[ \begin{smallmatrix} I_r & N \end{smallmatrix} \right])$
in the set of factorizations of $\W'$.
Defining $\tilde A_{k+1} := \W'_1$, $\tilde A_k := \left[ \begin{smallmatrix} I_r & N \end{smallmatrix} \right]$
and $\tilde A_i := A_i$ for all $i \in \lbrace 1, \ldots, b+1 \rbrace \setminus \lbrace k, k+1 \rbrace$,
extends this path to a path in $\inv{\mu_2}\W$ from $\theta$ to $(\tilde A_{b+1}, \ldots, \tilde A_1)$
such that 
$\min \lbrace i \in \lbrace 1, \ldots, b+1 \rbrace \mid \rk(\tilde A_i) < r \rbrace = k+1$.
We note that this construction increased the number $k$.
So we can repeat the construction until we reach $(A'_{b+1}, \ldots, A'_1)$ as desired.

Hence, we may assume that $k = b+1$.
Since $\rk(A_{b+1}) < r$, the rank of $\W'' := A_{b+1} A_b$ is smaller than $r$. 
We write
$\W'' = \left[ \begin{smallmatrix} \W''_1 \\ \W''_2 \end{smallmatrix} \right]$
where $\W''_1$ has $r$ rows.
Since $\W_1 = \W''_1 A_{b-1}\cdots A_1$
and the matrices $A_{b-1}, \ldots, A_1$ have rank $r$, 
we have that $\rk(\W''_1) =  \rk(\W_1) = e$.
Analogously, $\rk(\W'') = e$.
So there is matrix $M'$ such that $\W''_2 = M' \W''_1$.
Applying Claim~\ref{cl:lowRankPath2matrices} yields a path from $(A_{b+1},A_b)$ to $(\left[ \begin{smallmatrix} I_r \\ M' \end{smallmatrix} \right], \W''_1)$ in the set of factorizations of $\W''$.
This path can be extended to a path in $\inv{\mu_2}\W$ from $\theta$ to
$\theta' := (\left[ \begin{smallmatrix} I_r \\ M' \end{smallmatrix} \right], \W''_1, A_{b-1}, \ldots, A_1)$.
Applying the same construction on $\W''' := \W''_1A_{b-1}$ yields a path in $\inv{\mu_2}\W$ from $\theta'$ to $\theta'' := (\left[ \begin{smallmatrix} I_r \\ M' \end{smallmatrix} \right], I_r, \W''_1 A_{b-1}, \ldots, A_1)$.
We repeat the contruction until $(\left[ \begin{smallmatrix} I_r \\ M' \end{smallmatrix} \right], I_r, \ldots, I_r, \W_1)$ is reached. 
Since $M'\W_1 = \W_2 = M\W_1$,
the straight line from  $(\left[ \begin{smallmatrix} I_r \\ M' \end{smallmatrix} \right], I_r, \ldots, I_r, \W_1)$ to  $(\left[ \begin{smallmatrix} I_r \\ M \end{smallmatrix} \right], I_r, \ldots, I_r, \W_1)$ is a path in $\inv{\mu_2}\W$.
\end{proof}
 
 Now we finally show that $\inv{\mu_2}\W$ is path-connected.
 Without loss of generality, we may assume that $\rk(\W_1)=e$, and we write $\W_2=M\W_1$.
 For $\theta, \theta' \in \inv{\mu_2}\W$, there are paths in $\inv{\mu_2}\W$ 
 from $\theta$ resp. $\theta'$ to 
$(\left[ \begin{smallmatrix} I_r \\ M \end{smallmatrix} \right], I_r, \ldots, I_r, \W_1)$,
so there is a path from $\theta$ to $\theta'$ in $\inv{\mu_2}\W$.
\end{proof}

To settle the proof of Theorem~\ref{thm:fiberMMMcomponents}, 
it is left to show that $\inv{\mu_2}\W$ 
and $\inv{\mu_{\bm d}}\W$
have indeed the same number of (path-)connected components, as we promised earlier. 

\begin{proposition}
\label{prop:sameNunberOfConnComp}
Let $b>0$ and $\W \in \RR^{d_y \times d_x}$.
Then $\inv{\mu_{\bm d}}\W$ and $\inv{\mu_2}\W$ have the same number of connected components.
Moreover, each of these components is path-connected.
\end{proposition}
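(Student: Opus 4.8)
The plan is to exploit the factorization $\mu_{\bm d} = \mu_2 \circ \mu_1$ from~\eqref{eq:factorMuSquareMatrices}, so that $\inv{\mu_{\bm d}}\W = \mu_1^{-1}\bigl(\inv{\mu_2}\W\bigr)$, and to show that the restriction $\mu_1 : \inv{\mu_{\bm d}}\W \to \inv{\mu_2}\W$ sets up a bijection between the two sets of connected components, with all fibers path-connected. The crucial observation is that $\mu_1$ is a \emph{direct product} of the matrix-multiplication maps $\mu_{(d_{i_{j+1}},\ldots,d_{i_j})}$, and by the choice of the indices $i_1 < \cdots < i_b$ none of these factors has an internal width equal to $r = \min\{d_i\}$; hence each factor satisfies the hypothesis $b=0$ of Proposition~\ref{prop:pathLifting} and Corollary~\ref{cor:fiberMMMb0}. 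From this I extract three facts. (i) $\mu_1$ restricted to $\inv{\mu_{\bm d}}\W$ surjects onto $\inv{\mu_2}\W$, since every block of a factorization in $\inv{\mu_2}\W$ lies in the image of the corresponding factor of $\mu_1$. (ii) Every fiber of $\mu_1$ is path-connected, being a finite product of fibers of the factors, each path-connected by Corollary~\ref{cor:fiberMMMb0}. (iii) $\mu_1$ has the \emph{path-lifting property with prescribed endpoints}: given a path $f$ in the codomain of $\mu_1$ and preimages $\theta,\theta'$ of its endpoints, there is a lift $F$ of $f$ with the prescribed endpoints $\theta$ and $\theta'$ — apply Proposition~\ref{prop:pathLifting} componentwise and take the product of the resulting lifts. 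Moreover, if $f$ lies in $\inv{\mu_2}\W$, \ie $\mu_2(f(t))\equiv\W$, then this lift lies in $\inv{\mu_{\bm d}}\W$, since $\mu_{\bm d}(F(t)) = \mu_2(\mu_1(F(t))) = \mu_2(f(t)) = \W$.

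Granting (i)--(iii), the proof runs as follows. By Corollary~\ref{cor:fiberMu4} (if $\rk\W = r$) or Proposition~\ref{prop:fiberMu4} (if $\rk\W < r$), $\inv{\mu_2}\W$ has finitely many connected components, each of them path-connected, hence each component $C$ is clopen in $\inv{\mu_2}\W$. Fix such a $C$ and put $D_C := \mu_1^{-1}(C) \cap \inv{\mu_{\bm d}}\W$. Then $D_C$ is clopen in $\inv{\mu_{\bm d}}\W$ (preimage of a clopen set under the continuous map $\mu_1$), it is nonempty by (i), and the sets $D_C$ partition $\inv{\mu_{\bm d}}\W$ as $C$ ranges over the components of $\inv{\mu_2}\W$. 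It remains to check that each $D_C$ is path-connected. Given $\theta,\theta' \in D_C$, choose a path $f$ inside $C$ from $\mu_1(\theta)$ to $\mu_1(\theta')$; by (iii) lift it to a path $F$ in $\inv{\mu_{\bm d}}\W$ from $\theta$ to some $\theta''$ with $\mu_1(\theta'') = \mu_1(\theta')$, and since $f(t)\in C$ we have $F(t)\in D_C$ throughout. Finally concatenate with a path from $\theta''$ to $\theta'$ inside the fiber $\mu_1^{-1}(\mu_1(\theta'))\subseteq D_C$, which exists by (ii). Thus each $D_C$ is a nonempty, clopen, path-connected subset, hence is exactly one connected component of $\inv{\mu_{\bm d}}\W$; the correspondence $C \mapsto D_C$ is then the desired bijection, so $\inv{\mu_{\bm d}}\W$ and $\inv{\mu_2}\W$ have the same (finite) number of connected components, each path-connected.

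I expect the only genuine work to lie in making point (iii) precise: writing $\mu_1$ explicitly as a product map, applying Proposition~\ref{prop:pathLifting} on each factor, and assembling the componentwise lifts (on their common parameter interval, up to an innocuous reparametrization) into a single continuous lift with the required endpoints. This is bookkeeping rather than a conceptual obstacle; everything else is the formal principle that a surjection with the path-lifting property and path-connected fibers induces a bijection on sets of connected components, combined with the already-established description of the components of $\inv{\mu_2}\W$.
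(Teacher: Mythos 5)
Your proposal is correct and takes essentially the same route as the paper's proof: factor $\mu_{\bm d}=\mu_2\circ\mu_1$, note that each factor of the product map $\mu_1$ has no internal width equal to $r$ so Proposition~\ref{prop:pathLifting} (and hence Corollary~\ref{cor:fiberMMMb0}) applies componentwise, and conclude that the preimages under $\mu_1$ of the path-connected components of $\inv{\mu_2}\W$ are exactly the components of $\inv{\mu_{\bm d}}\W$. Your extra bookkeeping (clopenness of the $D_C$, nonemptiness via surjectivity of $\mu_1$, splitting the lift into a free-endpoint lift plus a fiber path) only spells out details the paper leaves implicit.
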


\begin{proof}
Let us denote the connected components of $\inv{\mu_2}\W$ be $C_1, \ldots, C_k$.
By Corollary~\ref{cor:fiberMu4} and Proposition~\ref{prop:fiberMu4}, each of these components is path-connected.
Using the notation in~\eqref{eq:factorMuSquareMatrices}, we have that $\inv{\mu_{\bm d}}\W = \bigcup_{i=1}^k \inv {\mu_1}{C_i}$.
Since the $\inv {\mu_1}{C_1}, \ldots, \inv {\mu_1}{C_k}$ are pairwise disconnected, we see that 
$\inv{\mu_{\bm d}}\W$ has at least $k$ disconnected components.
It is left to show that each $\inv {\mu_1}{C_i}$ is path-connected. 
For this, let $\theta, \theta' \in \inv {\mu_1}{C_i}$
and $\sigma := \mu_1(\theta), \sigma' := \mu_1(\theta') \in C_i$.
As $C_i$ is path-connected, there is a path in $C_i$ from $\sigma$ to $\sigma'$.
The map $\mu_1$ is a direct product of $b+1$ matrix multiplication maps.
To each factor we can apply Proposition~\ref{prop:pathLifting},
which yields a path in $\inv {\mu_1}{C_i}$ from $\theta$ to $\theta'$.
\end{proof}

\begin{corollary}
\label{cor:fiberMMMbNot0}
Let $b>0$ and $\W \in \RR^{d_y \times d_x}$.
If $\rk(\W) = r$, then $\inv{\mu_{\bm d}}\W$ has $2^b$ connected components, and each of these components is path-connected.
If $\rk(W) < r$, then $\inv{\mu_{\bm d}}\W$ is path-connnected.
\end{corollary}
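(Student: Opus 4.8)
The plan is to obtain this corollary as an immediate assembly of the results already proved in this section; together with Corollary~\ref{cor:fiberMMMb0} (the case $b=0$), it establishes Theorem~\ref{thm:fiberMMMcomponents}.

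Since $b>0$, I would start from the factorization $\mu_{\bm d}=\mu_2\circ\mu_1$ recorded in~\eqref{eq:factorMuSquareMatrices}, where $\mu_2$ is the matrix multiplication map on the $b+1$ factors of sizes $d_y\times r$, then $r\times r$ (with multiplicity $b-1$), then $r\times d_x$, and $\mu_1$ is a direct product of matrix multiplication maps each of which has no hidden layer of width $r$. By Proposition~\ref{prop:sameNunberOfConnComp}, $\inv{\mu_{\bm d}}\W$ and $\inv{\mu_2}\W$ have the same number of connected components and each such component is path-connected, so it suffices to count the connected components of $\inv{\mu_2}\W$. If $\rk(\W)=r$, then Corollary~\ref{cor:fiberMu4} gives that $\inv{\mu_2}\W$ has exactly $2^b$ connected components, each path-connected, and carrying this through Proposition~\ref{prop:sameNunberOfConnComp} yields the first assertion. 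If $\rk(\W)<r$, then Proposition~\ref{prop:fiberMu4} gives that $\inv{\mu_2}\W$ is path-connected, \ie\ has a single connected component, and again Proposition~\ref{prop:sameNunberOfConnComp} transfers this to $\inv{\mu_{\bm d}}\W$. This completes the proof.

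At this last step there is no genuine obstacle: all the content is carried by the lemmas leading up to it. Were one instead to build the chain from scratch, the real difficulties would be: (i) the path-lifting property of Proposition~\ref{prop:pathLifting}, which is precisely what makes the ``non-square'' factor $\mu_1$ (\ie\ the hidden layers of width $>r$) contribute nothing to the component count; (ii) the orbit description of Proposition~\ref{prop:orbitAction}, realizing the full-rank fiber of $\mu_2$ as a single $\mathrm{GL}(r)^b$-orbit and hence homeomorphic to $\mathrm{GL}(r)^b$ (Corollary~\ref{cor:fiberHomeomGroupProduct}), whose $2^b$ components come from the signs of the $b$ determinants; and (iii) in the deficient-rank case, the explicit connecting paths built in Proposition~\ref{prop:fiberMu4}, where the rank defect must be pushed through the layers one at a time. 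Given those ingredients, the corollary is a one-line consequence.
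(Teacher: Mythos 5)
Your proof is correct and is essentially identical to the paper's: the paper likewise obtains this corollary by combining Corollary~\ref{cor:fiberMu4}, Proposition~\ref{prop:fiberMu4}, and Proposition~\ref{prop:sameNunberOfConnComp} through the factorization $\mu_{\bm d}=\mu_2\circ\mu_1$ of~\eqref{eq:factorMuSquareMatrices}. Your additional remarks correctly identify where the real work lies in the supporting lemmas.
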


\begin{proof}
Combine Corollary~\ref{cor:fiberMu4} and Propositions~\ref{prop:fiberMu4} and~\ref{prop:sameNunberOfConnComp}.
\end{proof}

\begin{proof}[Proof of Theorem~\ref{thm:fiberMMMcomponents}]
Theorem~\ref{thm:fiberMMMcomponents} is an amalgamation of Corollaries~\ref{cor:fiberMMMb0} and~\ref{cor:fiberMMMbNot0}.
\end{proof}

\subsection{Proofs of Propositions \ref{prop:submersion}, \ref{prop:restriction}, \ref{prop:spurious_rank}, \ref{prop:landscape_summary} and \ref{prop:normalizedNets}}

\submersion*
\begin{proof} If $\mu_{\bm d}$ is a local submersion at $\theta$ onto $\mathcal M_r$, then there exists an open neighborhood $U$ of $\W$ in $\mathcal M_r$ and an open neighborhood $V$ of $\theta$ with the property that $\mu_{\bm d}$ acts as a projection from $V$ onto $U$ (see, \eg, \citep[Theorem 7.3]{lee2003}). From this, we easily deduce that $\theta$ is a minimum  (resp. saddle, maximum) for $L$ if and only if $\W = \mu_{\bm d}(\theta)$ is a minimum  (resp. saddle, maximum) for $\ell|_{\mathcal M_r}$. Finally, if $\rk(\W) = r$, then $d\mu_{\bm d}(\theta)$ has maximal rank for all $\theta \in \mu_{\bm d}^{-1}(\W)$ by Theorem~\ref{thm:differential_image}.
\end{proof}

\restriction*
\begin{proof} According to Theorem~\ref{thm:differential_image}, if $\mu_{\bm d}(\theta) = \W$ with $\rk(\W) = e$, then $Im(d \mu_{\bm d}(\theta))\supset T_{\W} \mathcal M_e$. This means that $\theta \in Crit(L)$ implies that $\W$ is critical for $\ell|_{\mathcal M_e}$.
\end{proof}

\spuriousRank*
\begin{proof} Our proof is a modification of an argument used in~\cite{zhang2019depth}. Let us first consider the case that $\mu_{\bm d}$ is filling, so $r = \min\{d_h,d_0\}$. 
Without loss of generality, we assume $r = d_0$. Recall that the image of $d \mu_{\bm d}(\theta)$ is given by
\begin{equation*}
\RR^{d_h} \otimes Row(W_{<h}) + \ldots + Col(W_{> i})
\otimes Row(W_{<i}) + \ldots  + Col(W_{>1}) \otimes
\RR^{d_0}.
\end{equation*}
We first note that $Row(W_{<h}) \ne \RR^{d_0}$, for otherwise $d\mu_{\bm d}(\theta)$ would be surjective, implying that $d \ell(\W)=0$.  We define $i = \max\{j \, | \, Row(W_{<j}) =
\RR^{d_0}\}$, so $1 \le i < h$ (writing $W_{<1} = I_{d_0}$). We have that
\begin{equation}\label{eq:col-row-full}
d\ell(\W)(Col(W_{>i}) \otimes Row(W_{<i})) = d\ell(\W)(Col(W_{>i}) \otimes \RR^{d_0}) = 0.
\end{equation}
Since $Row(W_{<{i+1}}) \subsetneq \RR^{d_0}$, we may find $w_{i} \in \RR^{d_i}, w_i \neq 0$ such that $w^T_i W_i\ldots W_1 = 0$. We now fix
$\epsilon > 0$ and $v_{i+1} \in \RR^{d_{i+1}}$ arbitrarily, and define $\tilde W_{i+1} = W_{i+1} + \epsilon (v_{i+1} \otimes w_i)$. Clearly, we have that
$\tilde W_{i+1}W_i\ldots W_1 = 
W_{i+1}W_i\ldots W_1$.
If $(W_h,\ldots,\tilde W_{i+1},W_i,\ldots,W_1)$ is also a critical point of~$L$, then
\begin{equation}\label{eq:col-row-full2}
 d\ell(\W)(Col(W_h \ldots \tilde W_{i+1}) \otimes \RR^{d_0}) = 0.
\end{equation}
Combining~\eqref{eq:col-row-full} and~\eqref{eq:col-row-full2}, we have that
\[
 d\ell(\W)(Col(W_h \ldots W_{i+2}(v_{i+1}\otimes w_i)) \otimes \RR^{d_0}) = 0.
\]
If this were true for all $v_{i+1}$, then it would imply
\begin{equation}
    \label{eq:col-row-full3}
     d\ell(\W)(Col(W_h \ldots W_{i+2}) \otimes \RR^{d_0}) = 0.
\end{equation}
Hence, we have either found an arbitrarily small perturbation $\theta'$ of $\theta$
as required in Proposition~\ref{prop:spurious_rank},
or~\eqref{eq:col-row-full3} must hold.
In the latter case, 
we reapply the same argument for $\tilde W_{i+2} = W_{i+2} + \epsilon (v_{i+2} \otimes w_{i+1})$ where $w_{i+1} \neq 0 $ and $w^T_{i+1} W_{i+1}\ldots W_1 = 0$.
Again, we can either construct an arbitrarily small perturbation $\theta'$ of $\theta$
as required in Proposition~\ref{prop:spurious_rank},
or we have $d\ell(\W)(Col(W_{>{i+2}}) \otimes \RR^{d_0})=0$.
Proceeding this way we eventually arrive at $d\ell(\W)(\RR^{d_h} \otimes \RR^{d_0}) = 0$ so $d\ell(\W) = 0$, which contradicts the hypothesis.
Thus, at some point we must find an arbitrarily small perturbation $\theta'$ of~$\theta$
as required in Proposition~\ref{prop:spurious_rank}, which concludes the proof in the case that $\mu_{\bm d}$ is filling.

We now consider the case that $\mu_{\bm d}$ is not filling.
We pick $i \in \lbrace 1, \ldots, h-1 \rbrace$ such that $d_i = r$, and write for simplicity $A = W_h\ldots W_{i+1}$ and $B = W_i\ldots W_1$.
The assumption $\rk(\W) < r$ implies that $\rk(A)<r$ or $\rk(B)<r$, and we assume without loss of generality that $\rk(A)<r$. We define the map $L_B(W_h',\ldots,W_{i+1}') = \ell(W_h'\ldots W_{i+1}'B)$. We also introduce the map $\ell_B(A') = \ell(A'B)$ and the matrix multiplication map $\mu_{\bm d_A}$ where $\bm d_{A} = (d_h,\ldots,d_{i+1})$, so that $L_B = \ell_B \circ \mu_{\bm d_A}$. If $\theta$ is a critical point for $L$, then $\theta_A = (W_h,\ldots,W_{i+1})$ must be a critical point for $L_B$. We are thus in the position to apply the analysis carried out in the filling case. In particular, we have that either $\theta_A$ can be perturbed to $\tilde \theta_A$ such that $\mu_{\bm d_A}(\theta_A) = \mu_{\bm d_A}(\tilde \theta_A)$ but $\tilde \theta_A \notin Crit(L_B)$, or $d\ell_B(A) = 0$. In the former case, we have that $\theta' = (\tilde \theta_A, \theta_B)$ is not a critical point for $L$, and we are done. If instead $d\ell_B(A) = 0$, then we have that 
\[
d \ell(\W)(\RR^{d_h}\otimes Row(B)) = 0,
\]
because the image of the differential of the map $A' \mapsto A'B$ is given by $\RR^{d_h}\otimes Row(B)$. We now proceed in a similar manner as before. We have that $Col(W_{>i}) \subsetneq \RR^{d_h}$, because we assumed that $W_{>i} = A$ had rank less than $r \le d_h$. Thus, we may find $w_{i+1} \in \RR^{d_i}$, $w_{i+1} \neq 0$ such that $W_h\ldots W_{i+1} w_{i+1} = 0$. We fix $\epsilon > 0$ and $v_i$ arbitrarily, and define $\tilde W_i = W_i + \epsilon (w_{i+1} \otimes v_i)$. We have that $W_h\ldots W_{i+1} \tilde W_i = W_h\ldots W_{i+1} W_i$. If for all choices of $v_i$ we have that $(W_h,\ldots,\tilde W_i,\ldots,W_1)$ is still a critical point for $L$, then we can deduce that
\[
d\ell(\W)(\RR^{d_h} \otimes Row(W_{i-1}\ldots W_1))= 0.
\]
Repeating this reasoning, we obtain our result as before.

\end{proof}


\landscapeSummary*

\begin{proof} The first statement follows immediately from Proposition~\ref{prop:spurious_rank}: if $\theta \in Crit(L)$ is a non-global local minimum, then necessarily $d \ell(\W) \ne 0$, and we conclude that  $\rk(\W) = r$. For the second statement, we observe that if $\ell$ is a convex function, then $\theta$ is a local minimum for $L$ if and only if $\W = \mu_{\bm d}(\theta)$ is a local minimum for $\ell|_{\mathcal M_r}$. Indeed, if $\W = \mu_{\bm d}(\theta)$ is a local minimum for $\ell|_{\mathcal M_r}$, then it is always true that any $\theta \in \mu_{\bm d}^{-1}(\W)$ is a local minimum. Conversely, if $\theta$ is a local minimum, then from Proposition~\ref{prop:spurious_rank} we see that either $d\ell(\W) = 0$, in which case $\W$ is a (global) minimum because $\ell$ is convex, or $\W$ must have maximal rank. In the latter case, $d \mu_{\bm d}(\theta)$ would be surjective (by Theorem~\ref{thm:differential_image}), so $\W$ would also be a local minimum for $\ell|_{\mathcal M_r}$ (see Proposition~\ref{prop:submersion}). Finally, it is clear that $\theta$ is also a global minimum for $L$ if and only if $\W$ is a global minimum for $\ell|_{\mathcal M_r}$.
\end{proof}

\NormalizedNets*
\begin{proof} Let us define
\[
\begin{aligned}
&p: \Omega \rightarrow \RR^{d_\theta}, (W_h,\ldots,W_1) \mapsto \left(W_h, \frac{W_{h-1}}{\|W_{h-1}\|},\ldots,\frac{W_1}{\|W_1\|}\right),\\
&q: \Omega \rightarrow \RR^{d_\theta}, (W_h,\ldots,W_1) \mapsto \left(W_h\|W_{h-1}\|\ldots\|W_1\|, \frac{W_{h-1}}{\|W_{h-1}\|},\ldots,\frac{W_1}{\|W_1\|}\right).
\end{aligned}
\]
The image of both of these maps is $N = \{(W_h,\ldots,W_1) \, | \, \|W_i\|=1,
\, i=1,\ldots,h-1\}$.
In fact, both maps are submersions onto $N$.
Since $\nu_{\bm d} = \mu_{\bm d} \circ p$ and
$\mu_{\bm d} \circ q = \mu_{\bm d} |_{\Omega}= \nu_{\bm d} \circ q$, 
it is enough to show the following two assertions: 1) $\theta'
\in Crit(L')$ if and only if $p(\theta')
\in Crit(L)$; and \linebreak[4] 2) $\theta \in Crit(L) \cap \Omega$ if and only if $q(\theta) \in
Crit(L')$.

For 1), we deduce from $L' = L \circ p$ that $d L'(\theta') = d L(p(\theta')) \circ d
p(\theta')  = 0$ if $dL(p(\theta')) = 0$, but this also holds conversely: if
$dL'(\theta')  = 0$, then $Im(d p(\theta'))$ is contained in
$Ker(d L(p(\theta')))$.
Since $q \circ p = p$ and both maps $p$ and $q$ are submersions,
we have that
$Im(dp(\theta')) = T_{p(\theta')}N = Im(d q (p(\theta')))$.
Now it follows from $L \circ q = L |_{\Omega}$ that 
$ dL(p(\theta')) = 
d L(p(\theta'))
\circ d q(p(\theta'))  = 0$.
For 2), we can argue analogously, exchanging the roles of $L$ and $L'$ as well as $p$ and $q$.
\end{proof}

\subsection{Proof of Theorem~\ref{thm:EY}}

We consider a fixed matrix $Q_0 \in \RR^{d_y \times d_x}$ and a singular value decomposition (SVD) $Q_0 = U \Sigma V^T$.
Here $U \in \RR^{d_y \times d_y}$ and $V \in \RR^{d_x \times d_x}$ are orthogonal and $\Sigma \in \RR^{d_y \times d_x}$ is diagonal with decreasing diagonal entries
$\sigma_1, \sigma_2, \ldots, \sigma_m$
where $m = \min ( d_x, d_y )$.
We also write shortly $[m] = \lbrace 1, 2, \ldots, m \rbrace$
and denote by $[m]_r$ 
the set of all subsets of $[m]$ of cardinality $r$.
For $\mathcal{I} \in [m]_r$, 
we define $\Sigma_\mathcal{I} \in \RR^{d_y \times d_x}$ to be the diagonal matrix with entries 
$\sigma_{\mathcal{I},1}, \sigma_{\mathcal{I},2}, \ldots, \sigma_{\mathcal{I},m}$
where $\sigma_{\mathcal{I},i} = \sigma_i$ if $i \in \mathcal{I}$ and $\sigma_{\mathcal{I},i}=0$ otherwise.
These matrices yield the critical points of the function $h_{Q_0}(P) = \Vert P-Q_0\Vert^2$ restricted to the determinantal variety $\mathcal{M}_r$.

\begin{theorem}
\label{thm:EY1}
If $Q_0 \notin \mathcal{M}_r$, the critical points of
$h_{Q_0}|_{\mathcal M_r}$
are all matrices of the form $U \Sigma_\mathcal{I} V^T$
where $Q_0 = U \Sigma V^T$ is a SVD and $\mathcal{I} \in [m]_r$.
The local minima are the critical points with $\mathcal{I} = [r]$. 
They are all global minima. 
\end{theorem}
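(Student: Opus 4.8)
The plan is to recast ``$P\in Crit(h_{Q_0}|_{\mathcal M_r})$'' as a normal-space condition for the determinantal variety, and then decode that condition through the singular value decomposition of $Q_0$.

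First I would recall from Appendix~\ref{sec:det_variaties} that at a point $P$ of rank exactly $r$ --- the only candidates, since critical points must be smooth points of $\mathcal M_r$ --- the tangent space is $T_P\mathcal M_r = \RR^{d_y}\otimes Row(P) + Col(P)\otimes\RR^{d_x}$, with Frobenius-orthogonal complement $(T_P\mathcal M_r)^\perp = Col(P)^\perp\otimes Row(P)^\perp$. Since the ambient gradient of $h_{Q_0}$ at $P$ is $2(P-Q_0)$, the point $P$ is critical for the restriction precisely when $P-Q_0\in Col(P)^\perp\otimes Row(P)^\perp$. Writing $\Pi_C,\Pi_R$ for the orthogonal projections onto $C:=Col(P)$ and $R:=Row(P)$, this is equivalent to the pair of equations $\Pi_C Q_0 = P = Q_0\Pi_R$. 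Expressing $Q_0$ in block form for the orthogonal splittings $\RR^{d_y}=C\oplus C^\perp$, $\RR^{d_x}=R\oplus R^\perp$, these equations force the two off-diagonal blocks of $Q_0$ to vanish; so $Q_0$ is block-diagonal, $P$ is its top-left $(C,R)$-block, and that block is invertible because $\rk(P)=r=\dim C=\dim R$. Conversely, any matrix arising as an invertible $(C,R)$-block of a block-diagonalization of $Q_0$ of this kind is a critical point.

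The second step passes to the SVD. When $Q_0$ is block-diagonal as above, the conditions $Q_0R\subseteq C$, $Q_0^TC\subseteq R$ together with invertibility of the block force $C$ and $R$ to be spanned, respectively, by an orthonormal set of left and of right singular vectors of $Q_0$ belonging to positive singular values, with $Q_0$ carrying $R$ isomorphically onto $C$; completing these to full orthonormal systems yields a SVD $Q_0=U\Sigma V^T$ and an index set $\mathcal I\in[m]_r$ with $\sigma_i>0$ for $i\in\mathcal I$ such that $P=U\Sigma_{\mathcal I}V^T$. Conversely, any such $U\Sigma_{\mathcal I}V^T$ is a rank-$r$ critical point by the previous paragraph. (If $\rk(Q_0)>r$ has vanishing singular values, the matrices $U\Sigma_{\mathcal I}V^T$ whose $\mathcal I$ picks up such an index have rank $<r$, are not smooth points, and so are excluded.) I expect this step to require the most care, precisely because the SVD is not unique when $Q_0$ has repeated singular values, so the argument must be run at the level of singular subspaces rather than fixed singular vectors.

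For the optimality statements I would evaluate $h_{Q_0}(U\Sigma_{\mathcal I}V^T)=\|Q_0\|^2-\sum_{i\in\mathcal I}\sigma_i^2$. By the classical Eckart--Young theorem the minimum of $h_{Q_0}$ over $\mathcal M_r$ is $\|Q_0\|^2-\sum_{i=1}^{r}\sigma_i^2$ (with $\sigma_r>0$ because $Q_0\notin\mathcal M_r$), so exactly the critical points whose index set attains the maximal value $\sum_{i=1}^{r}\sigma_i^2$ are global minimizers; any such index set has value-multiset $\{\sigma_1,\dots,\sigma_r\}$, so rotating the SVD inside the repeated singular subspace realizes it with $\mathcal I=[r]$. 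To exclude any further local minimum, I would take a critical point $P=U\Sigma_{\mathcal I}V^T$ whose $\sum_{i\in\mathcal I}\sigma_i^2$ is not maximal; an exchange argument produces indices $i\notin\mathcal I$, $j\in\mathcal I$ with $\sigma_i>\sigma_j>0$, and along the rank-$r$ curve $P_t$ obtained from $P$ by replacing the summand $\sigma_j u_jv_j^T$ with $\sigma_j(u_j\cos t+u_i\sin t)(v_j\cos t+v_i\sin t)^T$ a direct computation gives $h_{Q_0}(P_t)=h_{Q_0}(P)-2\sigma_j(\sigma_i-\sigma_j)\sin^2 t$, strictly decreasing for small $t\neq0$; hence $P$ is a saddle. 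This last computation is routine, so the real work is confined to the subspace bookkeeping of the two middle steps.
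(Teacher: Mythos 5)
Your proposal is correct and follows essentially the same route as the paper's proof: you characterize critical points by the normal-space condition $Q_0-P\in Col(P)^\perp\otimes Row(P)^\perp$ and decode it through the SVD of $Q_0$, then rule out non-top critical points as local minima via an explicit rotation curve, whose value formula $h_{Q_0}(P_t)=h_{Q_0}(P)-2\sigma_j(\sigma_i-\sigma_j)\sin^2 t$ checks out. The only differences are execution-level: you unpack the normal-space condition via projections, block-diagonalization and invariant subspaces of $Q_0Q_0^T$, where the paper more directly observes that the SVDs of $P$ and $Q_0-P$ assemble into an SVD of $Q_0$, and your curve keeps the coefficient $\sigma_j$ fixed while the paper's interpolates it between the two singular values.
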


\begin{proof}
A matrix $P \in \mathcal M_r$ is a critical point if and only if $(Q_0-P) \in
T_P\mathcal M_r^\perp = Col(P)^\perp \otimes Row(P)^\perp$.
If $P = \sum_{i=1}^r
\sigma'_i (u'_i \otimes v'_i)$ and $Q_0-P = \sum_{j=1}^e \sigma_j''(u_j'' \otimes
v_j'')$ are SVD decompositions with $\sigma'_i \neq 0$ and $\sigma''_j \neq 0$, 
the column spaces of $P$ and $Q_0-P$ are spanned by the $u'_i$ and $u''_j$, respectively.
Similarly, the row spaces of $P$ and $Q_0-P$ are spanned by the $v'_i$ and $v''_j$, respectively.
So $P$ is a critical point if and only if
the vectors $u'_i, u_j''$ and $v'_i, v_j''$ are orthonormal,
\ie, if
\[Q_0 = P + (Q_0-P) = \sum_{i=1}^r \sigma'_i (u'_i \otimes v'_i) +
\sum_{j=1}^e \sigma_j''(u_j'' \otimes v_j'')\]
is a SVD of $Q_0$.
This proves that the critical points are of the form $U \Sigma_\mathcal{I} V^T$
where $Q_0 = U \Sigma V^T$ is a SVD and $\mathcal{I} \in [m]_r$.

Since $h_{Q_0}(U \Sigma_\mathcal{I} V^T) = \Vert U \Sigma_{[m] \setminus \mathcal{I}} V^T \Vert^2 = \Vert  \Sigma_{[m] \setminus \mathcal{I}} \Vert^2 = \sum_{i \notin \mathcal{I}} \sigma_i^2$, 
we see that the global minima are exactly the critical points selecting $r$ of the largest singular values of $Q_0$, \ie, with $\mathcal{I} = [r]$.
It is left to show that there are no other local minima. 
For this, we consider a critical point $P = U \Sigma_\mathcal{I} V^T$
such that at least one selected singular value $\sigma_i$ for $i \in \mathcal{I}$ is strictly smaller than $\sigma_r$. 
We will show now that $P$ cannot be a local minimum. 
Since $\sigma_i < \sigma_r$, there is some $j \in [r]$ such that $j \notin \mathcal{I}$.
As above, we write $u_k$ and $v_k$ for the columns of $U$ and $V^T$
such that $Q_0 = \sum_{k=1}^m \sigma_k (u_k \otimes v_k)$
and $P = \sum_{k \in \mathcal{I}} \sigma_k (u_k \otimes v_k)$.
We consider rotations in the planes spanned by $u_i, u_j$ and $v_i,v_j$, respectively:
for $a \in [0, \frac{\pi}{2}]$, we set
$u^{(\alpha)} = \cos(\alpha) u_i + \sin(\alpha) u_j$
and $v^{(\alpha)} = \cos(\alpha) v_i + \sin(\alpha) v_j$.
Note that $u^{(0)} = u_i$ and $u^{(\frac{\pi}{2})} = u_j$; analogously for $v^{(\alpha)}$.
Next we define $\sigma^{(\alpha)} = \cos^2(\alpha)\sigma_i + \sin^2(\alpha) \sigma_j$
and 
$$P_\alpha = \sum_{k \in \mathcal{I} \setminus \lbrace i \rbrace} \sigma_k (u_k \otimes v_k) + \sigma^{(\alpha)} \left(u^{(\alpha)} \otimes v^{(\alpha)} \right) \in \mathcal{M}_r. $$
We note that $P_0 = P$ and $P_{\frac{\pi}{2}} = U \Sigma_{\mathcal{I} \setminus \lbrace i \rbrace \cup \lbrace j \rbrace} V^T$ are both critical points of $h_{Q_0} |_{\mathcal{M}_r}$.

It remains to show that $h_{Q_0}(P_\alpha)$ as a function in $\alpha$ is strictly decreasing on the interval $[0 , \frac{\pi}{2}]$.
From 
\begin{align*}
    h_{Q_0}(P_\alpha) = \Vert 
    \sum_{k \notin \mathcal{I}} \sigma_k (u_k \otimes v_k)
    + \sigma_i (u_i \otimes v_i)
    - \sigma^{(\alpha)} (u^{(\alpha)} \otimes v^{(\alpha)})
    \Vert^2
\end{align*}
and $u^{(\alpha)} \otimes v^{(\alpha)}
= \cos^2(\alpha) (u_i \otimes v_i) + \cos(\alpha)\sin(\alpha) (u_i \otimes v_j + u_j \otimes v_i) + \sin^2(\alpha) (u_j \otimes v_j)$,
we deduce that
\begin{align*}
    h_{Q_0}\!(P_\alpha)
    &=\!\!
    \sum_{k \notin \mathcal{I}, k \neq j}\!\!
    \sigma_k^2
    + \left( \sigma_i - \sigma^{(\alpha)} \cos^2(\alpha) \right)^2\!
    + 2 \left( \sigma^{(\alpha)} \cos(\alpha) \sin(\alpha)  \right)^2\!
    + \left( \sigma_j - \sigma^{(\alpha)} \sin^2(\alpha) \right)^2 \\
    &= \sum_{k \notin \mathcal{I}, k \neq j}
    \sigma_k^2
    + \sigma_i^2
    + 2 \sigma_j (\sigma_j - \sigma_i) \cos^2(\alpha)
    - (\sigma_j - \sigma_i)^2 \cos^4(\alpha).
\end{align*}
The graph of the function $f(x) = \sigma_i^2
    + 2 \sigma_j (\sigma_j - \sigma_i) x
    - (\sigma_j - \sigma_i)^2 x^2$, for $x \in \mathbb{R}$,
    is a parabola with a unique local and global maximum
    at $x_0 = \frac{\sigma_j}{\sigma_j-\sigma_i}$.
Since $x_0 \geq 1$, 
the function $f$ is strictly increasing on the interval $[0,1]$.
Hence, 
$h_{Q_0}(P_\alpha) = \sum_{k \notin \mathcal{I}, k \neq j} \sigma_k^2
+ f(\cos^2(\alpha))$ is strictly decreasing on $[0 , \frac{\pi}{2}]$,
which concludes the proof.
\end{proof}

If the singular values of $Q_0$ are pairwise distinct and positive,
the singular vectors of $Q_0$ are unique up to sign. 
So for each index set $\mathcal{I} \in [m]_r$ the matrix $Q_\mathcal{I} = U \Sigma_\mathcal{I} V^T$
is the unique critical point of $h_{Q_0} |_{\mathcal{M}_r}$
whose singular values are the $\sigma_i$ for $i \in \mathcal{I}$.
Hence, Theorem~\ref{thm:EY1} implies immediately the following:
\begin{corollary}
\label{cor:EY1}
If the singular values of $Q_0$ are pairwise distinct and positive,
 $h_{Q_0} |_{\mathcal{M}_r}$
has exactly $\binom{m}{r}$ critical points, namely the $Q_{\mathcal{I}} = U \Sigma_\mathcal{I} V^T$ for $\mathcal I \in [m]_r$.
Moreover, its unique
local and global minimum is $Q_{[r]}$.
\end{corollary}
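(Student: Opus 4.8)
The plan is to establish the three assertions in turn, deducing the minimality statement from the index formula.

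\emph{Enumerating the critical points.} Since $Smooth(\mathcal{M}_r)$ consists exactly of the matrices of rank $r$, a critical point is a rank-$r$ matrix $P$ with $Q_0 - P \in (T_P\mathcal{M}_r)^\perp = Col(P)^\perp \otimes Row(P)^\perp$. Writing compact SVDs $P = \sum_{i=1}^{r} \sigma_i'(u_i' \otimes v_i')$ and $Q_0 - P = \sum_{j=1}^{e} \sigma_j''(u_j'' \otimes v_j'')$ with positive singular values, the orthogonality condition says precisely that the $u_i'$ are orthonormal to the $u_j''$ and the $v_i'$ to the $v_j''$, so $Q_0 = P + (Q_0-P)$ is itself an SVD of $Q_0$. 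Because the singular values of $Q_0$ are distinct and positive, its SVD is unique up to signs of the singular vectors, so $P$ is obtained by selecting $r$ of the singular triples of $Q_0$, i.e. $P = Q_\mathcal{I} = U\Sigma_\mathcal{I}V^T$ with $\#\mathcal{I} = r$; conversely each such $Q_\mathcal{I}$ has rank exactly $r$, and $Q_0 - Q_\mathcal{I} = U\Sigma_{\mathcal{I}^c}V^T$ lies in the normal space, so $Q_\mathcal{I}$ is critical. Distinct index sets yield distinct matrices, giving $\binom{m}{r}$ critical points in total.

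\emph{The index.} Conjugating by the orthogonal factors of the SVD (and permuting rows and columns) is a Frobenius isometry preserving $\mathcal{M}_r$, so I may assume $Q_0 = \Sigma$ is diagonal, $\mathcal{I} = \{1,\dots,r\}$, and $Q_\mathcal{I} = \Sigma_\mathcal{I}$, with $\Sigma$ block-diagonal with blocks $D = \mathrm{diag}(\sigma_i)_{i\in\mathcal{I}}$ and $E$ (containing $\mathrm{diag}(\sigma_j)_{j\in\mathcal{I}^c}$, padded by zeros if $d_x\ne d_y$). Near $\Sigma_\mathcal{I}$ I parameterize $\mathcal{M}_r$ by $(A,B,C)\mapsto \bigl(\begin{smallmatrix} A & B\\ C & CA^{-1}B\end{smallmatrix}\bigr)$ with $A$ near $D$, so $h_{Q_0} = \|A-D\|^2 + \|B\|^2 + \|C\|^2 + \|CA^{-1}B - E\|^2$; a second-order expansion at the critical point $(D,0,0)$ gives, up to a constant, $\|\dot A\|^2 + \sum_{i\in\mathcal{I},\,j\in\mathcal{I}^c}\bigl(\dot B_{ij}^2 + \dot C_{ji}^2 - 2\tfrac{\sigma_j}{\sigma_i}\dot B_{ij}\dot C_{ji}\bigr)$. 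The $\dot A$-block is positive definite, and for each pair $(i,j)\in\mathcal{I}\times\mathcal{I}^c$ the $2\times2$ form $\bigl(\begin{smallmatrix}1 & -\sigma_j/\sigma_i\\ -\sigma_j/\sigma_i & 1\end{smallmatrix}\bigr)$ has eigenvalues $1\pm\sigma_j/\sigma_i$, contributing exactly one negative eigenvalue precisely when $\sigma_j>\sigma_i$, i.e. (distinct, decreasing singular values) when $j<i$. Summing over pairs and relabeling the two indices gives $\mathrm{index}(Q_\mathcal{I}) = \#\{(j,i)\in\mathcal{I}\times\mathcal{I}^c \mid j>i\}$; since the number of negative Hessian eigenvalues at a critical point does not depend on the chart, this is the index in any local parameterization.

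\emph{The minimum.} By the index formula $\mathrm{index}(Q_\mathcal{I}) = 0$ iff no element of $\mathcal{I}$ exceeds an element of $\mathcal{I}^c$, i.e. iff $\mathcal{I} = \{1,\dots,r\}$, so every other critical point has a negative Hessian eigenvalue and is not a local minimum. Since $h_{Q_0}$ is coercive and $\mathcal{M}_r$ is closed, $h_{Q_0}|_{\mathcal{M}_r}$ attains a global minimum $P^*$; if $\rk(P^*) = e < r$ then $P^*$ also minimizes $h_{Q_0}$ over $\mathcal{M}_e\subseteq\mathcal{M}_r$, hence equals some $Q_\mathcal{J}$ with $\#\mathcal{J} = e$ by the first step applied to $\mathcal{M}_e$, giving $h_{Q_0}(P^*) = \sum_{i\notin\mathcal{J}}\sigma_i^2 \ge \sum_{i>e}\sigma_i^2 > \sum_{i>r}\sigma_i^2 = h_{Q_0}(Q_{\{1,\dots,r\}})$, a contradiction; so $\rk(P^*) = r$, making $P^*$ a critical point of index $0$, hence $P^* = Q_{\{1,\dots,r\}}$. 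Thus $Q_{\{1,\dots,r\}}$ is the unique global minimum and the only local minimum. The main obstacle is the Hessian computation in the second step: choosing a local chart of the singular variety $\mathcal{M}_r$ at $Q_\mathcal{I}$ in which $h_{Q_0}$ is manageable, and extracting the cross term $-2\langle E, \dot C D^{-1}\dot B\rangle$ so that the Hessian decouples into the positive-definite $\dot A$-block and the $2\times2$ blocks indexed by $\mathcal{I}\times\mathcal{I}^c$; the enumeration and the minimization are then comparatively routine.
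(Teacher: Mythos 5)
Your proposal is correct, and for the heart of the statement it takes a genuinely different route from the paper. The enumeration of critical points is the same as the paper's proof of Theorem~\ref{thm:EY1}: criticality at a rank-$r$ point means $Q_0-P \in Col(P)^\perp\otimes Row(P)^\perp$, the two SVDs assemble into an SVD of $Q_0$, and uniqueness of the SVD under the distinct-positive hypothesis gives exactly the $\binom{m}{r}$ points $Q_\mathcal{I}$. For minimality, however, the paper never uses second-order information in this corollary: it rules out local minimality of $Q_\mathcal{I}$ for $\mathcal{I}\neq[r]$ by an explicit first-order construction, the one-parameter family $P_\alpha$ obtained by rotating a pair of singular vectors, along which $h_{Q_0}$ strictly decreases; this argument needs no nondegeneracy and is why Theorem~\ref{thm:EY1} holds for arbitrary $Q_0\notin\mathcal M_r$, repeated singular values included. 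You instead prove the index formula of Theorem~\ref{thm:EY2} first, by computing the intrinsic Hessian of $h_{Q_0}|_{\mathcal M_r}$ in the rational Schur-complement chart $(A,B,C)\mapsto\bigl(\begin{smallmatrix}A&B\\ C&CA^{-1}B\end{smallmatrix}\bigr)$; your expansion, the cross term $-2\langle E,\dot C D^{-1}\dot B\rangle$, and the $2\times 2$ blocks with eigenvalues $1\pm\sigma_j/\sigma_i$ are all correct, and the resulting count matches the paper's formula, which the paper derives quite differently (Hessian of the parameterized loss $L=h_\Sigma\circ\mu_{(d_y,r,d_x)}$ and a characteristic-polynomial computation via Schur complements). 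Your route buys a cleaner, chart-level proof of the index formula and, through the coercivity/closedness argument and the value comparison on $\mathcal M_e$ for $e<r$, it also makes explicit that the global minimum is attained at a rank-$r$ point, which the paper leaves implicit; what it gives up is robustness, since with a repeated singular value the relevant $2\times2$ block degenerates and second-order information alone no longer excludes a local minimum, whereas the rotation-path argument still applies. Finally, like the paper, you exclude local minima only among critical points in the paper's sense (smooth, i.e., rank-$r$, points of $\mathcal M_r$), so your proof matches the scope of the stated corollary.
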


We can strengthen this result by explicitly calculating the \emph{index} of each critical point, \ie, 
the number of negative eigenvalues of the Hessian matrix. 

\begin{theorem}
\label{thm:EY2}
If the singular values of $Q_0$ are pairwise distinct and positive,
the {index} of $Q_\mathcal{I}$ as a critical point of $h_{Q_0} |_{\mathcal{M}_r}$ is

\[
\mathrm{index}(Q_\mathcal{I}) = 
\# \{(j,i) \in \mathcal{I} \times ([m] \setminus \mathcal{I}) \, | \, j> i\}.
\]
\end{theorem}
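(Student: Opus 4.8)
The plan is to reduce to a diagonal matrix and then read off the index from an explicit local chart of $\mathcal{M}_r$ built from Schur complements. First I would use the invariance of the Frobenius norm under $P\mapsto U^TPV$: this transformation preserves $\mathcal{M}_r$ and carries the critical point $Q_{\mathcal I}=U\Sigma_{\mathcal I}V^T$ to $\Sigma_{\mathcal I}$, so it preserves the Hessian and its index. Thus we may assume $Q_0=\Sigma$ and work at $P_0=\Sigma_{\mathcal I}$, which has rank $r$ (all $\sigma_i>0$) and is therefore a smooth point of $\mathcal{M}_r$.

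Next, set $J_y=\{1,\dots,d_y\}\setminus\mathcal I$ and $J_x=\{1,\dots,d_x\}\setminus\mathcal I$, and split every matrix $P$ into the blocks $P_{\mathcal I\mathcal I},P_{\mathcal I J_x},P_{J_y\mathcal I},P_{J_yJ_x}$. Since $(P_0)_{\mathcal I\mathcal I}=\mathrm{diag}(\sigma_i:i\in\mathcal I)$ is invertible, for $P$ near $P_0$ the condition $\rk(P)\le r$ is equivalent to vanishing of the Schur complement, so $\mathcal{M}_r$ is parameterized near $P_0$ by $(A,B,C)$ near the origin via $P_{\mathcal I\mathcal I}=\Sigma_{\mathcal I\mathcal I}+A$, $P_{J_y\mathcal I}=B$, $P_{\mathcal I J_x}=C$, $P_{J_yJ_x}=B(\Sigma_{\mathcal I\mathcal I}+A)^{-1}C$ (the number of free parameters being $r^2+r(d_y-r)+r(d_x-r)=\dim\mathcal{M}_r$). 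I would then Taylor-expand $h_{Q_0}(P)=\|P-\Sigma\|^2$ in $(A,B,C)$: since $\Sigma$ is diagonal, $\Sigma_{J_y\mathcal I}=0$, $\Sigma_{\mathcal I J_x}=0$, and $\Sigma_{J_yJ_x}$ is the diagonal matrix with $(j,j)$-entry $\sigma_j$ for $j\in\mathcal I^c$; together with $P_{J_yJ_x}=B\Sigma_{\mathcal I\mathcal I}^{-1}C+O(3)$ this yields
$$h_{Q_0}(P)=\sum_{j\in\mathcal I^c}\sigma_j^2+\|A\|^2+\|B\|^2+\|C\|^2-2\sum_{i\in\mathcal I,\ j\in\mathcal I^c}\frac{\sigma_j}{\sigma_i}\,b_{ji}\,c_{ij}+O(3),$$
so the Hessian is, up to the harmless factor $2$, the matrix of the quadratic part on the right (the constant also matches $h_{Q_0}(Q_{\mathcal I})$ from Theorem~\ref{thm:EY1}).

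Finally I would diagonalize this quadratic form. It is a block sum: a positive-definite part coming from all of $A$, from every entry $b_{ki}$ with $k\notin\mathcal I^c$ and from every $c_{il}$ with $l\notin\mathcal I^c$, together with, for each pair $(i,j)\in\mathcal I\times\mathcal I^c$, an independent binary form in $(b_{ji},c_{ij})$ with matrix $\left(\begin{smallmatrix}1&-\sigma_j/\sigma_i\\-\sigma_j/\sigma_i&1\end{smallmatrix}\right)$, whose eigenvalues are $1\pm\sigma_j/\sigma_i$. As the $\sigma_k$ are pairwise distinct, each such block is nondegenerate, and it contributes exactly one negative eigenvalue precisely when $\sigma_j>\sigma_i$, i.e.\ (since the singular values are decreasing) when $j<i$. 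Summing over all pairs gives $\mathrm{index}(Q_{\mathcal I})=\#\{(i,j)\in\mathcal I\times\mathcal I^c:i>j\}$, which is the claimed quantity after swapping the names of the two dummy indices; as a bonus, nondegeneracy shows that the index is chart-independent.

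I expect the main difficulty to be purely bookkeeping: setting up the Schur-complement chart correctly and keeping track of the distinction between $\mathcal I^c=[m]\setminus\mathcal I$ and the larger complements $J_y,J_x$ — the ``extra'' rows and columns beyond index $m$ produce only positive-definite terms, so they do not enter the index — and verifying that the displayed cross term is the unique second-order contribution coming from the curvature of $\mathcal{M}_r$.
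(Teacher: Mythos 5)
Your proof is correct, but it takes a genuinely different route from the paper's. Both arguments begin the same way (reduce to $Q_0=\Sigma$ diagonal by the orthogonal invariance of the Frobenius norm), but the paper then computes the full Hessian of the \emph{overparameterized} loss $L=h_\Sigma\circ\mu_{(d_y,r,d_x)}$ at an explicit factorization $(A_0,B_0)$ of $\Sigma_{\mathcal I}$, uses a Schur complement to obtain the characteristic polynomial of that large structured matrix, and counts its negative roots (Lemma~\ref{lem:NegRoots}); implicitly it also needs Theorem~\ref{thm:differential_image} to know that the negative eigenvalue count of $L$ at a rank-$r$ point agrees with the intrinsic index on $\mathcal M_r$, the extra fiber directions only contributing zero eigenvalues. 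You instead work intrinsically: the Schur-complement (graph) chart $(A,B,C)\mapsto P$ with $P_{J_yJ_x}=B(\Sigma_{\mathcal I\mathcal I}+A)^{-1}C$ is a bona fide local parameterization of $\mathcal M_r$ at the smooth point $\Sigma_{\mathcal I}$ of the correct dimension $r(d_x+d_y-r)$, your second-order expansion is right (the $A$-dependence of the Schur block is cubic, the cross term $-2\sum_{i\in\mathcal I,\,j\in\mathcal I^c}(\sigma_j/\sigma_i)\,b_{ji}c_{ij}$ is exactly the curvature contribution), and the quadratic form splits into an explicitly positive part plus independent $2\times2$ blocks with eigenvalues $1\pm\sigma_j/\sigma_i$, giving one negative direction precisely when $\sigma_j>\sigma_i$, i.e.\ when the $\mathcal I$-index exceeds the $\mathcal I^c$-index — the stated count. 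What each approach buys: yours is shorter, matches the definition of index via a local parameterization directly (no appeal to the submersion theorem, no characteristic-polynomial bookkeeping, and the index is chart-independent at a critical point by Sylvester's law, even without nondegeneracy), while the paper's computation yields in addition the full spectrum of the Hessian of the network loss $L$ in the $(A,B)$-parameterization, which is the object of interest for the linear-network landscape. One small point worth making explicit in your write-up is that the absence of linear terms in your expansion reconfirms criticality, and that the ``extra'' rows/columns beyond index $m$ (present when $d_y\neq d_x$) enter only through $\|B\|^2+\|C\|^2$, hence never affect the index — you noted this, and it is the only bookkeeping hazard in the argument.
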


To prove this assertion, we may assume
without loss of generality that $d_y \leq d_x$, so $m = d_y$.
We may further assume that $Q_0$ is a diagonal matrix, so $Q_0 = \Sigma$.
Let $\mu_{(d_y,r,d_x)}: \RR^{d_y \times r} \times \RR^{r \times d_x} \to \RR^{d_y \times d_x}$ be the matrix multiplication map,
and $L = h_{\Sigma} \circ \mu_{(d_y,r,d_x)}$.
For $(A,B) \in \mu_{(d_y,r,d_x)}^{-1} (\Sigma_{\mathcal{I}})$,
Theorem~\ref{thm:differential_image} implies that 
the condition $\Sigma_{\mathcal{I}} \in Crit(h_{\Sigma}|_{\mathcal{M}_r})$ is equivalent to $dL(A,B) = 0$.
Moreover, the number of negative eigenvalues of the Hessian of $L$ at any such factorization $(A,B)$ of $\Sigma_{\mathcal{I}}$ is the same. 
This number is the index of $\Sigma_{\mathcal{I}}$.
So we can compute it by fixing one specific factorization $(A,B)$ of $\Sigma_{\mathcal{I}}$.

To compute the Hessian of $L$ at $(A,B)$, we compute the partial derivatives of first and second order of $L$:
\begin{align*}
    \frac{\partial L}{\partial a_{ij}} = 2 \left[ (AB-\Sigma)B^T \right]_{ij}, 
    \quad \frac{\partial L}{\partial b_{ij}} = 2 \left[ A^T(AB-\Sigma) \right]_{ij},
\end{align*}
\begin{align}
    \label{eq:secondPartialsAA}
    \frac{\partial^2 L}{\partial a_{ij} \partial a_{kl}}
    &= \left\lbrace \begin{array}{ll}
        0 & , \text{ if } i \neq k  \\
        2 [B B^T]_{jl} &  , \text{ if } i = k
    \end{array} \right.
    , \\
    \label{eq:secondPartialsBB}
    \frac{\partial^2 L}{\partial b_{ij} \partial b_{kl}}
    &= \left\lbrace \begin{array}{ll}
        0 & , \text{ if } j \neq l  \\
        2 [A^T A]_{ik} &  , \text{ if } j = l
    \end{array} \right.  
    , \\
    \label{eq:secondPartialsAB}
    \frac{\partial^2 L}{\partial a_{ij} \partial b_{kl}}
    &= \left\lbrace \begin{array}{ll}
        2 a_{ik} b_{jl} & , \text{ if } j \neq k  \\
        2 \left( a_{ik} b_{jl} + [AB-\Sigma]_{il} \right) &  , \text{ if } j = k
    \end{array} \right.
    .
\end{align}
To assemble these second order partial derivatives into a matrix, we choose the following order of the entries of $(A,B)$: 
$$a_{11}, \ldots,  a_{1r},\,\, a_{21}, \ldots, a_{2r},\,\, \ldots,\,\, a_{d_y 1}, \ldots, a_{d_y r}, \,\, b_{11}, \ldots, b_{r1}, \,\, b_{12}, \ldots, b_{r2},  \,\, \ldots, \,\, b_{1d_x}, \ldots, b_{rd_x}.$$
We denote by $H$ the Hessian matrix of $L$ with respect to this ordering at the following specifically chosen matrices $(A_0,B_0)$:
denoting by $i_1, i_2, \ldots, i_r$ the entries of $\mathcal{I}$ in decreasing order, we pick the $j$-th column of $A_0$ to be the $i_j$-th standard basis vector in $\RR^{d_y}$
and the $j$-th row of $B_0$ to be the $\sigma_{i_j}$-multiple of the $i_j$-th standard basis vector in $\RR^{d_x}$.
Note that $A_0B_0 = \Sigma_{\mathcal{I}}$, $A_0^TA_0 = I_r$ is the $r \times r$-identity matrix, and 
$B_0 B_0^T$ is the $r \times r$-diagonal matrix with entries $\sigma_{i_1}^2, \sigma_{i_2}^2, \ldots, \sigma_{i_r}^2$.
We write
\begin{align*}
    H = \left[ \begin{array}{cc}
        D & M \\
        M^T & N
    \end{array} \right], \quad \text{ where }
    D \in \RR^{rd_y \times rd_y}, 
    N \in \RR^{rd_x \times rd_x},
    M \in \RR^{rd_y \times rd_x}.
\end{align*}
Our first observation is that $N$, whose entries are described by~\eqref{eq:secondPartialsBB}, is twice the identity matrix, so  $N = 2I_{rd_x}$.
Similarly, we see from~\eqref{eq:secondPartialsAA} that $D$ is a diagonal matrix. 
According to our fixed ordering, the entries of $D$ are indexed by pairs $(ij, kl)$ of integers $i,k \in [d_y]$ and $j,l \in [r]$. With this, the diagonal entries of $D$ are $D_{ij,ij} = 2 \sigma_{i_j}^2$.
Analogously, the entries of $M$ are indexed by pairs $(ij, kl)$ of integers $i \in [d_y]$, $j,k \in [r]$ and $l \in [d_x]$.

\begin{lemma}
\label{lem:entriesOfM}
Let $i \in [d_y]$ and $j \in [r]$.
The $ij$-th row of $M$ has exactly one non-zero entry.
If $i \in \mathcal I$, there is some $k \in [r]$ with $i = i_k$ and the non-zero entry is $M_{ij, k i_j} = 2 \sigma_{i_j}$.
Otherwise, so if $i \notin \mathcal{I}$, the non-zero entry is
$M_{ij, ji} = -2 \sigma_i$.
\end{lemma}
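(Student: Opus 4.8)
The plan is to prove Lemma~\ref{lem:entriesOfM} by a direct substitution of the specially chosen factorization $(A_0,B_0)$ of $\Sigma_{\mathcal I}$ into the formula~\eqref{eq:secondPartialsAB} for the mixed second partials. First I would record the relevant entries: by construction $(A_0)_{pk} = \delta_{p,i_k}$ and $(B_0)_{kl} = \sigma_{i_k}\,\delta_{i_k,l}$, and moreover $A_0 B_0 - \Sigma = \Sigma_{\mathcal I} - \Sigma = -\Sigma_{[m]\setminus\mathcal I}$, so that $[A_0 B_0 - \Sigma]_{il} = -\sigma_i\,\delta_{il}$ when $i \notin \mathcal I$ and $[A_0 B_0 - \Sigma]_{il} = 0$ when $i \in \mathcal I$. (Here I use $i \le d_y \le d_x$, so that $\delta_{il}$ with $l \in [d_x]$ makes sense.)

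Next, fixing $i \in [d_y]$ and $j \in [r]$, I would run through the entries $M_{ij,kl}$ for $k \in [r]$, $l \in [d_x]$, using the two branches of~\eqref{eq:secondPartialsAB}. The product term $2\,a_{ik}b_{jl}$ evaluates at $(A_0,B_0)$ to $2\,\delta_{i,i_k}\,\sigma_{i_j}\,\delta_{i_j,l}$, which is nonzero only if $i \in \mathcal I$ with $i = i_k$ and $l = i_j$; the correction term $[A_0 B_0 - \Sigma]_{il}$ that appears only when $j = k$ is nonzero only if $i \notin \mathcal I$ and $l = i$. These two contributions never overlap, since the correction term vanishes precisely when $i \in \mathcal I$. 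If $i \in \mathcal I$, let $k$ be the unique index with $i_k = i$; then for $k' \ne k$ one has $a_{ik'} = 0$ and, when $j = k'$, the correction term vanishes because $i \in \mathcal I$, so the only nonzero entry of the $ij$-th row is $M_{ij,\,k\,i_j} = 2\sigma_{i_j}$ (this value is obtained both in the case $j = k$, where it equals $2\sigma_i = 2\sigma_{i_j}$ since then $i_j = i$, and in the case $j \ne k$). If instead $i \notin \mathcal I$, then $a_{ik} = 0$ for every $k$, the product term vanishes identically, and the only surviving contribution is $2[A_0 B_0 - \Sigma]_{il}$ at $k = j$, which equals $-2\sigma_i$ when $l = i$ and $0$ otherwise; hence the unique nonzero entry of the $ij$-th row is $M_{ij,\,ji} = -2\sigma_i$.

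The entire argument is elementary bookkeeping; the only point requiring care is keeping the index set $\mathcal I$, the columns of $A_0$, and the rows of $B_0$ aligned, and tracking which of the two branches of~\eqref{eq:secondPartialsAB} contributes in each case. There is no substantive obstacle.
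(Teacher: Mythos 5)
Your proposal is correct and follows essentially the same route as the paper's proof: evaluate the mixed second partials \eqref{eq:secondPartialsAB} at the specific factorization $(A_0,B_0)$, note that the product term $2a_{ik}b_{jl}$ contributes only when $i\in\mathcal I$ (with $i=i_k$, $l=i_j$) and the correction term $[A_0B_0-\Sigma]_{il}$ only when $i\notin\mathcal I$ (with $k=j$, $l=i$), and conclude by the same case split. The only cosmetic difference is that the paper separates the $i\in\mathcal I$ case into $i=i_j$ and $i\neq i_j$, which you handle uniformly.
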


\begin{proof}
The entries of $M$ are given by \eqref{eq:secondPartialsAB}.
We first observe that $(A_0)_{ik}(B_0)_{jl}$ is non-zero 
if and only if $i = i_k$ and $l = i_j$.
Moreover, we have that $(A_0)_{i_k k}(B_0)_{j i_j} = \sigma_{i_j}$.
Similarly, $[A_0B_0 - \Sigma]_{il}$ is non-zero if and only $i = l \notin \mathcal{I}$.
For $i \notin \mathcal{I}$ we have that $[A_0B_0 - \Sigma]_{ii} = - \sigma_i$.

Now we fix $i$ and $j$ and consider the $ij$-th row of $M$.
We apply our observations above to the following three cases.

If $i = i_j$, then $M_{ij,kl}$ is non-zero if and only if $k=j$ and $l = i$.
In that case, $M_{ij,ji} = 2 \sigma_{i}$.

If $i \in \mathcal{I}$, but $i \neq i_j$, then there is some $n \neq j$ such that $i = i_n$.
Now $M_{ij,kl}$ is non-zero if and only if $k = n$ and $l = i_j$.
In that case, $M_{ij,k i_j} = 2 \sigma_{i_j}$.

Finally, if $i \notin \mathcal{I}$, then $M_{ij,kl}$ is non-zero if and only if $k = j$ and $l = i$.
In that case, we have that $M_{ij,ji} = -2 \sigma_{i}$.
\end{proof}

\begin{corollary}
\label{cor:entriesOfMMT}
The square matrix $\Delta := MM^T \in \RR^{rd_y \times rd_y}$ is a diagonal matrix.
For $i \in [d_y]$ and $j \in [r]$, its $ij$-th diagonal entry
is $\Delta_{ij,ij} = 4 \sigma_{i_j}^2$ if $i \in \mathcal{I}$
and $\Delta_{ij,ij} = 4 \sigma_{i}^2$ if $i \notin \mathcal{I}$.
\end{corollary}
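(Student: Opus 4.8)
The plan is to read off Corollary~\ref{cor:entriesOfMMT} directly from the explicit description of the rows of $M$ provided by Lemma~\ref{lem:entriesOfM}. The structural fact that does all the work is that \emph{each} row of $M$ is supported on a single column. Since the $(ij,kl)$-entry of $\Delta = MM^T$ is the inner product of row $ij$ and row $kl$ of $M$, it can only be nonzero if the unique nonzero entry of row $ij$ and the unique nonzero entry of row $kl$ lie in the same column of $M$. So the only real point to verify is that this happens exactly when $(i,j) = (k,l)$.

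To verify this I would run a short case analysis according to whether $i$ and $k$ belong to $\mathcal I$, using the column positions recorded in Lemma~\ref{lem:entriesOfM}: if $i = i_a \in \mathcal I$ the nonzero entry of row $ij$ is in column $(a, i_j)$, whereas if $i \notin \mathcal I$ it is in column $(j, i)$. Three observations finish it. First, the second coordinate of that column lies in $\mathcal I$ precisely when $i \in \mathcal I$, so a row with $i \in \mathcal I$ can never share a column with a row with $k \notin \mathcal I$. Second, in the case $i, k \in \mathcal I$, the equality $(a, i_j) = (b, i_l)$ forces $a = b$ and $i_j = i_l$; since $i_1 > i_2 > \dots > i_r$ are pairwise distinct this yields $i = k$ and $j = l$. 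Third, in the case $i, k \notin \mathcal I$, the equality $(j, i) = (l, k)$ immediately gives $j = l$ and $i = k$. Hence $\Delta$ is diagonal.

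For the diagonal entries, note that $\Delta_{ij,ij}$ is just the squared Euclidean norm of row $ij$ of $M$, which equals the square of its single nonzero entry. By Lemma~\ref{lem:entriesOfM} this is $(2\sigma_{i_j})^2 = 4\sigma_{i_j}^2$ when $i \in \mathcal I$ and $(-2\sigma_i)^2 = 4\sigma_i^2$ when $i \notin \mathcal I$, which is exactly the claimed formula. I do not expect any genuine obstacle in this step; the only thing requiring care is the bookkeeping with the two indexing conventions for the columns of $M$ (the $(a, i_j)$ form versus the $(j, i)$ form) and remembering to invoke the distinctness of the $i_j$ at the one place where it is needed.
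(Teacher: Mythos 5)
Your proposal is correct and follows essentially the same route as the paper: both arguments reduce everything to Lemma~\ref{lem:entriesOfM}, show via the same case analysis on membership in $\mathcal I$ that the unique nonzero entries of distinct rows of $M$ never lie in the same column (the paper phrases this as no column of $M$ having more than one nonzero entry), and read off the diagonal entries of $\Delta$ as the squared norms of the rows.
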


\begin{proof}
The computation of the diagonal entries follows directly from Lemma~\ref{lem:entriesOfM}.
To see that all other entries of $\Delta$ are zero, 
we need to show that no column of $M$ has more than one non-zero entry.
So let us assume for contradiction that the $kl$-th column of $M$ has non-zero entries in the $ij$-th row and in the $\overline{\imath}\overline{\jmath}$-th row for $(i,j) \neq (\overline{\imath},\overline{\jmath})$.

If $i , \overline{\imath} \in \mathcal{I}$, then Lemma~\ref{lem:entriesOfM} implies
$i = i_k = \overline{\imath}$ and $i_j = l = i_{\overline{\jmath}}$,
which contradicts $(i,j) \neq (\overline{\imath},\overline{\jmath})$.

If $i, \overline{\imath} \notin \mathcal{I}$, we see from Lemma \ref{lem:entriesOfM} that $j = k = \overline{\jmath}$ and $i = l = \overline{\imath}$, 
which contradicts $(i,j) \neq (\overline{\imath},\overline{\jmath})$.

Finally, if $i \in \mathcal{I}$ and $\overline{\imath} \notin \mathcal{I}$, then Lemma~\ref{lem:entriesOfM} yields that $\overline{\imath} = l = i_j \in \mathcal{I}$; a contradiction.
\end{proof}

\begin{corollary}
\label{cor:charPolyH}
The characteristic polynomial of $H$ is
\begin{align}
    \label{eq:charPoly}
    (t-2)^{r|d_x-d_y|} \cdot t^{r^2} \cdot \prod_{k \in \mathcal{I}} \left( t-2(\sigma_{k}^2+1) \right)^r
    \cdot \hspace{-2mm} \prod_{i \in [m] \setminus \mathcal{I}} \prod_{j \in \mathcal{I}} \left( t^2 - 2t (\sigma_{j}^2+1) + 4(\sigma_{j}^2 - \sigma_i^2) \right).
\end{align}
\end{corollary}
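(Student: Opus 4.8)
The plan is to compute the characteristic polynomial $\det(tI-H)$ by a Schur complement with respect to the lower‑right block $N$. The key inputs, all established in the discussion preceding Corollary~\ref{cor:charPolyH} together with~\eqref{eq:secondPartialsAA}, \eqref{eq:secondPartialsBB} and Corollary~\ref{cor:entriesOfMMT}, are that the three relevant matrices are \emph{diagonal}: $N=2I_{rd_x}$, the matrix $D$ has diagonal entries $D_{ij,ij}=2\sigma_{i_j}^2$, and $\Delta:=MM^T$ has diagonal entries $\Delta_{ij,ij}=4\sigma_{i_j}^2$ for $i\in\mathcal I$ and $\Delta_{ij,ij}=4\sigma_i^2$ for $i\notin\mathcal I$. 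Since $tI-N=(t-2)I_{rd_x}$ is invertible for every $t\neq 2$, the Schur complement identity gives, for all such $t$,
\[
\det(tI-H)=(t-2)^{rd_x}\cdot\det\!\Bigl(tI_{rd_y}-D-\tfrac{1}{t-2}\,\Delta\Bigr),
\]
and the matrix inside the second determinant is again diagonal, so its determinant is the product of its diagonal entries over the pairs $(i,j)$ with $i\in[d_y]$, $j\in[r]$ (recall $m=d_y$ under the standing assumption $d_y\le d_x$).

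Next I would evaluate each diagonal entry $t-2\sigma_{i_j}^2-\tfrac{\Delta_{ij,ij}}{t-2}$ after clearing the denominator $(t-2)$: a one–line computation yields $(t-2\sigma_{i_j}^2)(t-2)-4\sigma_{i_j}^2=t\bigl(t-2(\sigma_{i_j}^2+1)\bigr)$ when $i\in\mathcal I$, and $(t-2\sigma_{i_j}^2)(t-2)-4\sigma_i^2=t^2-2t(\sigma_{i_j}^2+1)+4(\sigma_{i_j}^2-\sigma_i^2)$ when $i\notin\mathcal I$. For fixed $j$ the first expression is independent of $i$, so the $r^2$ pairs with $i\in\mathcal I$ contribute $t^{r^2}\prod_{k\in\mathcal I}\bigl(t-2(\sigma_k^2+1)\bigr)^r\big/(t-2)^{r^2}$, using that $i_j$ runs over all of $\mathcal I$ as $j$ runs over $[r]$; likewise the $r(d_y-r)$ pairs with $i\notin\mathcal I$ contribute $\prod_{i\in[m]\setminus\mathcal I}\prod_{j\in\mathcal I}\bigl(t^2-2t(\sigma_j^2+1)+4(\sigma_j^2-\sigma_i^2)\bigr)\big/(t-2)^{r(d_y-r)}$.

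Finally I would multiply everything together: the total exponent of $(t-2)$ is $rd_x-r^2-r(d_y-r)=r(d_x-d_y)=r|d_x-d_y|$, which reproduces~\eqref{eq:charPoly} for all $t\neq 2$; since both sides are polynomials in $t$, the identity extends to $t=2$ by continuity. I expect no serious obstacle once the three blocks are known to be diagonal — the only points needing care are the combinatorial bookkeeping that rewrites the product over pairs $(i,j)$ as the products over $\mathcal I$ and $[m]\setminus\mathcal I$ appearing in the statement (in particular that $\{i_j:j\in[r]\}=\mathcal I$, so each distinct factor occurs with its correct multiplicity $r$, resp.\ once), and checking that the leftover powers of $(t-2)$ assemble to exactly $r|d_x-d_y|$, which is consistent with $H$ being of size $r(d_x+d_y)$.
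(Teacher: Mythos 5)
Your proposal is correct and follows essentially the same route as the paper's proof: a Schur complement with respect to the block $N=2I_{rd_x}$, reduction to a diagonal matrix via Corollary~\ref{cor:entriesOfMMT}, and the per-entry computation $(t-2)(t-2\sigma_{i_j}^2)-\Delta_{ij,ij}$ split according to $i\in\mathcal I$ or $i\notin\mathcal I$. Your bookkeeping of the $(t-2)$ powers and the reindexing $\{i_j: j\in[r]\}=\mathcal I$ match the paper's rearrangement exactly.
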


\begin{proof}
Using Schur complements, we can compute the characteristic polynomial of $H$ as follows:
\begin{align*}
    \chi_H(t) &= \det \left(t I_{r(d_x+d_y)} - H \right) \\
    &= \det \left( t I_{rd_x} - 2 I_{rd_x} \right)
    \det \left( (t I_{rd_y} - D) - M (t I_{rd_x} - 2 I_{rd_x})^{-1} M^T   \right) \\
    &= (t-2)^{rd_x} 
    \det \left( (t I_{rd_y} - D) - (t-2)^{-1} \Delta \right) \\
    &= (t-2)^{r(d_x-d_y)} 
    \det \left( (t-2)(t I_{rd_y} - D) - \Delta \right).
\end{align*}
By Corollary~\ref{cor:entriesOfMMT}, the matrix  $(t-2)(t I_{rd_y} - D) - \Delta $ is a diagonal matrix whose $ij$-th diagonal entry is $(t-2)(t-D_{ij,ij})-\Delta_{ij,ij}$.
We write shortly $\delta_{ij} := \Delta_{ij,ij}$ and use the identity $D_{ij,ij} = 2 \sigma_{i_j}^ 2$ to further derive
\begin{align*}
    \chi_H(t) &= (t-2)^{r(d_x-d_y)} 
    \prod_{i = 1}^{d_y} \prod_{j=1}^r \left( (t-2)(t-2 \sigma_{i_j}^ 2) - \delta_{ij} \right) \\
    &= (t-2)^{r(d_x-d_y)} 
    \prod_{i = 1}^{d_y} \prod_{j=1}^r \left( t^2 - 2t (\sigma_{i_j}^ 2 + 1) + (4\sigma_{i_j}^ 2-\delta_{ij}) \right) \\
    &= (t-2)^{r(d_x-d_y)} 
    \left( \prod_{i \in \mathcal{I}} \prod_{j=1}^r \left( t \left(t-2 (\sigma_{i_j}^ 2 + 1) \right)  \right)  \right) \\
    & \quad \cdot
    \left( \prod_{i \in  [d_y] \setminus \mathcal{I}} \prod_{j=1}^r \left( t^2 - 2t (\sigma_{i_j}^ 2 + 1) + 4(\sigma_{i_j}^2-\sigma_{i}^2) \right)  \right).
\end{align*}
The latter equality was derived by substituting specific values into the $\delta_{ij}$ according to Corollary~\ref{cor:entriesOfMMT}.
Rearranging the terms of this last expression of $\chi_H(t)$ yields~\eqref{eq:charPoly}.
\end{proof}

\begin{lemma}
\label{lem:NegRoots}
Let $x,y > 0$. The polynomial $f(t) = t^2 - 2t(x+1) + 4(x-y)$ has two real roots and at least one of them is positive. 
Moreover, $f(t)$ has a negative root if and only if $x < y$.
\end{lemma}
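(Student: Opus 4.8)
The plan is to treat $f(t) = t^2 - 2t(x+1) + 4(x-y)$ simply as a monic quadratic and to read off its root structure from the discriminant together with Vieta's formulas, so that the sign pattern of the roots is controlled entirely by the signs of the sum and the product of the roots.

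First I would compute the discriminant. Expanding,
\[
\tfrac14\,\mathrm{disc}(f) = (x+1)^2 - 4(x-y) = x^2 - 2x + 1 + 4y = (x-1)^2 + 4y,
\]
which is strictly positive because $y>0$. Hence $f$ has two distinct real roots $t_1 < t_2$, which already gives the first assertion (indeed a slightly stronger one, distinctness).

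Next I would use Vieta's formulas on the two roots. Since $t_1 + t_2 = 2(x+1) > 0$ by $x>0$, we must have $t_2 > 0$, so $f$ always possesses a positive root. It then remains to decide when $t_1 < 0$, and for this I would use $t_1 t_2 = 4(x-y)$ and split into three cases. If $x < y$, then $t_1 t_2 < 0$, and since $t_2 > 0$ this forces $t_1 < 0$, so $f$ has a negative root. If $x = y$, then $t_1 t_2 = 0$; combined with $t_1 < t_2$ and $t_1 + t_2 > 0$ this gives $t_1 = 0$ and $t_2 = 2(x+1)>0$, so there is no negative root. If $x > y$, then $t_1 t_2 > 0$, so both roots have the same sign, and since $t_2 > 0$ we conclude $t_1 > 0$, again with no negative root. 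Combining the three cases yields that $f$ has a negative root if and only if $x<y$.

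I do not expect a genuine obstacle here; the only subtlety worth flagging is the boundary case $x=y$, where one root is exactly $0$, which is neither positive nor negative and therefore must be excluded from the count of negative roots — this is precisely why the stated equivalence involves the strict inequality $x<y$ rather than $x\le y$.
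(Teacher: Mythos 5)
Your proposal is correct and follows essentially the same route as the paper: both compute the discriminant $(x-1)^2+4y>0$ to get two real roots and then determine the signs of the roots, the only difference being that you read off the signs via Vieta's formulas (sum $2(x+1)>0$, product $4(x-y)$) while the paper writes the roots explicitly as $x+1\pm\sqrt{(x-1)^2+4y}$ and reduces the negativity condition to $(x+1)^2<(x-1)^2+4y$, i.e.\ $x<y$. Your handling of the boundary case $x=y$ is a nice touch but does not constitute a different argument.
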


\begin{proof}
The roots of $f(t)$ are $x+1 \pm \sqrt{(x+1)^2 - 4(x-y)} = x+1 \pm \sqrt{(x-1)^2 + 4y}$.
So the discriminant is positive and $f(t)$ has two real roots.
Clearly, one of these is positive.
The other one is negative if and only if $x+1 < \sqrt{(x-1)^2 + 4y}$, which is equivalent to 
$(x+1)^2 < (x-1)^2+4y$ and thus to $x<y$.
\end{proof}

\begin{proof}[Proof of Theorem \ref{thm:EY2}]
It is left to count the number of negative roots of the univariate polynomial~\eqref{eq:charPoly}.
All the linear factors of~\eqref{eq:charPoly} have non-negative roots. 
The $ij$-th quadratic factor of~\eqref{eq:charPoly}, for $i \in  [d_y] \setminus \mathcal{I}$ and $j \in \mathcal{I}$, has at most one negative root due to Lemma~\ref{lem:NegRoots}.
Moreover, it has exactly one negative root if and only
if $\sigma_{j}^2 < \sigma_{i}^2$, which is equivalent to $j > i$.
Hence, the polynomial~\eqref{eq:charPoly} has exactly
$\# \lbrace (j,i) \in \mathcal{I} \times [d_y] \setminus \mathcal{I} \mid j > i \rbrace$
many negative roots.
\end{proof}

\EckartYoung*

\begin{proof}
This is an amalgamation of Corollary~\ref{cor:EY1} and Theorem~\ref{thm:EY2}.
\end{proof}

\end{document}